
\documentclass{article}

\usepackage{microtype}
\usepackage{graphicx}
\usepackage{subfigure}
\usepackage{booktabs} 
\usepackage{multirow}
\usepackage{hyperref}



\usepackage[accepted]{icml2025}

\usepackage{amsmath}
\usepackage{amssymb}
\usepackage{mathtools}
\usepackage{amsthm}
\usepackage[capitalize,noabbrev]{cleveref}

\theoremstyle{plain}
\newtheorem{theorem}{Theorem}[section]
\newtheorem{proposition}[theorem]{Proposition}

\theoremstyle{definition}
\newtheorem{definition}[theorem]{Definition}
\newtheorem{assumption}[theorem]{Assumption}
\theoremstyle{remark}
\newtheorem{remark}[theorem]{Remark}

\usepackage[textsize=tiny]{todonotes}

\icmltitlerunning{Directly Forecasting Belief for Reinforcement Learning with Delays}

\begin{document}

\twocolumn[
\icmltitle{Directly Forecasting Belief for Reinforcement Learning with Delays}




\icmlsetsymbol{equal}{*}

\begin{icmlauthorlist}
\icmlauthor{Qingyuan Wu}{equal,southampton}
\icmlauthor{Yuhui Wang}{equal,kaust}
\icmlauthor{Simon Sinong Zhan}{equal,northwest}\\
\icmlauthor{Yixuan Wang}{northwest}
\icmlauthor{Chung-Wei Lin}{ntu_tw}
\icmlauthor{Chen Lv}{ntu_sig}
\icmlauthor{Qi Zhu}{northwest}
\icmlauthor{Jürgen Schmidhuber}{kaust,idisa}
\icmlauthor{Chao Huang}{southampton}
\end{icmlauthorlist}

\icmlaffiliation{northwest}{Northwestern University}
\icmlaffiliation{kaust}{GenAI, King Abdullah University of Science and Technology}
\icmlaffiliation{idisa}{The Swiss AI Lab IDSIA/USI/SUPSI}
\icmlaffiliation{ntu_tw}{National Taiwan University}
\icmlaffiliation{ntu_sig}{Nanyang Technological University}
\icmlaffiliation{southampton}{University of Southampton}

\icmlcorrespondingauthor{Chao Huang}{chao.huang@soton.ac.uk}

\icmlkeywords{Machine Learning, ICML}

\vskip 0.3in
]



\printAffiliationsAndNotice{\icmlEqualContribution} 

\begin{abstract}
Reinforcement learning (RL) with delays is challenging as sensory perceptions lag behind the actual events: the RL agent needs to estimate the real state of its environment based on past observations. State-of-the-art (SOTA) methods typically employ recursive, step-by-step forecasting of states. This can cause the accumulation of compounding errors. To tackle this problem, our novel belief estimation method, named Directly Forecasting Belief Transformer (DFBT), directly forecasts states from observations without incrementally estimating intermediate states step-by-step. We theoretically demonstrate that DFBT greatly reduces compounding errors of existing recursively forecasting methods, yielding stronger performance guarantees. In experiments with D4RL offline datasets, DFBT reduces compounding errors with remarkable prediction accuracy. DFBT's capability to forecast state sequences also facilitates multi-step bootstrapping, thus greatly improving learning efficiency. On the MuJoCo benchmark, our DFBT-based method substantially outperforms SOTA baselines. Code is available at \href{https://github.com/QingyuanWuNothing/DFBT}{https://github.com/QingyuanWuNothing/DFBT}.
\end{abstract}

\section{Introduction}
Reinforcement learning (RL) has achieved impressive success in various scenarios, including board games~\citep{silver2016mastering, schrittwieser2020mastering}, video games~\citep{mnih2013playing, berner2019dota} and cyber-physical systems~\citep{wei2017deep,wang2023joint,wang2023enforcing,zhan2024state}.
The timing factor is critical to enable RL in real-world applications, particularly the delays that occur in the interaction between the agent and the environment due to physical distance, computational processing, and signal transmission.
Otherwise, delays fundamentally affect the system's safety~\citep{arm_delay}, performance~\citep{cao2020using} and efficiency~\citep{quadrotor_delay}. 
Unlike rewards-delays-arising credit assignment issues that have been well studied~\citep{arjona2019rudder, wang2024highway}, observation-delays and action-delays disrupt the Markovian property of the Markov Decision Process (MDP), posing more challenges in RL.
Most of the literature~\citep{kim2023belief, wang2023addressing, wu2024boosting} mainly focuses on observation-delays, which has been proved as a superset of action-delays~\citep{delay_mdp, revisiting_augment}.
Therefore, this work mainly focuses on the RL with delays $\Delta$ in observation: at time step $t$, the agent can not observe the real environment state $s_{t}$ but only the history state $s_{t-\Delta}$ $\Delta$ steps ago.

To enable RL in environments with delayed observations, it is essential to restore the Markovian property~\citep{altman_delay, delay_mdp}.
Augmentation-based methods~\citep{dcac, kim2023belief} are based on an observation that the information state $x_t = \{s_{t-\Delta}, a_{t-\Delta:t-1}\}$, augmented from the last observable state and the sequential actions, carries the equivalent information with $s_t$~\citep{bertsekas2012dynamic}. Thus, the policy learning over the original state space $\mathcal{S}$ can be transferred to the policy learning over the augmented state space $\mathcal{X}$, which is memoryless and can be addressed by nominal RL.
However, as delays $\Delta$ increase, the dimensionality of the augmented state space $\mathcal{X}$ expands significantly,
leading to a drastic deterioration in learning efficiency due to the exponentially increased sample complexity~\citep{wu2024boosting}.

To improve learning efficiency in augmentation-based methods, belief-based methods~\citep{learning_planning_delayed_feedback, chen2021delay} propose to conduct RL in the original state space $\mathcal{S}$, by estimating the instant unobservable state $s_t$ from the information state $x_t$, of which the mapping is referred to as belief. The belief function is commonly forecasted recursively~\citep{chen2021delay, karamzade2024reinforcement}: for $i=1,\ldots,\Delta$, $s_{t-\Delta+i}$ is estimated by applying approximate dynamic function with the previous state $s_{t-\Delta+i-1}$ and the previous action $a_{t-\Delta+i-1}$.
Obtaining the estimation of the instant state $s_t$, the delayed RL problem is effectively reduced to a delay-free RL problem without enlarging the state space, mitigating the curse of dimensionality.
However, this recursive process is evidently affected by the error accumulation of the approximate dynamic function across $\Delta$ steps: the compounding errors grow exponentially with the delays $\Delta$. 
This fundamental limitation of such recursive methodology for belief forecasting leads to significant performance degradation, especially in environments with long-delayed signals.

This work aims to squarely address the compounding errors in existing belief-based techniques with recursive strategy, using a direct strategy in sequence modeling. 
Specifically, we present the Directly Forecasting Belief Transformer (DFBT), a novel directly forecasting belief method by reformulating belief forecasting as a sequence modeling problem.
DFBT first represents the information state $x_t$ and reward signals $r_{t-\Delta:t-1}$ to $\Delta$ tokens in the form $\{s_{t-\Delta}, a_{t-\Delta+i}, r_{t-\Delta+i}\}_{i=0}^{\Delta-1}$.
\textbf{
Unlike recursively forecasting belief methods that invoke the forward prediction process $\Delta$ times, DFBT simultaneously forecasts the states $s_{t-\Delta+1:t}$ without introducing accumulated errors, effectively addressing the issue of compounding errors.
}
We then integrate DFBT with the Soft Actor-Critic (SAC) method~\citep{soft_actor_critic_application} in online learning, incorporating multi-step bootstrapping on accurate state predictions generated from DFBT, which further improves learning efficiency.
Theoretically, we demonstrate how the compounding errors in recursively forecasting belief affect the RL performance and how it is alleviated by our direct strategy.
Empirically, using the D4RL benchmark~\citep{fu2020d4rl}, we show that DFBT achieves significantly higher prediction accuracy compared to other belief methods.
On the MuJoCo benchmark~\citep{mujoco}, across various delays settings, we demonstrate that our DFBT-SAC consistently outperforms SOTA augmentation-based and belief-based methods in both learning efficiency and overall performance.

In \Cref{sec:preliminaries}, we introduce the delayed RL problem and the concept of belief representation. \Cref{sec:approach} presents our proposed DFBT, which directly forecasts states from delayed observations, avoiding the need for recursive single-step predictions. Additionally, we develop DFBT-SAC by leveraging multi-step bootstrapping on the states predicted by DFBT. Through theoretical analysis in \Cref{sec:theoretical_analysis}, we show that DFBT effectively mitigates the compounding errors associated with recursively forecasting belief methods, ensuring superior performance. Finally, in \Cref{sec:experiments}, we empirically demonstrate the superior prediction accuracy of DFBT on the D4RL datasets and the remarkable efficacy of DFBT-SAC compared to state-of-the-art approaches on MuJoCo across various delays settings.
Overall, our key contributions are summarized as follows:
\begin{itemize}
    \item We present Directly Forecasting Belief Transformer (DFBT), a novel directly forecasting belief method that effectively addresses compounding errors in recursively generated belief. 
    \item We propose DFBT-SAC, a novel delayed RL method that further improves the learning efficiency via multi-step bootstrapping on the DFBT.
    \item We theoretically demonstrate that our directly forecasting belief significantly reduces compounding errors compared to existing recursively forecasting belief approaches, offering a more robust performance guarantee.
    \item We empirically demonstrate that our DFBT method effectively forecasts state sequences with significantly higher prediction accuracy compared to baseline approaches.
    \item We empirically show that our DFBT-SAC outperforms SOTAs in terms of sample efficiency and performance on the MuJoCo benchmark.
\end{itemize}

\section{Related Works}
In classical control theory, delays are modelled with delay–differential equations (DDEs) \citep{cooke1963differential}. A rich toolbox is available for analysing their reachability \citep{fridman2003reachable,xue2020over}, stability \citep{feng2019taming}, and safety \citep{xue2021reach}. These techniques, however, assume fully known dynamics and become impractical for high-dimensional systems.
Delayed reinforcement learning (RL) is far closer to real deployments than the traditional delay approach and the delay-free RL setting that dominates the literature. Latencies are intrinsic to robotics \citep{arm_delay,quadrotor_delay}, high-frequency trading \citep{hasbrouck2013low}, intelligent transportation \citep{cao2020using}, and many other domains. 
Depending on the methodology of retrieving the Markovian property, existing delayed RL approaches can be mainly categorized as augmentation-based and belief-based approaches.

\paragraph{Augmentation-based Approaches.}
Augmentation-based approaches retrieve the Markovian property by augmenting the original state with the sequence of actions within the delays window, optimizing the policy in the resulting augmented MDP to mitigate performance degradation caused by errors in belief representation approximation.
Specifically, DIDA~\citep{dida} learns the delayed policy in the augmented MDP by imitating the behaviours from the delay-free expert policy in the original MDP;
DC/AC~\citep{dcac} suggest using the delays correction operator to improve the learning efficiency in the augmented MDP;
ADRL~\citep{wu2024boosting} introduces the auxiliary delayed task with the smaller augmented MDP for bootstrapping the larger augmented MDP;
VDPO~\citep{wu2024variational} proposes to learn the delayed policy through the lens of variational inference, improving the learning efficiency via extensive optimization tools.
However, augmentation-based approaches still operate on an increasingly large augmented state space as delays grow, inevitably suffering from the curse of dimensionality and resulting in significant learning inefficiencies.

\paragraph{Belief-based Approaches.}
The belief-based approach retrieves the Markovian property by forecasting the unobservable state through belief representation, then optimizing policy within the original state space.
Thus, correspondingly, the prediction accuracy of approximated belief representation is highly related to the overall performance.
DATS~\citep{chen2021delay} is the first to propose approximating the belief representation using a Gaussian distribution. Inspired by the world models~\citep{schmidhuber1990making, schmidhuber1990line, ha2018world}, 
D-Dreamer~\citep{karamzade2024reinforcement} employ a recurrent neural network to capture temporal dependencies in the belief through forward prediction in the latent space, enabling adaptation to high-dimensional observations.
D-SAC~\citep{learning_belief} utilizes the attention mechanism in the causal transformer to integrate the information across the delays.
Unfortunately, the approximation errors of recursively forecasting belief accumulated at each step finally lead to compounding errors, which hinder prediction accuracy and performance.

\paragraph{Time Series Forecasting in RL.}
Time series forecasting plays a critical role in predicting future states in RL, particularly within the context of world model learning~\citep{ha2018world}, which simulates the dynamics of the environment, predicting future states and planning effectively without directly interacting with the real environment. 
World model learning often leverages time series forecasting techniques to capture temporal dependencies, enhancing the agent’s understanding of the environment. Applications of world models include robotics, autonomous driving, and gaming scenarios~\citep{hafner2019dream}.
Additionally, the RL problem can be treated as a sequence modeling problem, which can be effectively solved by the transformer~\citep{transformer}. Directly predicting the action from the desired outcomes~\citep{schmidhuber2019reinforcement}, decision transformer~\citep{chen2021decision} predict future actions based on historical trajectories and desired outcomes. Similarly, trajectory transformer~\citep{janner2021offline} models trajectories as sequences, enabling policy optimization by leveraging autoregressive modeling.
Therefore, in this paper, we investigate how to leverage the advanced time series forecasting approaches in addressing the issue of compounding errors existing in the recursively forecasting belief.
We also present a comprehensive theoretical analysis of how compounding errors seriously degenerate performance as delays are increased.

\section{Preliminaries}
\label{sec:preliminaries}
\subsection{From Delay-free RL to Delayed RL}
A conventional delay-free RL problem is usually formalized as a Markov Decision Process (MDP) represented as a tuple $\langle \mathcal{S}, \mathcal{A}, \mathcal{P}, \mathcal{R}, \rho, \gamma \rangle$, where $\mathcal{S}$ is the state space, $\mathcal{A}$ is the action space, $\mathcal{P}: \mathcal{S} \times \mathcal{A} \times \mathcal{S} \rightarrow \left[0, 1\right]$ is the dynamic function, $\mathcal{R}: \mathcal{S} \times \mathcal{A} \rightarrow \mathbb{R}$ is the reward function, $\rho$ is the initial state distribution, and $\gamma \in (0, 1)$ is the discount factor.
The agent selects an action $a_t\sim \pi(\cdot|s_t)$ according to the policy $\pi: \mathcal{S} \times \mathcal{A} \rightarrow \left[0, 1\right]$ based on the current state $s_t$ at time step $t$.
The agent will then observe the next state $s_{t+1} \sim \mathcal{P}(\cdot|s_t, a_t)$ and the reward $r_t = \mathcal{R}(s_t, a_t)$ from the environment.
The objective of the agent is to find the optimal policy $\pi^*$ that maximizes the expected discounted return $G:=\mathbb{E}_{\tau \sim p_\pi(\tau)}\left[\sum_{t=0}^\infty \gamma^t \mathcal{R}(s_t, a_t)\right]$ where $p_\pi(\tau)$ represents the distribution of trajectories induced by policy $\pi$.

A delayed RL problem can be formalized as an augmented MDP by applying the augmentation technique~\citep{altman_delay, delay_mdp} to retrieve the Markovian property.
For a delayed RL problem with constant delays $\Delta$, the newly formed MDP is represented as $\langle \mathcal{X}, \mathcal{A}, \mathcal{P}_\Delta, \mathcal{R}_\Delta, \rho_\Delta, \gamma \rangle$, where $\mathcal{X}:= \mathcal{S} \times \mathcal{A}^\Delta$ is the augmented state space, $\mathcal{A}$ is the original action space, the delayed dynamic function is
$\mathcal{P}_\Delta(x_{t+1}|x_t, a_t) 
:=
\mathcal{P}(s_{t-\Delta+1}|s_{t-\Delta}, a_{t-\Delta})\delta_{a_t}(a'_t)\prod_{i=1}^{\Delta-1}\delta_{a_{t-i}}(a'_{t-i})
$ where $\delta$ is the Dirac distribution, the delayed reward function is defined as $\mathcal{R}_\Delta(x_t, a_t):= \mathop{\mathbb{E}}_{s_t\sim b(\cdot|x_t)}\left[\mathcal{R}(s_t, a_t)\right]$ where $b:\mathcal{X} \times \mathcal{S} \rightarrow \left[0, 1\right]$ is the belief representation mapping from the augmented state space $\mathcal{X}$ to the original state space $\mathcal{S}$, $\rho_\Delta =\rho\prod_{i=1}^{\Delta}\delta_{a_{-i}}$ is the initial augmented state distribution.

\subsection{Belief Representation in Delayed RL} 
Delayed RL can be viewed as a special form of a partially observable RL problem where the observation is contaminated by noise, instead of being delayed.
Therefore, similar to partially observable RL, delayed RL also have the belief representation defined as follows:
\begin{equation}
    \begin{aligned}
        &b(s_t|x_t):=\\
        &\int_{\mathcal{S}^\Delta}\prod_{i=0}^{\Delta-1}\mathcal{P}(s_{t-\Delta+i+1}|s_{t-\Delta+i}, a_{t-\Delta+i})\mathrm{d}{s_{t-\Delta+i+1}}.\\
    \end{aligned}
\end{equation}
The belief representation can retrieve the Markovian property via mapping the augmented state space $\mathcal{X}$ to $\mathcal{S}$, recasting the delayed RL problem in the original MDP without augmenting the state space.
The belief representation can be viewed as the recursive forward prediction of the dynamics $\mathcal{P}$.
With the belief representation, the agent can directly learn in the original state space $\mathcal{S}$.
In this work, we use $s$ and $\hat{s}$ to represent the true state of the environment and the predicted state of the approximate belief, respectively.

\begin{figure*}[t]
    \centering
    \includegraphics[width=1.\linewidth]{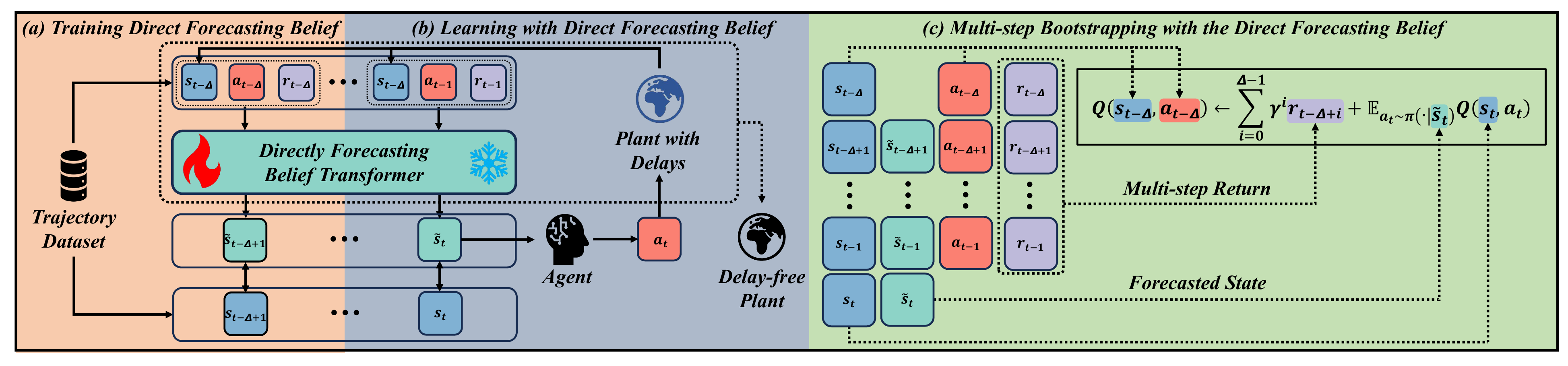}
    \caption{Pipeline of DFBT-SAC. (a) Training DFBT on the trajectory dataset. (b) The agent can interact and learn with the delay-free environment recovered by the DFBT as highlighted in the dashline box. (c) Multi-step bootstrapping on the forecasted states from DFBT.\label{fig:dbt}}
\end{figure*}

\section{Our Approach}
\label{sec:approach}

In this section, we present the Directly Forecasting Belief Transformer (DFBT), a new directly forecasting belief method. 
By framing belief forecasting as a sequence modeling problem, DFBT directly predicts the unobservable states.
Instead of recursively predicting the next state step-by-step, DFBT can effectively capture the dependency across the delays via the attention mechanism of the transformer model.
Therefore, our DFBT can effectively reduce the compounding errors of the recursively forecasting belief, especially in the long delays.
Specifically, as summarized in \Cref{fig:dbt}, we train the DFBT using a pre-collected trajectory dataset, latter deploying the trained DFBT in the environment with delays to reconstruct the delay-free environment for training.
Furthermore, we integrate multi-step bootstrapping with the predicted states of DFBT to improve learning efficiency.

\subsection{Directly Forecasting Belief}
The belief representation learning problem can be viewed as a sequence modeling problem.
Given a pre-collected delay-free trajectory $\{s_{i}, a_{i}, r_{i}\}_{i=0}^{T}$, we sample the sub-trajectory with $\Delta$ timesteps $\{s_{t-\Delta+i}, a_{t-\Delta+i}, r_{t-\Delta+i}\}_{i=0}^{\Delta}$.
We model the past reward signals $r_{t-\Delta:t-1}$ into the augmented state $x_t = \{s_{t-\Delta}, a_{t-\Delta:t-1}\}$ for considering more dynamic information in belief. 
Then, we have reformed the representation of the augmented state to $\Delta$ tokens for sequence modeling: $x^\text{tokens}_t = \{s_{t-\Delta}, a_{t-\Delta+i}, r_{t-\Delta+i}\}_{i=0}^{\Delta - 1}$.
The Directly Forecasting Belief Transformer (DFBT) $b_\theta$ leverages the transformer architecture~\citep{transformer}, utilizing the attention mechanism to effectively capture dependencies across long delays.
Inputting with $x^\text{tokens}_t$, DFBT simultaneously predicts the unobserved $\Delta$ states $\{s_{t-\Delta+i}\}_{i=1}^{\Delta}$ via autoregressive modeling with loss:
\begin{equation}
\label{eq:dbt_loss}
\bigtriangledown_{\theta} 
\left[
\mathop{\sum}_{i=1}^\Delta
\left[
- \log b^{(i)}_\theta(s_{t-\Delta+i}|x^\text{tokens}_t)
\right]
\right],
\end{equation}
where $b^{(i)}_\theta(\cdot|{x_t})$ represents the $i$-th prediction.
In a deterministic environment with a deterministic belief function, we typically replace \Cref{eq:dbt_loss} with the Mean-Square-Error (MSE) loss.

\subsection{Multi-step Bootstrapping with DFBT}
\label{sec:multi_step_bootstrap}
Next, we directly deploy the trained DFBT $b_{\theta}$ in the online environment with delayed signals to reconstruct the delay-free environment where the agent can directly learn with the original state space $\mathcal{S}$ instead of the augmented state space $\mathcal{X}$, thus maintaining the superior sample complexity in online learning.

Then, we present our practical delayed RL method, named DFBT-SAC by incorporating the trained DFBT $b_\theta$ with Soft Actor-Critic~\citep{soft_actor_critic_application}.
To improve the learning efficiency with the DFBT, the critic of DFBT-SAC is bootstrapped on the states predicted by the DFBT with the multi-step learning~\citep{rlai, hessel2018rainbow} and delay-free training techniques~\citep{wu2024boosting, kim2023belief}.
Specifically, given multi-step data $(x^\text{tokens}_t, s_{t-\Delta+1:t})$, the critic $Q_{\psi}$ parameterized by $\psi$ is updated via: 
\begin{equation}
\label{eq:dbt_critic_loss}
    \bigtriangledown_{\psi}
    \left[
        \frac{1}{2}
        \left(
            Q_{\psi}(s_{t-\Delta}, a_{t-\Delta})
            -
            \mathbb{Y}
        \right)^2
    \right],
\end{equation}
where $N$-step temporal difference target $\mathbb{Y}$ is defined as:
$$
\begin{aligned}
&\mathbb{Y} := \mathop{\sum}_{i=0}^{N-1}\gamma^i r_{t-\Delta+i}\\
&+\gamma^N 
\mathop{\mathbb{E}}_{a \sim \pi(\cdot|\hat{s}_{t-\Delta + N})\atop
\hat{s}_{t-\Delta+N}\sim b^{(N)}_\theta(\cdot|x^\text{tokens}_t)
}\left[
Q(s_{t-\Delta+N}, a) + \log\pi(a|\hat{s}_{t-\Delta + N})
\right],\\
\end{aligned}
$$
and $N(\leq \Delta)$ is the bootstrapping steps on the DBFT.
In this work, we set $N=8$ as default, and we also conduct the ablation study (\Cref{sec:experiments}) to investigate bootstrapping steps settings.
The actor $\pi_\phi$ parameterized by $\phi$ is updated by:
\begin{equation}
\label{eq:dbt_actor_loss}
    \bigtriangledown_{\phi}
    \mathop{\mathbb{E}}_{
    a \sim \pi(\cdot|\hat{s}_{t-\Delta + N})\atop
    \hat{s}_{t-\Delta+N}\sim b^{(N)}_\theta(\cdot|x^\text{tokens}_t)
    }
    \left[
    \log\pi_\phi(a|\hat{s}_{t-\Delta+N}) - Q_\psi({s}_{t-\Delta+N}, a)
    \right].
\end{equation}
The pseudo-code of DFBT-SAC is provided in \Cref{alg::dbt_sac}.

\begin{algorithm}[t]
   \caption{DFBT-SAC}
   \label{alg::dbt_sac}
   \begin{algorithmic}
        \STATE {\bfseries Input:} offline dataset $\mathcal{D}$, DFBT $b_\theta$, critic $Q_\psi$, actor $\pi_\phi$;
        \STATE \textcolor{black!30}{$\#$ training DFBT on the offline dataset}
        \FOR{Epoch = $1, \ldots$}
            \STATE Update $b_\theta$ on the $\mathcal{D}$ via \Cref{eq:dbt_loss}
        \ENDFOR
        \STATE \textcolor{black!30}{$\#$ learning with DFBT on the online environment}
        \FOR{Epoch = $1, \ldots$}
            \STATE Update $Q_\psi$ on via \Cref{eq:dbt_critic_loss}
            \STATE Update $\pi_\phi$ on via \Cref{eq:dbt_actor_loss}
        \ENDFOR
        \STATE {\bfseries Output:} $b_\theta$ and $\pi_\phi$
    \end{algorithmic}
\end{algorithm}

\section{Theoretical Analysis}
\label{sec:theoretical_analysis}
In this section, we theoretically illustrate that performance degeneration is rooted in compounding errors of recursively forecasting belief (\Cref{sec:single_step_belief}), which can be effectively addressed by our DFBT, directly forecasting belief, thus achieving a better performance guarantee (\Cref{sec:sequential_belief}).

Before starting the theoretical analysis, we introduce the definition of the performance degeneration of the ground-truth belief $b$ as follows.
\begin{definition}[Performance Degeneration of Ground-truth Belief~\citep{dida}]
    \label{lemma:performance_diff_true_belief}
    For policies $\pi$ and $\pi_\Delta$ with delays $\Delta$. Given any $x_t\in\mathcal{X}$, the performance difference of the ground-truth belief $b$ is denoted as $I^\text{true}_{\Delta}(x_t)$
    $$
    \begin{aligned}
        I^\text{true}_{\Delta}(x_t)
        &= \frac{1}{1-\gamma} \mathop{\mathbb{E}}_{\substack{
        {\hat{s} \sim b(\cdot|\hat{x})}\\
        {\hat{a} \sim\pi_\Delta(\cdot|\hat{x})}\\
        {\hat{x} \sim d_{x_t}^{\pi}}
        }}
        \left[{V^{\pi}(\hat{s})} - {Q^{\pi}}({\hat{s}}, \hat{a})\right],\\
    \end{aligned}
    $$
    where $d_{x_t}^{\pi}$ is the augmented state distribution induced by policy $\pi$ with the initial state $x_t$.
\end{definition}

Furthermore, we also introduce the Lipschitz Continuity of MDP (\Cref{lemma:lc_mdp}) and value function (\Cref{lemma:lc_v}), which are common and mild assumptions in the literature.
\begin{definition}[Lipschitz Continuity of MDP~\citep{rl_lipschitz_continuous}]
    \label{lemma:lc_mdp}
    An MDP is $L_\mathcal{P}$-LC, if $\forall (s_1, a_1), (s_2, a_2) \in \mathcal{S}\times\mathcal{A}$, dynamic function $\mathcal{P}$ satisfies:
    $$
    \mathcal{W}\left(\mathcal{P}(\cdot|s_1, a_1)||\mathcal{P}(\cdot|s_2, a_2)\right) \leq L_\mathcal{P}(d_\mathcal{S}(s_1, s_2) + d_\mathcal{A}(a_1, a_2)),
    $$
    where $\mathcal{W}$ is the L1-Wasserstein distance, $d_\mathcal{S}$ and $d_\mathcal{A}$ are distrance measure of the $\mathcal{S}$ and $\mathcal{A}$, repectively.
\end{definition}
\begin{definition}[Lipschitz Continuity of Value Function~\citep{rl_lipschitz_continuous}]
    \label{lemma:lc_v}
    Consider a $L_Q$-LC Q-function $Q^\pi$ of the $L_\pi$-LC policy $\pi$, value function $V^\pi$ satisfies that
    $$
        \left|
        \mathop{\mathbb{E}}_{
            s_1 \sim \mu \atop
            s_2 \sim \upsilon
        }\left[
        V^\pi(s_1) - V^\pi(s_2)
        \right]
        \right|
        \leq
        L_V \mathcal{W}\left(\mu||\upsilon\right),
    $$
    where $L_V = L_Q(1+L_\pi)$ and $\mu, \upsilon$ are two arbitrary distributions over $\mathcal{S}$.
\end{definition}

\subsection{Recursively Forecasting Belief: Compounding Errors Analysis}
\label{sec:single_step_belief}
In this section, we show that the performance degeneration of recursively forecasting belief is influenced by compounding errors, which consist of the recursive error and are exponentially increased with delays.
We assume that the recursively forecasting belief $\mathcal{P}_\theta$ has the approximated error bound as shown in \Cref{assumption:performance_diff_true_belief}.
\begin{assumption}[Approximated Dynamic Difference Bound]
    \label{assumption:performance_diff_true_belief}
    The distance between the approximated dynamic function $\mathcal{P}_\theta$ parameterized by $\theta$ and the ground-truth dynamic function $\mathcal{P}$ is bounded, it satisfies that $\forall (s, a) \in \mathcal{S}\times \mathcal{A}$, 
    $$
    \mathcal{W}(\mathcal{P}_\theta(\cdot|{s, a}) || \mathcal{P}(\cdot|{s, a})) \leq \epsilon_\mathcal{P}.
    $$
\end{assumption}

Then, we demonstrate that the performance difference of the recursively forecasting belief is determined by the compounding errors (\Cref{theo_delayed_performance_diff_single}).

\begin{theorem}[Performance Difference of Recursively Forecasting Belief, Proof in \Cref{appendix:theo_delayed_performance_diff_single}]
    \label{theo_delayed_performance_diff_single}
    For the delay-free policy $\pi$ and the delayed policy $\pi_\Delta$. Given any $x_t \in \mathcal{X}$, the performance difference $I^\text{recursive}(x_t)$ of the recursively forecasting belief $b_\theta$ can be bounded as follows, respectively.

    For deterministic delays $\Delta$, we have
    $$
    \left|I^\text{recursive}(x_t)\right| 
    \leq 
    \left|I^\text{true}_{\Delta}(x_t)\right| 
    + L_V \underbrace{\frac{1 - {L_\mathcal{P}}^{\Delta}}{1 - L_\mathcal{P}} \epsilon_\mathcal{P}}_{\text{compounding errors}}.
    $$
    And for stochastic delays $\delta \sim d_\Delta(\cdot)$, we have
    $$
    \begin{aligned}
        \left|I^\text{recursive}(x_t)\right| 
        &\leq \mathop{\mathbb{E}}_{\delta \sim d_\Delta(\cdot)}\left[
        \left|I^\text{true}_{\delta}(x_t)\right| + L_V \underbrace{\frac{1 - {L_\mathcal{P}}^{\delta}}{1 - L_\mathcal{P}} \epsilon_\mathcal{P}}_{\text{compounding errors}}
        \right]. \\
    \end{aligned}
    $$
\end{theorem}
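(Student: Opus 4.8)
The plan is to reduce the whole estimate to a single scalar quantity, the Wasserstein distance between the recursively forecasted belief $b_\theta(\cdot\mid x_t)$ and the ground-truth belief $b(\cdot\mid x_t)$, and to control how this distance accumulates across the $\Delta$ recursion steps. I would start from the triangle inequality
$$|I^\text{recursive}(x_t)| \le |I^\text{true}_\Delta(x_t)| + |I^\text{recursive}(x_t) - I^\text{true}_\Delta(x_t)|,$$
so that it only remains to bound the discrepancy term. Writing out $I^\text{recursive}(x_t) - I^\text{true}_\Delta(x_t)$, the two performance differences share the same augmented-state visitation $d^\pi_{x_t}$ and the same delay-free value functions and differ only in whether the unobserved state $\hat{s}$ is drawn from $b_\theta(\cdot\mid\hat{x})$ or $b(\cdot\mid\hat{x})$; after cancellation this isolates an expected value-function gap under the two beliefs, to which the Lipschitz value property of \Cref{lemma:lc_v} applies with constant $L_V$. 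Thus the two ingredients I need are (i) a compounding bound on $\mathcal{W}\big(b_\theta(\cdot\mid x_t)\,\|\,b(\cdot\mid x_t)\big)$ and (ii) the conversion of that Wasserstein gap into a value gap carrying the factor $L_V$.

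The heart of the argument, and the step I expect to be the main obstacle, is the inductive compounding bound. I would express both beliefs as $\Delta$-fold forward pushforwards starting from the common observed state $s_{t-\Delta}$ under the \emph{shared} action sequence $a_{t-\Delta:t-1}$, the only difference being that the ground-truth belief iterates $\mathcal{P}$ while the recursive belief iterates $\mathcal{P}_\theta$. Let $e_i$ denote the Wasserstein distance between the two $i$-step beliefs $b^{(i)}_\theta$ and $b^{(i)}$, so $e_0 = 0$. I would then insert the intermediate measure obtained by applying the true kernel $\mathcal{P}$ to the approximate $i$-step belief $b^{(i)}_\theta$, and split $e_{i+1}$ into two pieces by the triangle inequality. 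The first piece fixes the input distribution and varies only the kernel; by convexity of $\mathcal{W}$ under mixtures together with \Cref{assumption:performance_diff_true_belief} it is bounded by $\epsilon_\mathcal{P}$. The second piece fixes the kernel ($\mathcal{P}$) and varies only the input distribution; since the action is held fixed, \Cref{lemma:lc_mdp} gives $d_\mathcal{A}=0$, and a standard coupling argument (couple the inputs optimally, then couple the resulting transition kernels) shows the pushforward contracts the input gap by at most $L_\mathcal{P}$, so this piece is at most $L_\mathcal{P}\,e_i$. This yields the linear recursion $e_{i+1} \le \epsilon_\mathcal{P} + L_\mathcal{P}\,e_i$, whose solution from $e_0 = 0$ is the geometric sum $e_\Delta \le \tfrac{1 - L_\mathcal{P}^\Delta}{1 - L_\mathcal{P}}\,\epsilon_\mathcal{P}$, precisely the claimed compounding-error term. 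The delicate point is keeping the two perturbations cleanly separated, so that the per-step approximation error $\epsilon_\mathcal{P}$ enters additively while the inherited error is amplified only multiplicatively by the Lipschitz contraction $L_\mathcal{P}$.

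With the belief bound in hand I would close the deterministic case by applying \Cref{lemma:lc_v} with $\mu = b_\theta(\cdot\mid\hat{x})$ and $\upsilon = b(\cdot\mid\hat{x})$ inside the expectation over $\hat{x}\sim d^\pi_{x_t}$: the value terms differ by at most $L_V\,\mathcal{W}\big(b_\theta(\cdot\mid\hat{x})\,\|\,b(\cdot\mid\hat{x})\big) \le L_V\,e_\Delta$, which gives $|I^\text{recursive}(x_t) - I^\text{true}_\Delta(x_t)| \le L_V\tfrac{1 - L_\mathcal{P}^\Delta}{1 - L_\mathcal{P}}\epsilon_\mathcal{P}$ and hence the first inequality. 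Finally, for stochastic delays $\delta \sim d_\Delta(\cdot)$ I would run the identical deterministic argument conditionally on each realized delay $\delta$ and then take the expectation, using $|\mathbb{E}[\,\cdot\,]| \le \mathbb{E}[|\cdot|]$ to move the absolute value inside; this produces the stated averaged bound with $\Delta$ replaced by $\delta$ throughout the expectation.
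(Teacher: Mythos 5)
Your proposal follows essentially the same route as the paper's own proof: decompose $\left|I^\text{recursive}(x_t)\right|$ into $\left|I^\text{true}_{\Delta}(x_t)\right|$ plus a value gap controlled by $L_V\,\mathcal{W}\bigl(b_\theta(\cdot|x_t)\,\|\,b(\cdot|x_t)\bigr)$ via \Cref{lemma:lc_v}, and bound that Wasserstein distance by exactly the same recursion --- inserting the intermediate measure obtained by pushing the approximate belief through the true kernel $\mathcal{P}$, splitting via the triangle inequality into an $\epsilon_\mathcal{P}$ term (\Cref{assumption:performance_diff_true_belief}) and an $L_\mathcal{P}$-contracted inherited term (\Cref{lemma:lc_mdp}), and unrolling to the geometric sum $\frac{1-L_\mathcal{P}^\Delta}{1-L_\mathcal{P}}\epsilon_\mathcal{P}$ --- before handling stochastic delays by conditioning on the realized delay and taking expectations. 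The argument is correct and matches the paper's proof (which follows the same sketch, credited to \citet{asadi2018lipschitz}) in all essential steps.
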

\Cref{theo_delayed_performance_diff_single} tells that the compounding errors are exponentially increased with the delays, seriously degenerating the performance.

\begin{figure*}[t]
    \centering
    \centerline{
        \subfigure[HalfCheetah-v2]{\includegraphics[width=0.33\linewidth]{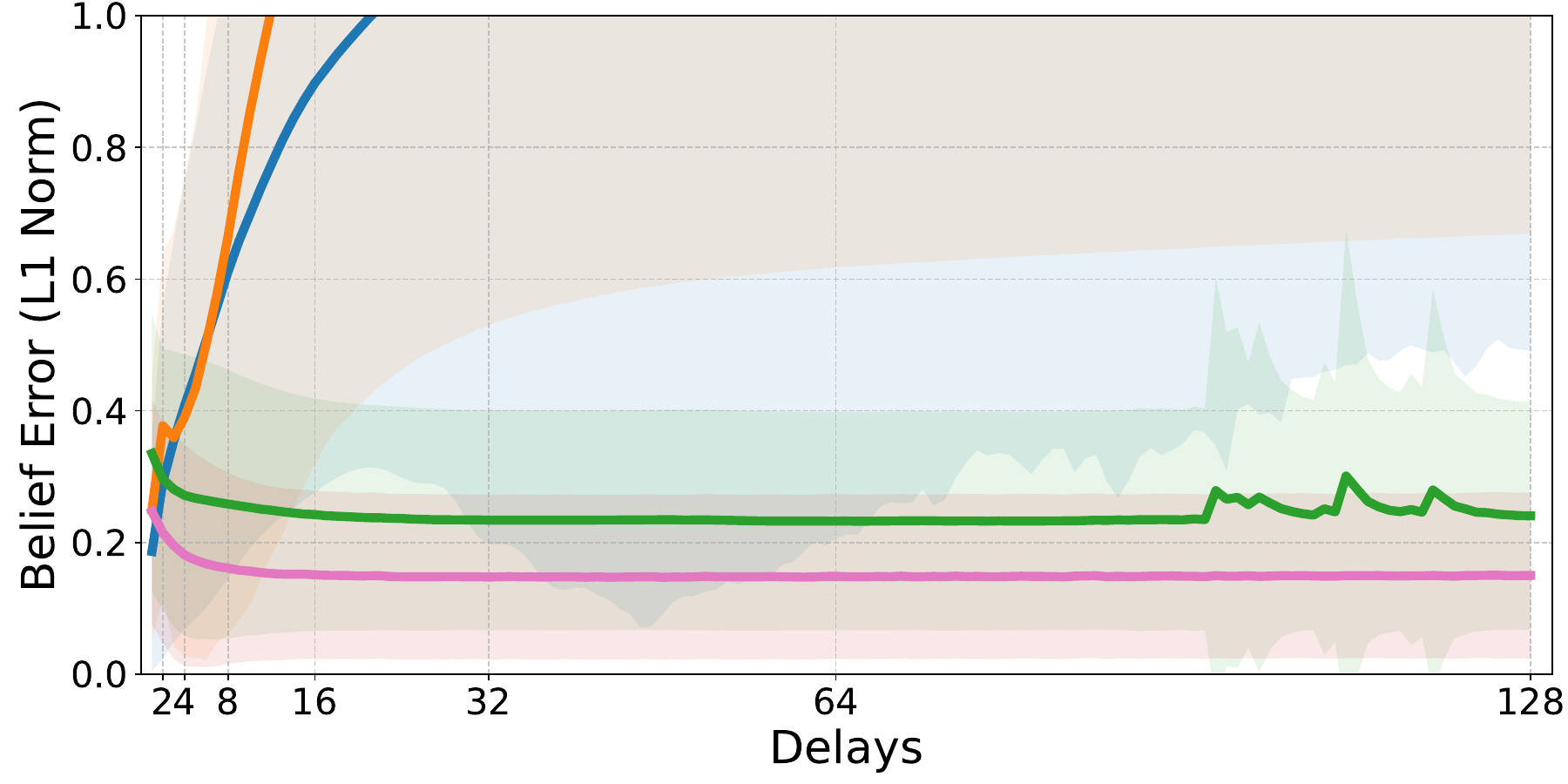}}
        \subfigure[Hopper-v2]{\includegraphics[width=0.33\linewidth]{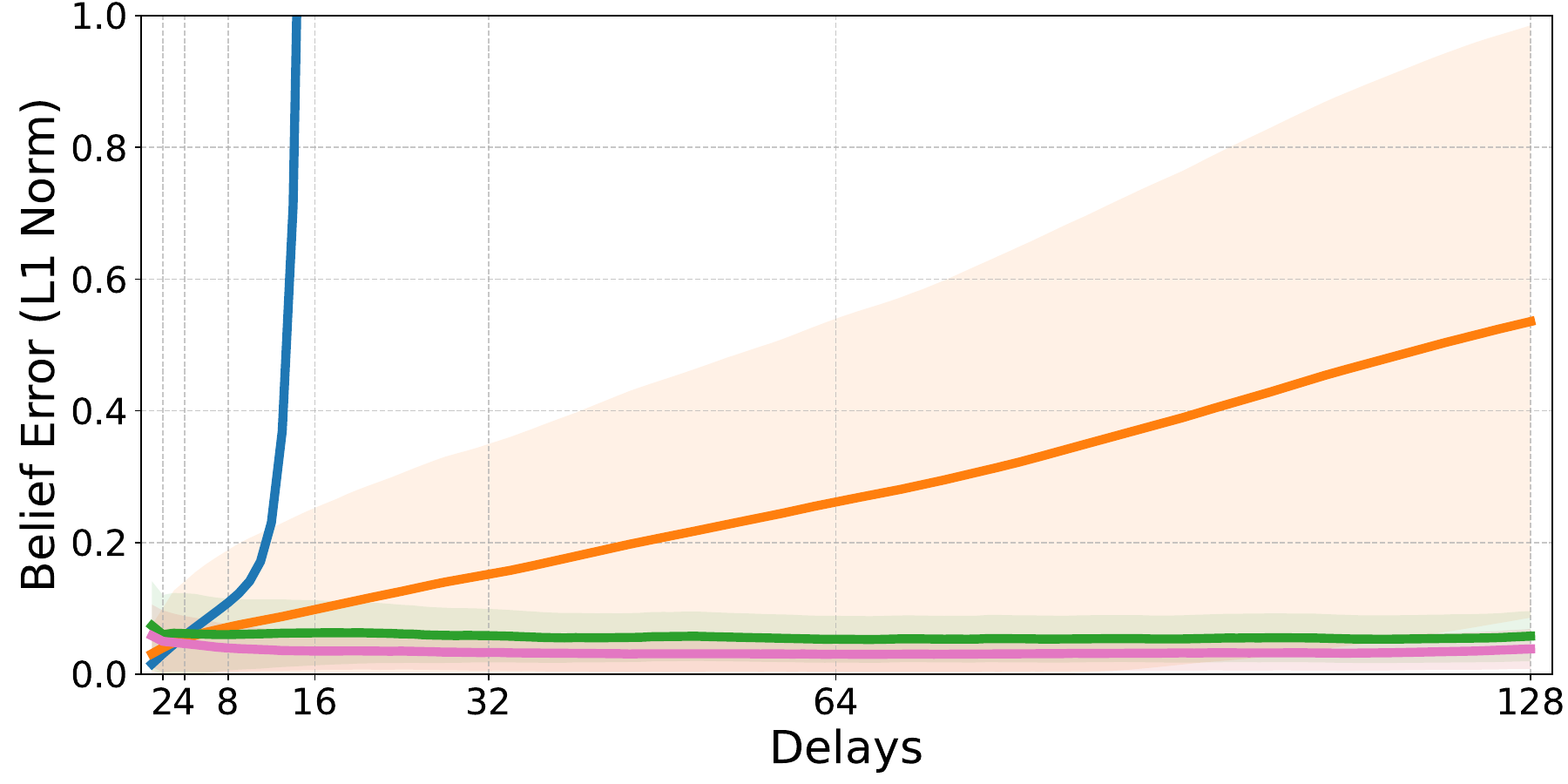}}
        \subfigure[Walker2d-v2]{\includegraphics[width=0.33\linewidth]{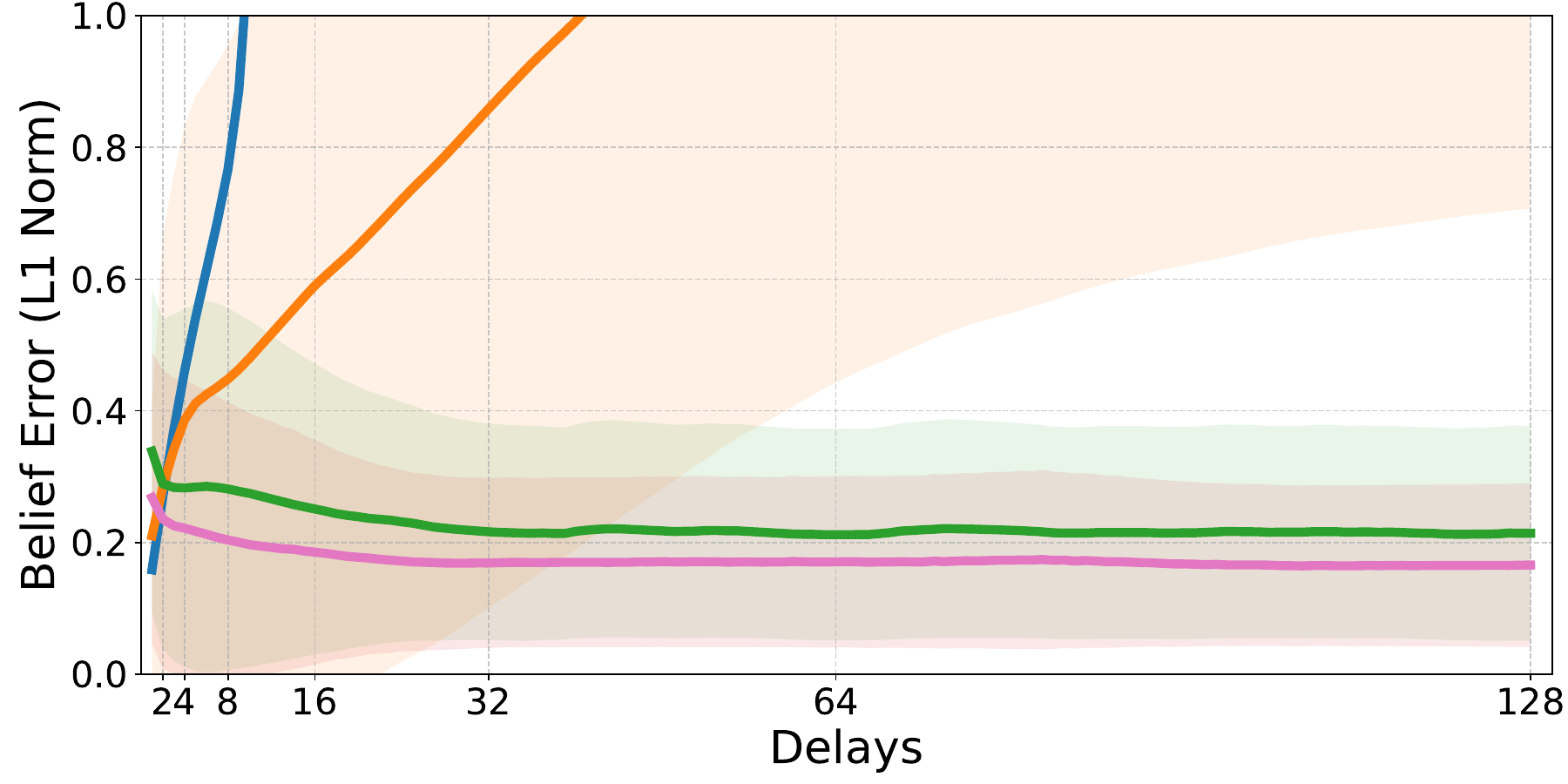}}
    }
    \includegraphics[width=1.\linewidth]{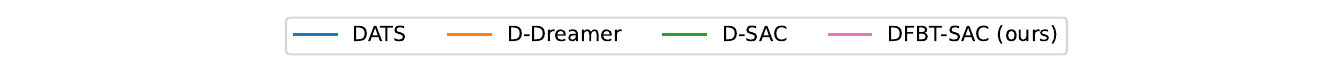}
    \vskip -0.1in
    \caption{Belief errors comparison on (a) HalfCheetah-v2, (b) Hopper-v2, and (c) Walker2d-v2.\label{fig:delayed_belief_accuracy}}
\end{figure*}

\subsection{Directly Forecasting Belief: Errors Analysis}
\label{sec:sequential_belief}
Next, we theoretically show that our directly forecasting belief can effectively address the compounding errors. The sequence-modeling method estimates the states $s_{t-\Delta + 1:t}$ based on the augmented state $x_{t}$ in parallel, instead of invoking $\Delta$ times iteratively, which effectively alleviates the source of compounding errors. Especially, the attention mechanism in the transformer can selectively capture the long-range relationships within the augmented state with long delays.
We assume that the belief error between the directly forecasting belief and the ground-truth belief is bounded (\Cref{definition:sequence_modeling_belief_error}).

\begin{assumption}[Directly Forecasting Belief Difference Bound]
    \label{definition:sequence_modeling_belief_error}
    The distance between the directly forecasting belief belief $b_\theta$ parameterized by $\theta$ and the ground-truth belief $b$ is bounded, it satisfies that $\forall x_t \in \mathcal{X}$,
    we have
    $ \mathcal{W}(b(\cdot|{x_t}) || b_{\theta}(\cdot|{x_t})) \leq \epsilon_{direct}$, where $
    \epsilon_{direct}:=\max_{i=1,\ldots,\Delta} \mathcal{W}(b^{(i)}(\cdot|{x_t}) || b^{(i)}_{\theta}(\cdot|{x_t}))
    $.
\end{assumption}

From \Cref{theo_delayed_performance_diff_single}, we can derive the performance degeneration bound of directly forecasting belief as follows.
\begin{proposition}[Performance Degeneration Bound of Directly Forecasting Belief, Proof in \Cref{appendix:theo_delayed_performance_diff_seq}]
For the delay-free policy $\pi$ and the delayed policy $\pi_\Delta$. Given any $x_t \in \mathcal{X}$, the performance degeneration $I^\text{direct}$ of the directly forecasting belief $b_\theta$ can bounded as follows respectively.

For deterministic delays $\Delta$, we have
$$
    \begin{aligned}
    \left|I^\text{direct}(x_t)\right| 
    &\leq \left|I^\text{true}_{\Delta}(x_t)\right| + L_V \epsilon_{direct}. \\
    \end{aligned}
$$
For stochastic delays $\delta \sim d_\Delta(\cdot)$, we have
$$
    \begin{aligned}
    \left|I^\text{direct}(x_t)\right| 
    &\leq \mathop{\mathbb{E}}_{\delta \sim d_\Delta(\cdot)}\left[
    \left|I^\text{true}_{\delta}(x_t)\right|\right] + L_V \epsilon_{direct}. \\
    \end{aligned}
$$    
\end{proposition}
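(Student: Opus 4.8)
The plan is to mirror the proof of \Cref{theo_delayed_performance_diff_single} verbatim up to the single step where the belief-approximation error is quantified, and then to replace the compounding estimate used there with the one-shot bound supplied by \Cref{definition:sequence_modeling_belief_error}. Concretely, I would first note that $I^\text{direct}(x_t)$ is defined exactly as $I^\text{true}_\Delta(x_t)$ in \Cref{lemma:performance_diff_true_belief}, except that the belief state $\hat s$ is drawn from the directly forecasting belief $b_\theta(\cdot|\hat x)$ rather than from the ground-truth belief $b(\cdot|\hat x)$. Writing $I^\text{direct}(x_t) = I^\text{true}_\Delta(x_t) + \big(I^\text{direct}(x_t) - I^\text{true}_\Delta(x_t)\big)$ and applying the triangle inequality reduces the claim to controlling the discrepancy term.

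Next I would bound that discrepancy. Both $I^\text{direct}$ and $I^\text{true}_\Delta$ integrate the same advantage-gap integrand $V^\pi(\hat s) - Q^\pi(\hat s,\hat a)$ against the same outer distribution $\hat x \sim d_{x_t}^{\pi}$, so their difference is driven solely by the change in the law of $\hat s$ from $b(\cdot|\hat x)$ to $b_\theta(\cdot|\hat x)$. Invoking the Lipschitz continuity of the value function (\Cref{lemma:lc_v}) on this change of measure yields a bound of the form $L_V\,\mathcal{W}\big(b(\cdot|\hat x)\,\|\,b_\theta(\cdot|\hat x)\big)$. This is precisely the intermediate inequality that the proof of \Cref{theo_delayed_performance_diff_single} produces; the only difference between the recursive and direct settings lies in how the Wasserstein term is subsequently estimated.

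Here is the key substitution. The recursive analysis estimates $\mathcal{W}(b\,\|\,b_\theta)$ by unrolling the one-step dynamics error $\epsilon_\mathcal{P}$ through $\Delta$ applications of the $L_\mathcal{P}$-Lipschitz transition (\Cref{lemma:lc_mdp}, \Cref{assumption:performance_diff_true_belief}), producing the geometric factor $\tfrac{1-L_\mathcal{P}^\Delta}{1-L_\mathcal{P}}\epsilon_\mathcal{P}$. In the direct case no unrolling occurs: \Cref{definition:sequence_modeling_belief_error} bounds the full-horizon belief error in one shot by $\epsilon_{direct}$. Substituting $\mathcal{W}(b\,\|\,b_\theta)\le\epsilon_{direct}$ into the inequality of the previous paragraph immediately gives $|I^\text{direct}(x_t)| \le |I^\text{true}_\Delta(x_t)| + L_V\epsilon_{direct}$, which settles the deterministic case. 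For stochastic delays I would condition on the realized delay $\delta$, apply the deterministic bound for each $\delta$, and take the expectation over $\delta\sim d_\Delta(\cdot)$; because $\epsilon_{direct}$ is defined as a maximum over all prediction horizons $i=1,\dots,\Delta$, it uniformly bounds the belief error across horizons, so $L_V\epsilon_{direct}$ is independent of $\delta$ and factors out of the expectation, leaving only $\mathbb{E}_{\delta}[|I^\text{true}_\delta(x_t)|]$ inside.

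I expect the main obstacle to be expository rather than technical: one must verify that the proof of \Cref{theo_delayed_performance_diff_single} genuinely separates into (i) a belief-agnostic reduction that holds for \emph{any} approximate belief $b_\theta$ and bounds the performance gap by $L_V\,\mathcal{W}(b\,\|\,b_\theta)$, and (ii) a compounding-specific estimate of that Wasserstein distance. Once this clean separation is confirmed, the Proposition follows by swapping only component (ii) for \Cref{definition:sequence_modeling_belief_error}, so no genuinely new calculation is required; the contrast between $\tfrac{1-L_\mathcal{P}^\Delta}{1-L_\mathcal{P}}\epsilon_\mathcal{P}$ and $\epsilon_{direct}$ is exactly what isolates the benefit of direct forecasting.
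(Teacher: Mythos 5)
Your overall strategy is exactly the paper's: the appendix proof of this proposition is literally one line---apply \Cref{definition:sequence_modeling_belief_error} together with the proof of \Cref{appendix:theo_delayed_performance_diff_single}---i.e., reuse the belief-agnostic reduction from the recursive theorem and swap the compounding Wasserstein estimate $\frac{1 - L_\mathcal{P}^{\Delta}}{1 - L_\mathcal{P}}\epsilon_\mathcal{P}$ for the one-shot bound $\epsilon_{direct}$, precisely as you propose. The separation you say must be verified does hold in the recursive proof: it first establishes $\left|I^\text{recursive}(x_t)\right| \leq \left|I^\text{true}_{\Delta}(x_t)\right| + L_V\,\mathcal{W}\bigl(b_\theta(\cdot|x_t)\,\|\,b(\cdot|x_t)\bigr)$ for an arbitrary approximate belief, and only afterwards unrolls the dynamics to estimate the Wasserstein term.

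One framing inaccuracy is worth flagging, because it affects the constant. The paper does not locate the belief discrepancy inside the advantage-gap integral over $\hat{x} \sim d_{x_t}^{\pi}$, as you do; it works with the value-difference form $I^\text{recursive}(x_t) := \mathbb{E}_{s_t \sim b_\theta(\cdot|x_t)}\left[V^\pi(s_t)\right] - V^{\pi_\Delta}(x_t)$ at the single point $x_t$, decomposes it as $I^\text{true}_{\Delta}(x_t) + \bigl(\mathbb{E}_{s_t \sim b_\theta(\cdot|x_t)}\left[V^\pi(s_t)\right] - \mathbb{E}_{s_t \sim b(\cdot|x_t)}\left[V^\pi(s_t)\right]\bigr)$, and then \Cref{lemma:lc_v} applies directly to a difference of expectations of $V^\pi$, yielding the constant $L_V$. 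Under your framing---substituting $b_\theta$ for $b$ inside \Cref{lemma:performance_diff_true_belief}---the perturbed integrand is $V^\pi(\hat{s}) - Q^\pi(\hat{s}, \hat{a})$, not $V^\pi(\hat{s})$ alone, and it sits under a $\frac{1}{1-\gamma}$ prefactor; \Cref{lemma:lc_v} by itself then gives something like $\frac{L_V + L_Q}{1-\gamma}\,\mathcal{W}$ rather than $L_V\,\mathcal{W}$, so your middle step as written does not produce the claimed inequality. The fix is simply to adopt the paper's value-difference definition of $I^\text{direct}$; with that in place, your substitution of $\epsilon_{direct}$ for the compounding estimate, and your treatment of the stochastic case (conditioning on $\delta$ and pulling the $\delta$-independent term $L_V\epsilon_{direct}$ out of the expectation), coincide with the paper's argument.
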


Then, we have the performance degeneration comparison between directly forecasting belief and recursively forecasting belief.
\begin{proposition}[Performance Degeneration Comparison, Proof in \Cref{appendix:proposition_performance_degeneration_comparison}]
\label{proposition_performance_degeneration_comparison}
    Directly forecasting belief could achieve a better performance guarantee 
    $\left|I^\text{direct}(x_t)\right| \leq \left|I^\text{recursive}(x_t)\right|$, 
    if we have
    $$
    \epsilon_{direct} \leq \frac{1 - {L_\mathcal{P}}^{\Delta}}{1 - L_\mathcal{P}} \epsilon_\mathcal{P}
    $$
    for deterministic delays $\Delta$, and
    $$
    \epsilon_{direct}
    \leq 
    \mathop{\mathbb{E}}_{\delta \sim d_\Delta(\cdot)} \left[\frac{1 - {L_\mathcal{P}}^{\delta}}{1 - L_\mathcal{P}}\right] \epsilon_\mathcal{P}
    $$
    for stochastic delays $\delta \sim d_\Delta(\cdot)$.
\end{proposition}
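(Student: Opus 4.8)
The plan is to compare directly the two upper bounds that are already in hand: the bound on $|I^\text{recursive}(x_t)|$ from \Cref{theo_delayed_performance_diff_single} and the bound on $|I^\text{direct}(x_t)|$ from the immediately preceding Proposition. Since both results furnish upper bounds on the respective performance degenerations, the natural reading of ``better performance guarantee'' is that the \emph{guaranteed} bound for the direct method is no larger than that for the recursive method; establishing this relationship between the two guarantees is exactly what the stated conditions on $\epsilon_{direct}$ deliver.

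First, for deterministic delays, I would write the two bounds side by side. The direct bound reads $|I^\text{true}_\Delta(x_t)| + L_V \epsilon_{direct}$, while the recursive bound reads $|I^\text{true}_\Delta(x_t)| + L_V \frac{1-L_\mathcal{P}^\Delta}{1-L_\mathcal{P}}\epsilon_\mathcal{P}$. Both share the identical leading term $|I^\text{true}_\Delta(x_t)|$, which measures the degeneration of the ground-truth belief and is independent of the approximation scheme. Cancelling this common term, the comparison reduces to weighing $L_V \epsilon_{direct}$ against $L_V \frac{1-L_\mathcal{P}^\Delta}{1-L_\mathcal{P}}\epsilon_\mathcal{P}$. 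Because $L_V = L_Q(1+L_\pi) > 0$ is a strictly positive Lipschitz constant, I may divide through by $L_V$, and the requirement that the direct guarantee be the smaller one is precisely $\epsilon_{direct} \leq \frac{1-L_\mathcal{P}^\Delta}{1-L_\mathcal{P}}\epsilon_\mathcal{P}$, as claimed.

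For stochastic delays the argument is the same once linearity of expectation is invoked. The recursive bound is $\mathop{\mathbb{E}}_{\delta \sim d_\Delta}\big[|I^\text{true}_\delta(x_t)| + L_V \frac{1-L_\mathcal{P}^\delta}{1-L_\mathcal{P}}\epsilon_\mathcal{P}\big]$; since $L_V$ and $\epsilon_\mathcal{P}$ do not depend on $\delta$, this equals $\mathop{\mathbb{E}}_\delta[|I^\text{true}_\delta(x_t)|] + L_V \mathop{\mathbb{E}}_\delta\big[\frac{1-L_\mathcal{P}^\delta}{1-L_\mathcal{P}}\big]\epsilon_\mathcal{P}$. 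The direct bound is $\mathop{\mathbb{E}}_\delta[|I^\text{true}_\delta(x_t)|] + L_V \epsilon_{direct}$, with $\epsilon_{direct}$ again constant in $\delta$. Cancelling the shared expectation term and dividing by $L_V$ yields the stated stochastic condition $\epsilon_{direct} \leq \mathop{\mathbb{E}}_\delta\big[\frac{1-L_\mathcal{P}^\delta}{1-L_\mathcal{P}}\big]\epsilon_\mathcal{P}$.

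I do not anticipate a substantive obstacle: the result is essentially a term-by-term comparison of two previously derived bounds. The only point requiring care is logical rather than computational—being explicit that the inequality concerns the two upper-bound guarantees rather than the raw quantities $I^\text{direct}$ and $I^\text{recursive}$ themselves (for which only one-sided bounds are available), and recording the harmless facts that $L_V > 0$ and that $\epsilon_\mathcal{P}$ and $\epsilon_{direct}$ are $\delta$-independent, so that expectation and the multiplicative constants commute.
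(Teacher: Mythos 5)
Your proposal is correct and takes essentially the same route as the paper: the paper's proof likewise places the two previously derived upper bounds side by side, cancels the common $\left|I^\text{true}\right|$ term, and observes that the stated condition on $\epsilon_{direct}$ (with $\epsilon_\mathcal{P}$, $\epsilon_{direct}$ constant in $\delta$ for the stochastic case) makes the direct guarantee the smaller one. Your explicit caveat that the conclusion is a comparison of the upper-bound guarantees rather than of the raw quantities $I^\text{direct}(x_t)$ and $I^\text{recursive}(x_t)$ is a point of care the paper glosses over --- its proof annotates the two bounds with underbraces labelled $\left|I^\text{direct}(x_t)\right|$ and $\left|I^\text{recursive}(x_t)\right|$ as if they were the quantities themselves.
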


\begin{remark}[Empirical Validation of \Cref{proposition_performance_degeneration_comparison}]
    It is obvious that the belief errors of recursively forecasting grows much faster than the directly forecasting, which is not strictly related with the delay length. We also empirically show that \Cref{proposition_performance_degeneration_comparison} is always held in \Cref{exp:delayed_belief_accuracy}. 
    In the future, we will theoretically investigate the sample complexity of recursive and direct forecasting beliefs.
\end{remark}

\begin{remark}[General Error Distribution Case]
In the context of time forecasting, our theoretical results of performance degeneration comparison have the potential to extend to the variance analysis commonly discussed in related literature~\cite{taieb2015bias, clements1996multi, chevillon2007direct}.
\end{remark}

\begin{table*}[t]
\centering
\caption{Performance on MuJoCo with Deterministic Delays. The best performance is underlined, the best belief-based method is in \textcolor{red}{red}.\label{table:deterministic_delays}}
\scalebox{0.8}{
\begin{tabular}{c|c|ccc|ccccc}
\hline
\multirow{2}{*}{Task} & \multirow{2}{*}{Delays} & \multicolumn{3}{c|}{Augmentation-based}                   & \multicolumn{5}{c}{Belief-based}                                                                   \\
                      &                         & A-SAC             & BPQL              & ADRL              & DATS              & D-Dreamer          & D-SAC             & DFBT-SAC(1) (ours) & DFBT-SAC (ours)   \\ \hline
                      & $8$                     & $0.10_{\pm 0.01}$ & $0.40_{\pm 0.04}$   & \underline{$0.44_{\pm 0.03}$} & $0.08_{\pm 0.01}$ & $0.08_{\pm 0.01}$ & $0.12_{\pm 0.06}$ & \textcolor{red}{$0.38_{\pm 0.03}$}  & $0.35_{\pm 0.12}$ \\
HalfCheetah-v2        & $32$                    & $0.02_{\pm 0.02}$ & $0.40_{\pm 0.03}$ & $0.26_{\pm 0.04}$ & $0.11_{\pm 0.04}$ & $0.08_{\pm 0.00}$ & $0.08_{\pm 0.02}$ & $0.40_{\pm 0.07}$  & \underline{\textcolor{red}{$0.42_{\pm 0.03}$}} \\
                      & $128$                   & $0.04_{\pm 0.06}$ & $0.08_{\pm 0.13}$ & $0.14_{\pm 0.02}$ & $0.10_{\pm 0.08}$ & $0.15_{\pm 0.05}$ & $0.09_{\pm 0.04}$ & $0.40_{\pm 0.06}$  & \underline{\textcolor{red}{$0.41_{\pm 0.03}$}} \\ \hline
                      & $8$                     & $0.61_{\pm 0.31}$ & $0.87_{\pm 0.09}$ & \underline{$0.95_{\pm 0.16}$} & $0.41_{\pm 0.31}$ & $0.11_{\pm 0.01}$ & $0.16_{\pm 0.05}$ & \textcolor{red}{$0.92_{\pm 0.28}$}  & $0.77_{\pm 0.18}$ \\
Hopper-v2             & $32$                    & $0.11_{\pm 0.02}$ & \underline{$0.89_{\pm 0.14}$} & $0.73_{\pm 0.20}$ & $0.07_{\pm 0.04}$ & $0.11_{\pm 0.05}$ & $0.11_{\pm 0.01}$ & $0.60_{\pm 0.23}$  & \textcolor{red}{$0.68_{\pm 0.20}$} \\
                      & $128$                   & $0.04_{\pm 0.01}$ & $0.08_{\pm 0.02}$ & $0.07_{\pm 0.01}$ & $0.08_{\pm 0.01}$ & $0.09_{\pm 0.03}$ & $0.06_{\pm 0.01}$ & $0.16_{\pm 0.02}$  & \underline{\textcolor{red}{$0.20_{\pm 0.03}$}} \\ \hline
                      & $8$                     & $0.44_{\pm 0.26}$ & \underline{$1.07_{\pm 0.02}$} & $0.97_{\pm 0.10}$ & $0.13_{\pm 0.05}$ & $0.11_{\pm 0.06}$ & $0.09_{\pm 0.05}$ & $0.95_{\pm 0.14}$  & \textcolor{red}{$0.99_{\pm 0.03}$} \\
Walker2d-v2           & $32$                    & $0.10_{\pm 0.02}$ & $0.37_{\pm 0.25}$ & $0.16_{\pm 0.08}$ & $0.02_{\pm 0.03}$ & $0.08_{\pm 0.05}$ & $0.08_{\pm 0.02}$ & $0.57_{\pm 0.21}$  & \underline{\textcolor{red}{$0.64_{\pm 0.10}$}} \\
                      & $128$                   & $0.06_{\pm 0.00}$ & $0.07_{\pm 0.08}$ & $0.08_{\pm 0.01}$ & $0.02_{\pm 0.02}$ & $0.08_{\pm 0.05}$ & $0.11_{\pm 0.06}$ & $0.38_{\pm 0.11}$  & \underline{\textcolor{red}{$0.40_{\pm 0.08}$}} \\ \hline
\end{tabular}
}
\end{table*}

\begin{table*}[h]
\centering
\caption{Performance on MuJoCo with Stochastic Delays. The best performance is underlined, and the best belief-based method is in \textcolor{red}{red}.\label{table:stochastic_delays}}
\scalebox{0.85}{
\begin{tabular}{c|c|ccc|cccc}
\hline
                                 &                                                                             & \multicolumn{3}{c|}{Augmentation-based}                                                                    & \multicolumn{4}{c}{Belief-based}                                                                           \\
\multirow{-2}{*}{Task}           & \multirow{-2}{*}{\begin{tabular}[c]{@{}c@{}}Delays\end{tabular}} & A-SAC             & BPQL                                     & ADRL                                     & DATS              & D-Dreamer              & D-SAC             & DFBT-SAC (ours)                                    \\ \hline
                                 & $U(1,8)$                                                                       & $0.09_{\pm 0.01}$ & $0.21_{\pm 0.07}$                        & \multicolumn{1}{c|}{$0.17_{\pm 0.07}$} & $0.09_{\pm 0.03}$ & $0.02_{\pm 0.01}$ & $0.03_{\pm 0.01}$ & \underline{ \textcolor{red}{$0.37_{\pm 0.12}$}} \\
HalfCheetah-v2                   & $U(1,32)$                                                                      & $0.01_{\pm 0.00}$ & \underline{ $0.33_{\pm 0.07}$} & \multicolumn{1}{c|}{$0.23_{\pm 0.02}$} & $0.11_{\pm 0.04}$ & $0.02_{\pm 0.00}$ & $0.01_{\pm 0.01}$ & \textcolor{red}{$0.31_{\pm 0.16}$}                        \\
                                 & $U(1,128)$                                                                     & $0.01_{\pm 0.01}$ & $0.03_{\pm 0.03}$                        & \multicolumn{1}{c|}{$0.15_{\pm 0.02}$} & $0.16_{\pm 0.03}$ & $0.16_{\pm 0.00}$ & $0.02_{\pm 0.00}$ & \underline{ \textcolor{red}{$0.39_{\pm 0.04}$}} \\ \hline
                                 & $U(1,8)$                                                                       & $0.17_{\pm 0.05}$ & $0.20_{\pm 0.04}$                        & \multicolumn{1}{c|}{$0.18_{\pm 0.04}$} & $0.04_{\pm 0.01}$ & $0.07_{\pm 0.05}$ & $0.14_{\pm 0.04}$ & \underline{ \textcolor{red}{$0.86_{\pm 0.18}$}} \\
Hopper-v2                        & $U(1,32)$                                                                      & $0.05_{\pm 0.01}$ & $0.07_{\pm 0.09}$                        & \multicolumn{1}{c|}{$0.05_{\pm 0.01}$} & $0.05_{\pm 0.01}$ & $0.04_{\pm 0.01}$ & $0.03_{\pm 0.01}$ & \underline{ \textcolor{red}{$0.43_{\pm 0.21}$}} \\
                                 & $U(1,128)$                                                                     & $0.03_{\pm 0.01}$ & $0.04_{\pm 0.01}$                        & \multicolumn{1}{c|}{$0.04_{\pm 0.02}$} & $0.05_{\pm 0.00}$ & $0.03_{\pm 0.01}$ & $0.03_{\pm 0.00}$ & \underline{ \textcolor{red}{$0.14_{\pm 0.01}$}} \\ \hline
                                 & $U(1,8)$                                                                       & $0.36_{\pm 0.24}$ & $0.40_{\pm 0.32}$                        & \multicolumn{1}{c|}{$0.41_{\pm 0.15}$} & $0.07_{\pm 0.01}$ & $0.07_{\pm 0.05}$ & $0.12_{\pm 0.04}$ & \underline{ \textcolor{red}{$1.11_{\pm 0.10}$}} \\
Walker2d-v2                      & $U(1,32)$                                                                      & $0.12_{\pm 0.03}$ & $0.16_{\pm 0.04}$                        & \multicolumn{1}{c|}{$0.11_{\pm 0.05}$} & $0.09_{\pm 0.04}$ & $0.12_{\pm 0.04}$ & $0.05_{\pm 0.02}$ & \underline{ \textcolor{red}{$0.67_{\pm 0.15}$}} \\
                                 & $U(1,128)$                                                                     & $0.06_{\pm 0.01}$ & $0.06_{\pm 0.06}$                        & $0.04_{\pm 0.02}$                      & $0.10_{\pm 0.04}$ & $0.15_{\pm 0.07}$ & $0.03_{\pm 0.04}$ & \underline{ \textcolor{red}{$0.30_{\pm 0.13}$}} \\ \hline
\end{tabular}
}
\end{table*}

\section{Experiments}
\label{sec:experiments}

\begin{figure*}[h]
    \vskip -0.1in
    \centering
    \centerline{
        \subfigure[HalfCheetah-v2 (128 Delays)]{\includegraphics[width=0.33\linewidth]{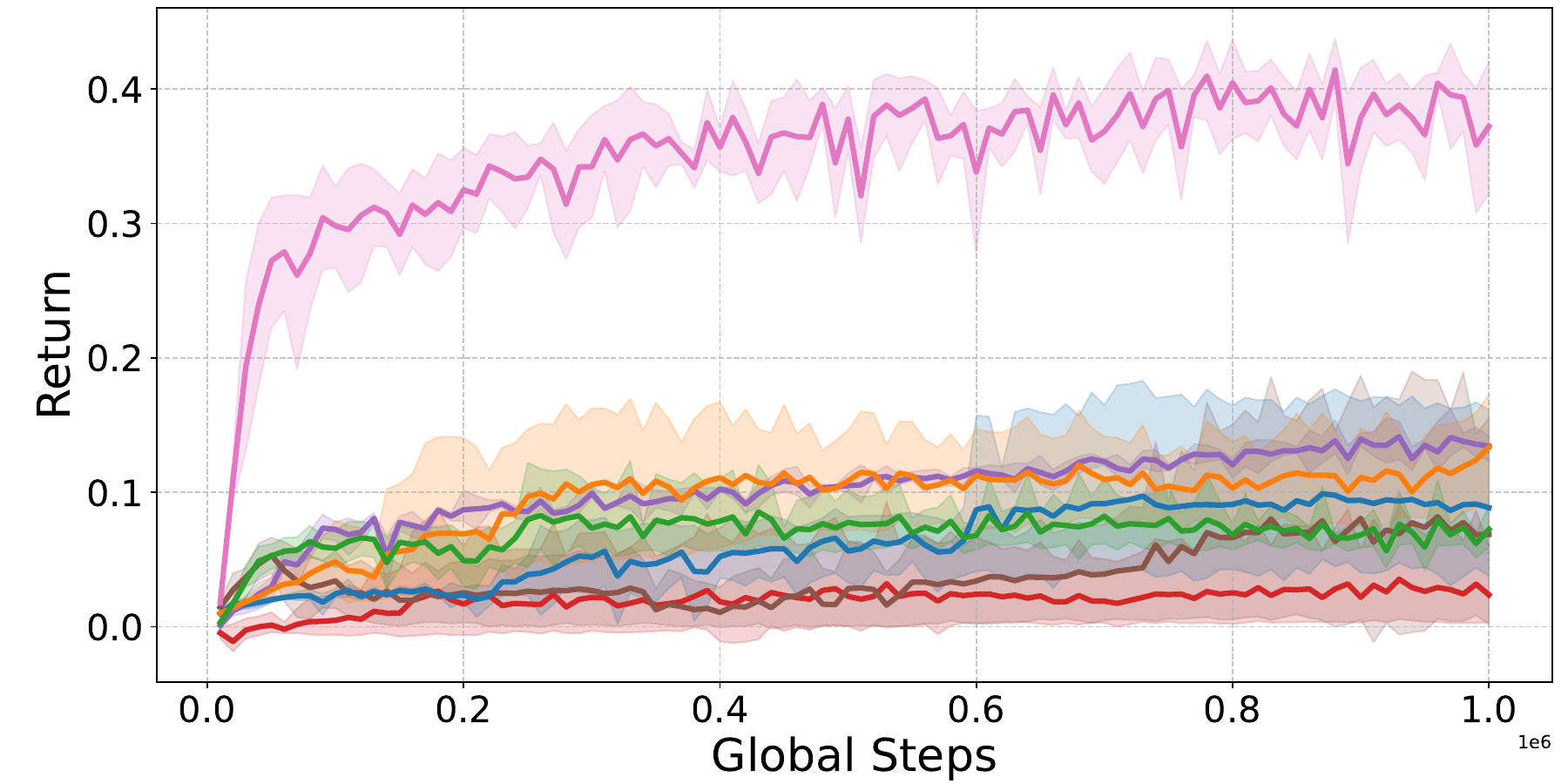}}
        \subfigure[Hopper-v2 (128 Delays)]{\includegraphics[width=0.33\linewidth]{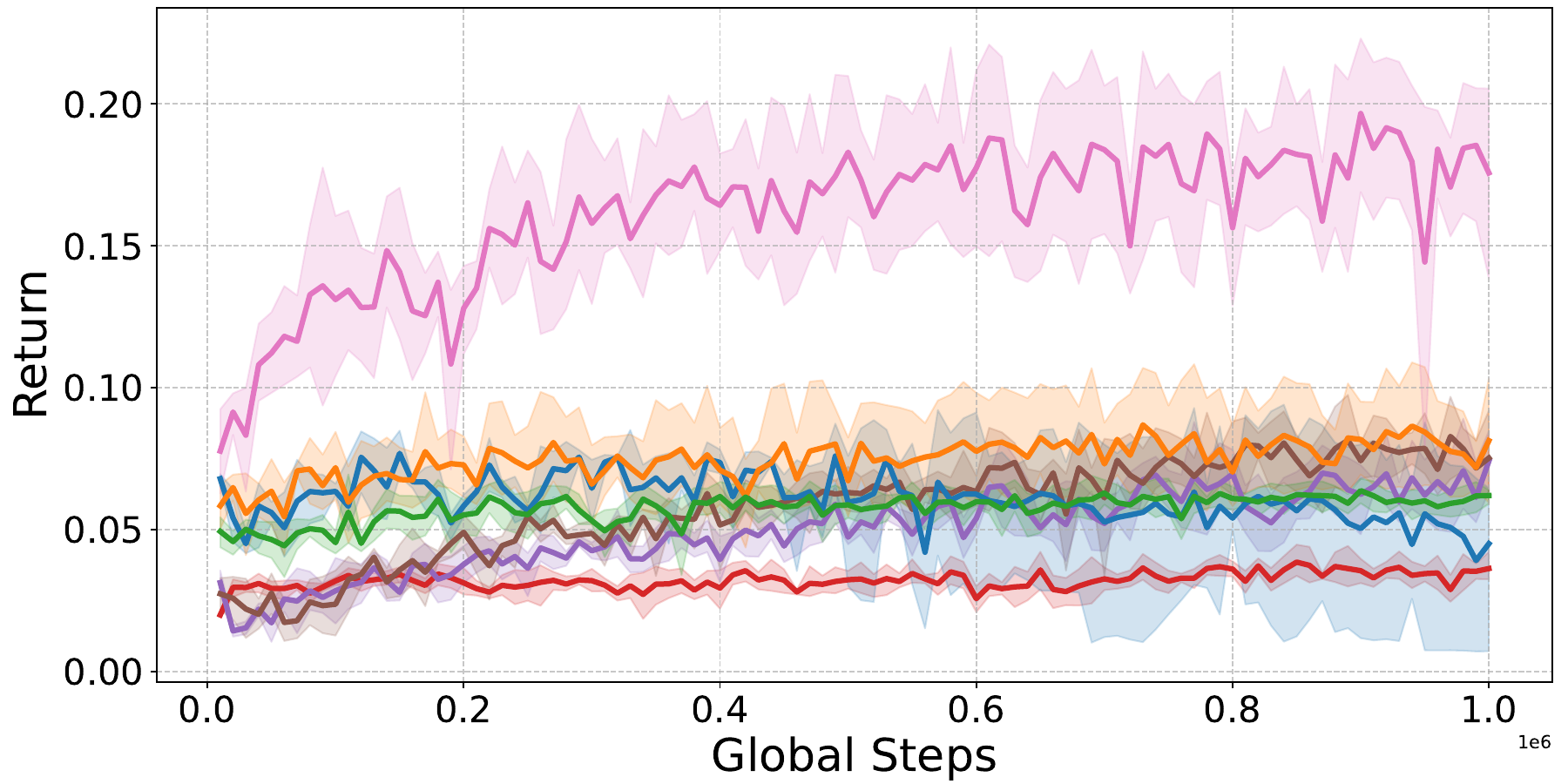}}
        \subfigure[Walker2d-v2 (128 Delays)]{\includegraphics[width=0.33\linewidth]{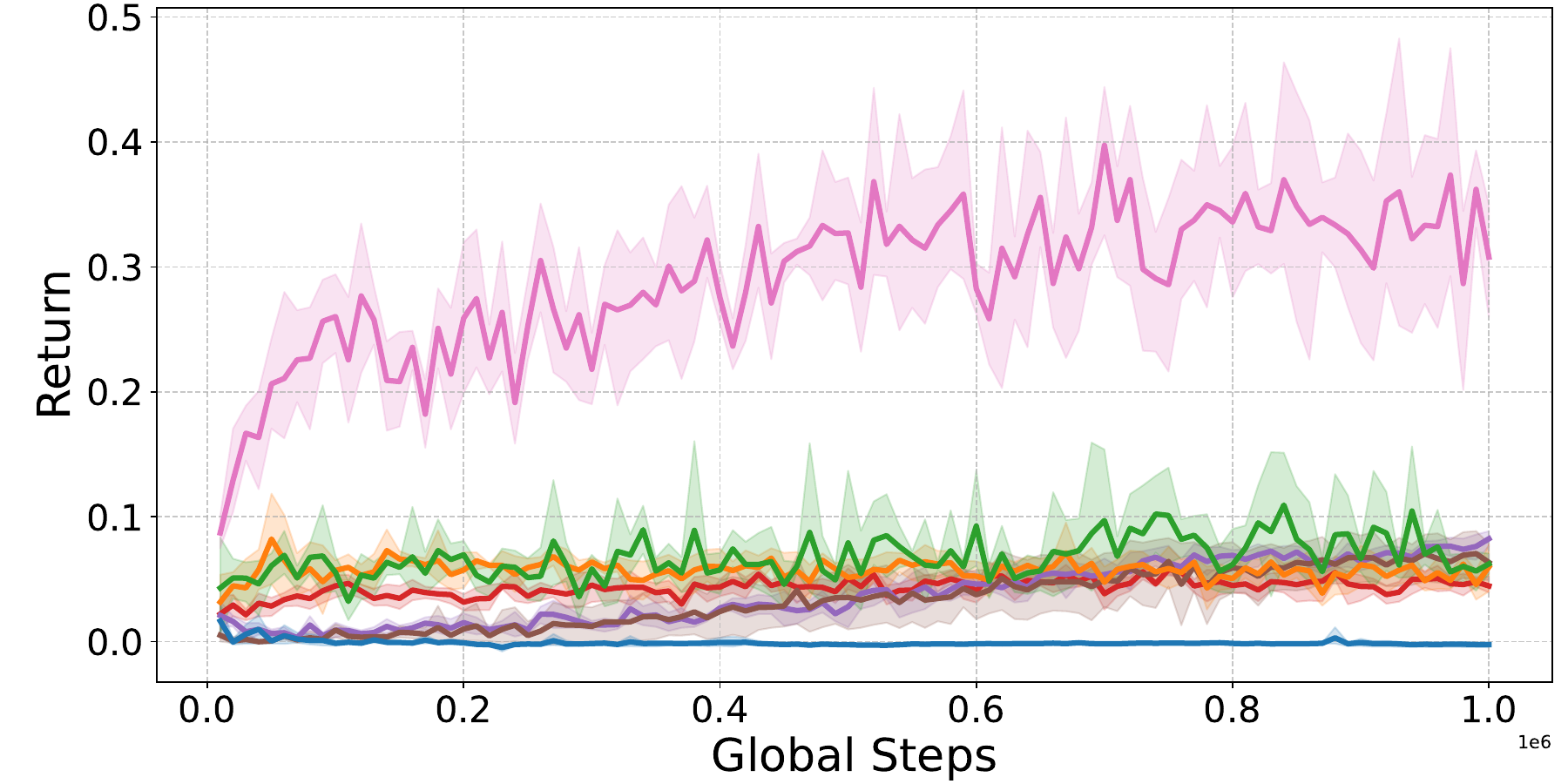}}
    }
    \vskip -0.1in
    \centerline{
        \subfigure[HalfCheetah-v2 ($U(1, 128)$ Delays)]{\includegraphics[width=0.33\linewidth]{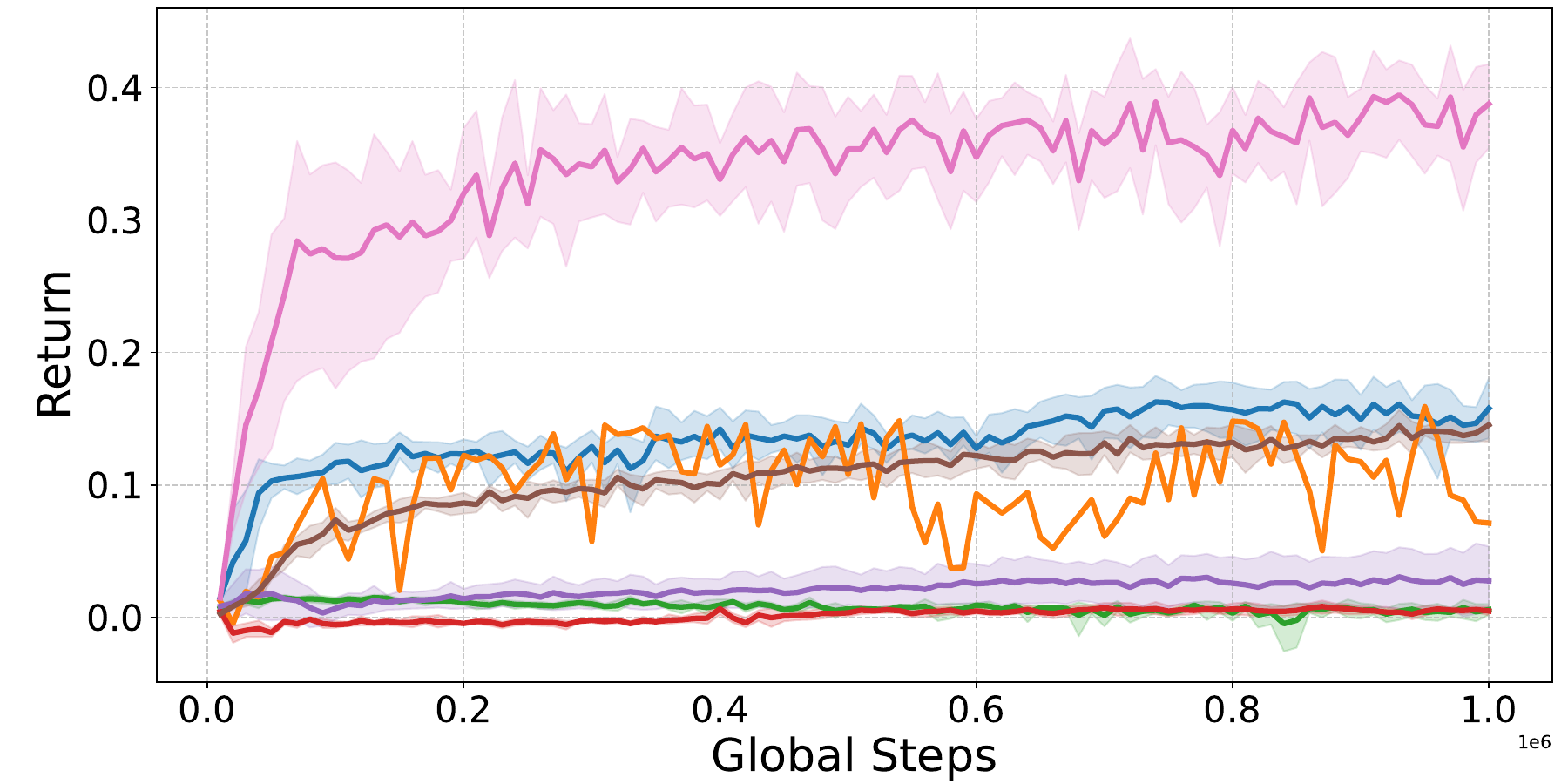}}
        \subfigure[Hopper-v2 ($U(1, 128)$ Delays)]{\includegraphics[width=0.33\linewidth]{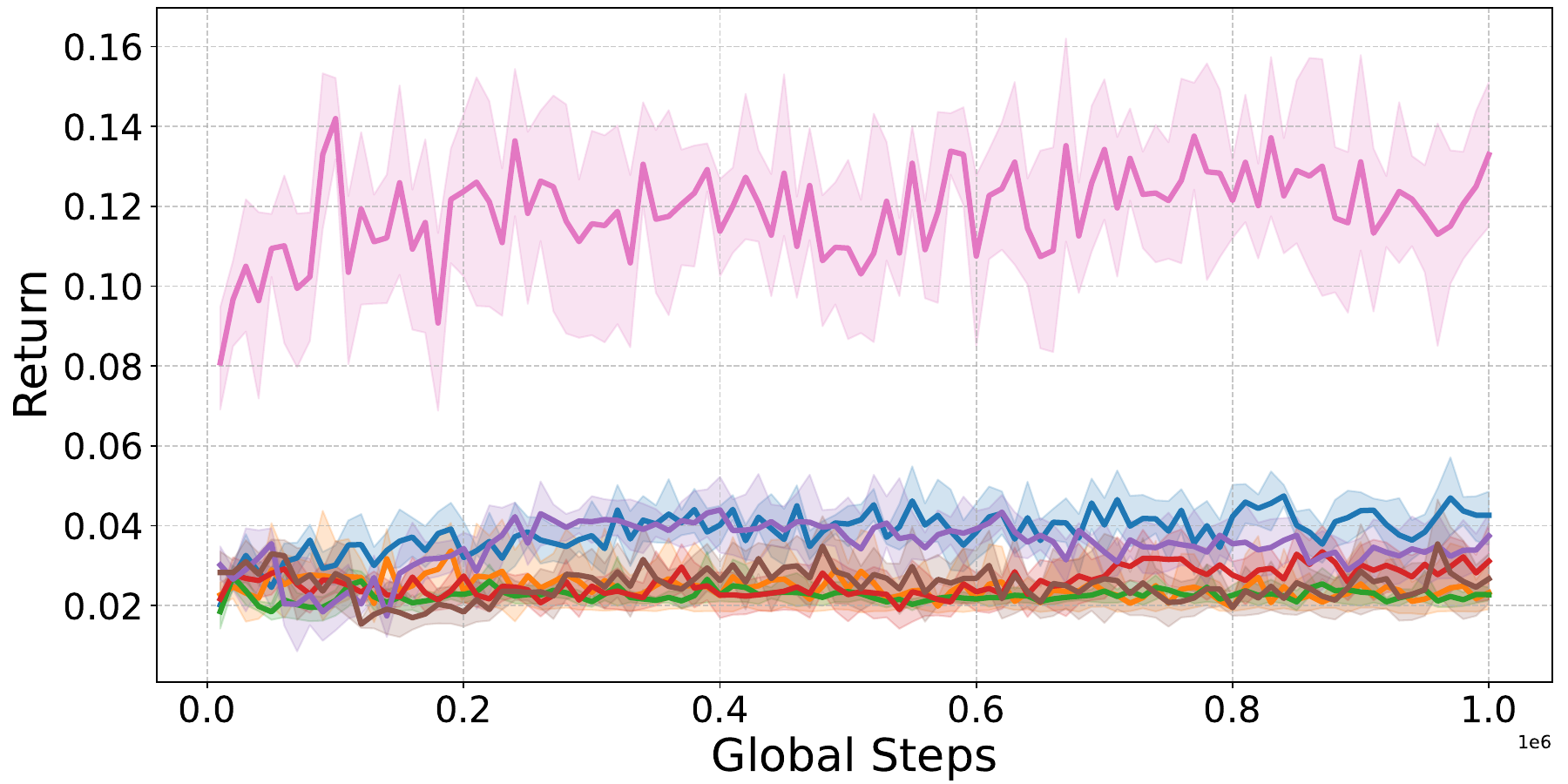}}
        \subfigure[Walker2d-v2 ($U(1, 128)$ Delays)]{\includegraphics[width=0.33\linewidth]{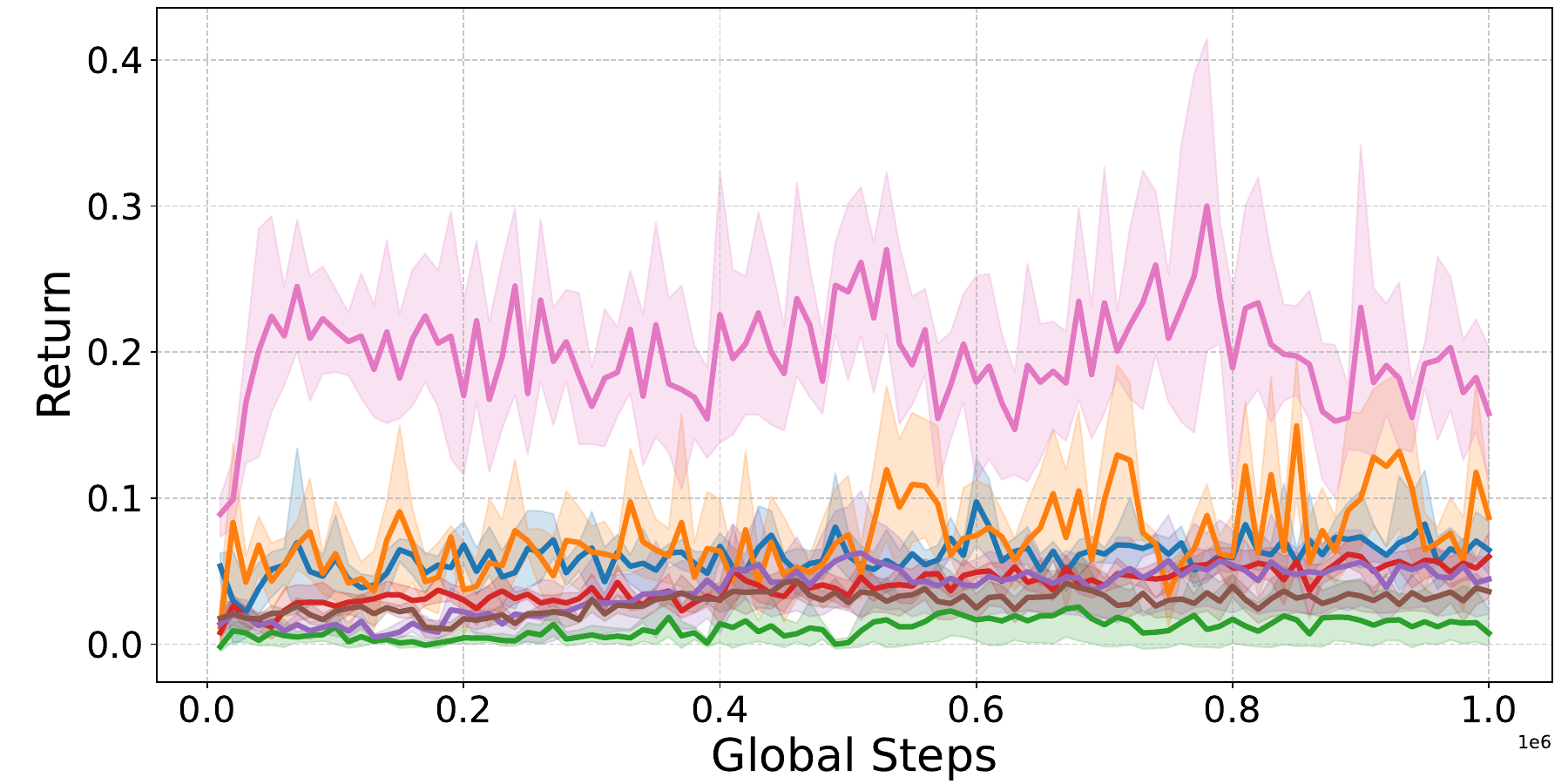}}
    }
    \vskip -0.1in
    \includegraphics[width=1.\linewidth]{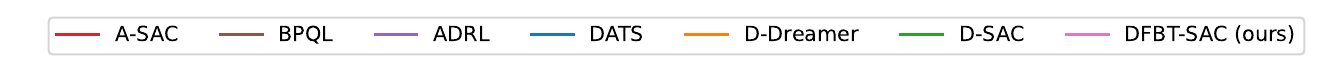}
    \vskip -0.1in
    \caption{Learning Curves on MuJoCo with 128 Delays and $U(1, 128)$ Delays.\label{exp:learning_curves}}
    \vskip -0.1in
\end{figure*}

\subsection{Experimental Setting}
We adopt D4RL~\citep{fu2020d4rl} and MuJoCo~\citep{mujoco} as the offline dataset and the benchmark respectively to evaluate our DFBT-SAC.
For the baselines, we choose the SOTA augmentation-based methods (A-SAC~\citep{soft_actor_critic_application}, BPQL~\citep{kim2023belief}, and ADRL~\citep{wu2024boosting}) and belief-based methods (DATS~\citep{chen2021delay}, D-Dreamer~\citep{karamzade2024reinforcement}, and D-SAC~\citep{learning_belief}).
All of the belief-based baselines and our DFBT-SAC are trained on the same D4RL dataset.
We first investigate the prediction accuracy of beliefs (\Cref{exp:delayed_belief_accuracy}), followed by the performance comparison with deterministic and stochastic delays (\Cref{exp:performance_comparison}).
Additionally, we conduct ablation studies on the multi-step bootstrapping of our DFBT-SAC  (\Cref{exp:ablation_studies}).

\subsection{Experimental Results}
\subsubsection{Belief Prediction Accuracy}
\label{exp:delayed_belief_accuracy}
We first evaluate the state prediction accuracy of our DFBT in D4RL offline datasets. Using the L1 norm as the belief error, We report the error curves increasing with delays of the belief of DATS, D-Dreamer, D-SAC and our DFBT-SAC in \Cref{fig:delayed_belief_accuracy}.
All methods are trained on the D4RL mixed dataset including random, medium and expert policy demonstrations.
The implementation details are provided in \Cref{appendix:sec:implementation_details}.
From the results, we can tell that our DFBT can address the compounding errors effectively via directly forecasting delayed observations, thus maintaining the best prediction accuracy with increased delays, which is consistent with our theoretical results.
We further provide the belief qualitative comparison in \Cref{appendix:sec:belief_qualitative_comparison}. 

\subsubsection{Performance Comparison}
\label{exp:performance_comparison}
We report the normalized return $R_\text{norm}:= \frac{R_\text{alg} - R_\text{random}}{R_\text{sac} - R_\text{random}}$, where $R_\text{alg}, R_\text{sac}, R_\text{random}$ are the return of the algorithm, delay-free SAC, and random returns, respectively.
Each method is evaluated across 5 random seeds, and the implementation details are provided in the \Cref{appendix:sec:implementation_details}.
Additional experiment results and learning curves for different delays are provided in the \Cref{appendix:sec:additional_results} and \Cref{appendix:sec:learning_curves}, respectively.

\paragraph{Deterministic Delays.}
The performance of DFBT-SAC and baselines are evaluated on MuJoCo with deterministic delays (8, 32, and 128) reported in \Cref{table:deterministic_delays}, showing that our DFBT-SAC overall outperforms other belief-based methods significantly.
Specifically, DFBT-SAC shows comparable performance with the SOTA augmentation-based methods (BPQL and ADRL) on tasks with relatively short delays (8). However, our DFBT-SAC yields a leading performance on challenging tasks when the delays increase to long delays (32 and 128).
Additionally, we also compare DFBT-SAC(1), the single-step bootstrapping version of DFBT-SAC.
The results imply that the multi-step bootstrapping technique can further improve the performance effectively, validating our statement in \Cref{sec:multi_step_bootstrap}.
We will investigate the bootstrapping steps in the later ablation study.

\paragraph{Stochastic Delays.}
We also evaluate DFBT-SAC on MuJoCo with stochastic delays which follow the uniform distribution $U$.
As shown in \Cref{table:stochastic_delays}, our DFBT-SAC remarkably outperforms all belief-based baselines on all tasks with all delays settings ($U(1, 8)$, $U(1, 32)$, and $U(1, 128)$).
Especially for $U(1, 128)$ delays, DFBT-SAC also performs approximately $144\%$, $180\%$, and $100\%$ better than the second best baselines on HalfCheetah-v2, Hopper-v2, and Walker2d-v2, respectively.

\paragraph{Learning Curves on MuJoCo with Challenging Delays.}
As summarized in \Cref{exp:learning_curves}, we report the learning curves on MuJoCo with $128$ and $U(1, 128)$ delays.
Our DFBT-SAC exhibits a leading learning efficiency across all these challenging delays.
Especially in the MuJoCo with 128 delays, DFBT-SAC can learn remarkably faster than all baselines, resulting in $173.3\%$ (HalfCheetah-v2), $122.2\%$ (Hopper-v2), and $263.6\%$ (Walker2d-v2) final performance improvement better than the second-best baselines.

\paragraph{Results Analysis and Discussion.}
Based on the above experimental results, we can observe a general trend in both deterministic and stochastic delays that DFBT-SAC shows comparable performance with SOTA augmentation-based methods under short delays scenarios, in which augmentation-based methods generally have better performance than other belief-based approaches. Since when delays are short, the dimensionality of the augmented state space is proper for learning, and without the approximation errors from belief representation.
However, as delays increase in both scenarios, DFBT-SAC unleashes its advantages. 
This empirical finding validates our theoretical analysis in \Cref{sec:theoretical_analysis}. 
For augmentation-based methods, as the delays increase, the augmented state in augmentation-based methods grows too rapidly, making efficient policy optimization infeasible. 
For recursively forecasting belief methods, the compounding errors in belief function approximation grow exponentially with increasing delays, as demonstrated in \Cref{fig:delayed_belief_accuracy} and \Cref{exp:delayed_belief_accuracy}. 
This results in inaccurate predictions of delayed observations and undermines subsequent policy training. 
In contrast, DFBT's prediction error remains largely unaffected by the length of delays, enabling DFBT-SAC to maintain strong performance even in scenarios with long delays.

\subsubsection{Ablation Study on Bootstrapping Steps}
\label{exp:ablation_studies}

As mentioned in \Cref{sec:multi_step_bootstrap}, we conduct the ablation study on the bootstrapping steps in DFBT-SAC. 
The result presented in \Cref{table:ablation_bootstrap} tells us that DFBT-SAC with $8$ bootstrapping steps achieves the averaged best performance compared to other choices.
It is also confirmed that bootstrapping with the states predicted by the trained DFBT can effectively improve performance.

\begin{table}[h]
\centering
\vskip -0.1in
\caption{Final Performance on Walker2d-v2 of DFBT-SAC with different bootstrapping steps. The best performance is underlined.\label{table:ablation_bootstrap}}
\scalebox{0.9}{
\begin{tabular}{c|cccc}
\hline
\multirow{2}{*}{Delays} & \multicolumn{4}{c}{Bootstrapping Steps $N$}                                        \\
                        & 1                 & 2                 & 4                 & 8                 \\ \hline
8                       & $0.86_{\pm 0.07}$ & $0.96_{\pm 0.07}$ & $1.00_{\pm 0.14}$ & \underline{$1.11_{\pm 0.10}$} \\
16                      & $0.84_{\pm 0.24}$ & $0.76_{\pm 0.21}$ & \underline{$1.02_{\pm 0.12}$} & $0.99_{\pm 0.06}$ \\
32                      & $0.63_{\pm 0.22}$ & $0.53_{\pm 0.15}$ & $0.67_{\pm 0.20}$ & \underline{$0.67_{\pm 0.15}$} \\
64                      & $0.34_{\pm 0.24}$ & $0.28_{\pm 0.22}$ & $0.29_{\pm 0.15}$ & \underline{$0.41_{\pm 0.10}$} \\
128                     & $0.24_{\pm 0.03}$ & $0.29_{\pm 0.07}$ & $0.27_{\pm 0.14}$ & \underline{$0.30_{\pm 0.13}$} \\ \hline
\end{tabular}
}
\vskip -0.1in
\end{table}

\begin{table}[h]
\centering
\caption{Performance comparison of DFBT-SAC with different belief training methods on MuJoCo tasks with 32 delays. The best performance is underlined.}
\scalebox{0.8}{
\begin{tabular}{c|ccc}
\hline
Task           & Online         & Offline                 & Offline + Fine-tuning   \\ \hline
HalfCheetah-v2 & $0.11\pm 0.39$ & \underline{$0.42\pm 0.12$} & $0.39\pm 0.04$          \\
Hopper-v2      & $0.10\pm 0.58$ & $0.68\pm 0.20$          & \underline{$0.84\pm 0.04$} \\
Walker2d-v2    & $0.09\pm 0.27$ & $0.64\pm 0.10$          & \underline{$0.96\pm 0.32$} \\ \hline
\end{tabular}
}
\vskip -0.1in
\label{table:finetune_belief}
\end{table}

\subsubsection{Evaluation on Additional MuJoCo tasks.}
we conducted experiments on the Pusher-v2, Reacher-v2, and Swimmer-v2 with deterministic 32 delays. The offline datasets (500k samples) are collected from a SAC policy, with other settings unchanged. The results are shown in \Cref{table:additional_mujoco}, showing that DFBT-SAC achieves superior performance on these tasks.

\begin{table*}[h]
\centering
\vskip -0.1in
\caption{Performance comparison on additional MuJoCo tasks with 32 deterministic delays. The best performance is underlined.}
\scalebox{0.9}{
\begin{tabular}{c|ccccccc}
\hline
Task       & A-SAC          & BPQL           & ADRL           & DATS           & D-Dreamer      & D-SAC          & DFBT-SAC (ours)                \\ \hline
Pusher-v2     & $0.05\pm 0.00$ & $0.93\pm 0.58$ & $0.95\pm 0.24$ & $0.92\pm 0.10$ & $0.87\pm 0.18$     & $0.94\pm 0.04$ & \underline{$1.04\pm 0.24$} \\
Reacher-v2    & $0.89\pm 0.08$ & $0.83\pm 0.06$ & $0.85\pm 0.01$ & $0.82\pm 0.13$ & $0.84\pm 0.02$     & $0.88\pm 0.07$ & \underline{$0.93\pm 0.06$} \\
Swimmer-v2    & $0.27\pm 0.05$ & $0.80\pm 0.14$ & $0.60\pm 0.06$ & $0.25\pm 0.05$ & $0.21\pm 0.07$     & $0.30\pm 0.07$ & \underline{$1.01\pm 0.27$} \\ \hline
\end{tabular}
}
\label{table:additional_mujoco}
\vskip -0.1in
\end{table*}

\begin{table*}[t]
\vskip -0.1in
\caption{Performance on stochastic MuJoCo with deterministic 128 delays. The best performance is underlined.}
\scalebox{0.9}{
\begin{tabular}{c|ccccccc}
\hline
Task           & A-SAC          & BPQL           & ADRL           & DATS           & D-Dreamer      & D-SAC          & DFBT-SAC (ours)               \\ \hline
HalfCheetah-v2 & $0.00\pm 0.03$ & $0.01\pm 0.05$ & $0.13\pm 0.04$ & $0.13\pm 0.03$ & $0.07\pm 0.01$ & $0.00\pm 0.04$ & \underline{$0.35\pm 0.04$} \\
Hopper-v2      & $0.03\pm 0.04$ & $0.08\pm 0.05$ & $0.06\pm 0.04$ & $0.05\pm 0.06$ & $0.04\pm 0.05$ & $0.04\pm 0.05$ & \underline{$0.13\pm 0.22$} \\
Walker2d-v2    & $0.06\pm 0.02$ & $0.04\pm 0.01$ & $0.08\pm 0.01$ & $0.06\pm 0.03$ & $0.10\pm 0.03$ & $0.05\pm 0.02$ & \underline{$0.30\pm 0.07$} \\ \hline
\end{tabular}
}
\label{table:stochastic_mujoco}
\vskip -0.1in
\end{table*}

\subsubsection{Ablation Results of Belief Training.}
Some task-specific information may be missing if the belief is frozen in the RL process, leading to limited performance improvement. This issue can be mitigated by fine-tuning the belief within the RL process. The results, shown in \Cref{table:finetune_belief}, demonstrate that fine-tuning helps the DFBT capture the task-specific information with better performance. Note that there are many potential methods for capturing task-specific information, not limited to fine-tuning DFBT. Belief learning from scratch in the online RL process always suffers from instability issues. Therefore, in this paper, we separate belief learning from the online RL process, which allows us to investigate the belief component solely, eliminate potential influences from the RL side.

\subsubsection{Stochastic MuJoCo.}
We conducted additional experiments on the stochastic MuJoCo tasks with a probability of 0.001 for the unaware noise and deterministic 128 delays. As shown in \Cref{table:stochastic_mujoco}, the results demonstrate that our DFBT-SAC achieves superior performance in these stochastic MuJoCo tasks.

\subsubsection{Inference Speed Comparison.}
We conducted additional experiments on computational efficiency. As shown in \Cref{table:inference_speed}, the results demonstrate that directly forecasting belief maintains a consistent and stable inference speed (around 4 ms) across different delays. In contrast, the recursively forecasting belief experiences inference speed issues as delays increase. In HalfCheetah-v2 with 128 delays, the training times of DATS and D-Dreamer are around 10 hours and 15 hours, respectively, while those of D-SAC and DFBT-SAC are both around 6 hours.
\begin{table}[h]
\centering
\caption{Inference speed (ms) comparison in HalfCheetah-v2.}
\scalebox{0.8}{
\begin{tabular}{c|cccc}
\hline
Delays & DATS            & D-Dreamer       & D-SAC          & DFBT-SAC       \\ \hline
8      & $1.10\pm 0.02$  & $1.85\pm 0.01$  & $4.03\pm 0.03$ & $4.18\pm 0.04$ \\
32     & $3.85\pm 0.06$  & $6.80\pm 0.04$  & $4.03\pm 0.04$ & $4.18\pm 0.04$ \\
128    & $14.97\pm 0.22$ & $26.51\pm 0.19$ & $4.03\pm 0.03$ & $4.15\pm 0.05$ \\ \hline
\end{tabular}
}
\label{table:inference_speed}
\vskip -0.1in
\end{table}
\subsection{Limitations and Challenges}
In this work, we empirically validate that our approach can address the compounding errors of recursively forecasting belief with significant performance improvement. 
However, there remain some limitations and challenges as follows.

\paragraph{Online Belief Learning.}
In this paper, we mainly consider learning DFBT from the offline dataset, which means the performance of the belief is subject to the quality and quantity of the dataset.
However, direct learning belief in the online environment will introduce an auxiliary representation task, destabilizing the learning process.

\paragraph{Sample Complexity of Directly Forecasting Belief.}
Although we have empirically demonstrated that directly forecasting belief can effectively reduce the compounding errors of recursively forecasting belief.
However, it is worth theoretically analysing the sample complexity of belief learning, especially in online scenarios.

\section{Conclusion}
This work investigates the challenges of RL in environments where inherent delays exist between actions and their corresponding outcomes.
Existing belief-based approaches usually suffer from the compounding errors issue of the recursively forecasting belief as the delays are increased, seriously hindering the performance.
To resolve this issue, we present DFBT, a new directly forecasting belief representation method.
Furthermore, we present DFBT-SAC, which facilitates multi-step bootstrapping in learning the value function via the state prediction from the DFBT, effectively improving the sample efficiency.
We demonstrate that our DFBT greatly reduces compounding errors, yielding stronger performance guarantees.
Our empirical results validate that DFBT has remarkable prediction accuracy in the D4RL datasets.
We also empirically show that our DFBT-SAC not only effectively enhance the learning efficiency with superior performance in the MuJoCo benchmark.

\clearpage
\newpage
\section*{Acknowledgement}
We sincerely appreciate the insightful suggestions provided by ICML 2025 reviewers, which help us improve the quality of this paper.
We sincerely acknowledge the support by the grant EP/Y002644/1 under the EPSRC ECR International Collaboration Grants program, funded by the International Science Partnerships Fund (ISPF) and the UK Research and Innovation. 
We also sincerely acknowledge the support by Taiwan NSTC under Grant Number NSTC-112-2221-E-002-168-MY3, and from King Abdullah University of Science and Technology (KAUST) - Center of Excellence for Generative AI, under award number 5940.
Simon Sinong Zhan and Qi Zhu are partially supported by the US National Science Foundation grants 2324936 and 2328973. 

\section*{Impact Statement}
This paper aims to advance the field of Machine Learning. While acknowledging there are many potential societal consequences of our work, we believe that none need to be specially highlighted here.


\bibliography{example_paper}
\bibliographystyle{icml2025}

\newpage
\appendix
\onecolumn

\section{Implementation Details}
\label{appendix:sec:implementation_details}
The implementation of DFBT and DFBT-SAC is based on CORL~\citep{tarasov2022corl} and CleanRL~\citep{huang2022cleanrl}.
The codebase for reproducing our experimental results is also provided in the Supplementary Material.
We detail the hyperparameter settings of DFBT and DFBT-SAC in \Cref{appendix:table:parameter_dbt} and \Cref{appendix:table:parameter_dbt_sac}, respectively.

\begin{table}[h]
\centering
\caption{Hyper-parameters table of DFBT. \label{appendix:table:parameter_dbt}}
\begin{tabular}{|c|c|}
\hline
Hyper-parameter        & Value                                     \\ \hline
Epoch                  & 1e3                                       \\
Batch Size             & 256                                       \\
Attention Heads Num    & 4                                         \\
Layers Num             & 10                                        \\
Hidden Dim             & 256                                       \\
Attention Dropout Rate & 0.1                                       \\
Residual Dropout Rate  & 0.1                                       \\
Hidden Dropout Rate    & 0.1                                       \\
Learning Rate          & 1e-4                                      \\
Optimizer              & AdamW                                     \\ 
Weight Decay           & 1e-4                                     \\ 
Betas                  & (0.9, 0.999)                                     \\ 
\hline
\end{tabular}
\end{table}

\begin{table}[h]
\centering
\caption{Hyper-parameters table of DFBT-SAC. \label{appendix:table:parameter_dbt_sac}}
\begin{tabular}{|c|c|}
\hline
Hyper-parameter             & Value          \\ \hline
Bootstrapping Steps $N$       & 8           \\
Learning Rate (Actor)       & 3e-4           \\
Learning Rate (Critic)      & 1e-3           \\
Learning Rate (Entropy)     & 1e-3           \\
Train Frequency (Actor)     & 2              \\
Train Frequency (Critic)    & 1              \\
Soft Update Factor (Critic) & 5e-3           \\
Batch Size                  & 256            \\
Neurons                     & {[}256, 256{]} \\
Layers                      & 3              \\
Hidden Dim                  & 256            \\
Activation                  & ReLU           \\
Optimizer                   & Adam           \\ \hline
\end{tabular}
\end{table}

\clearpage
\section{Theoretical Analysis}
\begin{theorem}[Performance Difference of Recursively Forecasting Belief]
    \label{appendix:theo_delayed_performance_diff_single}
    For the delay-free policy $\pi$ and the delayed policy $\pi_\Delta$. Given any $x_t \in \mathcal{X}$, the performance difference $I^\text{recursive}(x_t)$ of the recursively forecasting belief $b_\theta$ can be bounded as follows, respectively.

    For deterministic delays $\Delta$, we have
    $$
    \left|I^\text{recursive}(x_t)\right| 
    \leq 
    \left|I^\text{true}_{\Delta}(x_t)\right| 
    + L_V \underbrace{\frac{1 - {L_\mathcal{P}}^{\Delta}}{1 - L_\mathcal{P}} \epsilon_\mathcal{P}}_{\text{compounding errors}}.
    $$
    And for stochastic delays $\delta \sim d_\Delta(\cdot)$, we have
    $$
    \begin{aligned}
        \left|I^\text{recursive}(x_t)\right| 
        &\leq \mathop{\mathbb{E}}_{\delta \sim d_\Delta(\cdot)}\left[
        \left|I^\text{true}_{\delta}(x_t)\right| + L_V \underbrace{\frac{1 - {L_\mathcal{P}}^{\delta}}{1 - L_\mathcal{P}} \epsilon_\mathcal{P}}_{\text{compounding errors}}
        \right]. \\
    \end{aligned}
    $$
\end{theorem}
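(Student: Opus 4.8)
The plan is to first reduce the performance-difference bound to a bound on the Wasserstein distance between the recursively forecasting belief $b_\theta(\cdot|x_t)$ and the ground-truth belief $b(\cdot|x_t)$, and then to control that Wasserstein distance by induction over the recursion depth. Writing $I^\text{recursive}(x_t)$ as the same expression as $I^\text{true}_\Delta(x_t)$ in \Cref{lemma:performance_diff_true_belief} but with the state estimate drawn from $b_\theta$ rather than $b$, I would start from the triangle inequality
$$
\left|I^\text{recursive}(x_t)\right| \leq \left|I^\text{true}_\Delta(x_t)\right| + \left|I^\text{recursive}(x_t) - I^\text{true}_\Delta(x_t)\right|,
$$
so it remains to bound the last term by $L_V \frac{1-L_\mathcal{P}^\Delta}{1-L_\mathcal{P}}\epsilon_\mathcal{P}$. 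Since the two quantities differ only through the belief distribution over $\hat s$, the Lipschitz continuity of the value function (\Cref{lemma:lc_v}), together with the $L_Q$-Lipschitzness of $Q^\pi$ that it presumes, lets me upper bound this difference (in expectation over $\hat x \sim d_{x_t}^\pi$) by $L_V \, \mathcal{W}\!\left(b(\cdot|x_t) \,\|\, b_\theta(\cdot|x_t)\right)$.

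The core step is then the compounding-error bound
$$
\mathcal{W}\!\left(b(\cdot|x_t)\,\|\,b_\theta(\cdot|x_t)\right) \leq \frac{1-L_\mathcal{P}^\Delta}{1-L_\mathcal{P}}\epsilon_\mathcal{P},
$$
which I would prove by induction on the number of recursive forward steps $k=1,\dots,\Delta$. Let $b^{(k)}$ and $b_\theta^{(k)}$ denote the $k$-step beliefs obtained by propagating through the true kernel $\mathcal{P}$ and the approximate kernel $\mathcal{P}_\theta$, respectively. The base case $k=1$ is exactly \Cref{assumption:performance_diff_true_belief}, giving $\mathcal{W}(b^{(1)}\|b_\theta^{(1)}) \leq \epsilon_\mathcal{P}$. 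For the inductive step I would insert the intermediate distribution $\int \mathcal{P}(\cdot|s,a)\,b_\theta^{(k)}(ds)$ and split via the triangle inequality into (i) a term where the same true kernel $\mathcal{P}$ is applied to the two beliefs $b^{(k)}$ and $b_\theta^{(k)}$, which the Lipschitz property of $\mathcal{P}$ (\Cref{lemma:lc_mdp}) contracts to $L_\mathcal{P}\,\mathcal{W}(b^{(k)}\|b_\theta^{(k)})$, and (ii) a term where the two kernels $\mathcal{P}$ and $\mathcal{P}_\theta$ are applied to the same belief $b_\theta^{(k)}$, which the convexity of the Wasserstein distance together with \Cref{assumption:performance_diff_true_belief} bounds by $\epsilon_\mathcal{P}$. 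Combining gives the geometric recursion $\mathcal{W}(b^{(k+1)}\|b_\theta^{(k+1)}) \leq L_\mathcal{P}\,\mathcal{W}(b^{(k)}\|b_\theta^{(k)}) + \epsilon_\mathcal{P}$, and unrolling from the base case yields $\sum_{j=0}^{\Delta-1} L_\mathcal{P}^j \epsilon_\mathcal{P} = \frac{1-L_\mathcal{P}^\Delta}{1-L_\mathcal{P}}\epsilon_\mathcal{P}$ at $k=\Delta$.

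For the stochastic-delay case I would condition on each realized delay $\delta \sim d_\Delta(\cdot)$, apply the deterministic bound with $\Delta$ replaced by $\delta$, and take the expectation over $\delta$, which directly produces the stated inequality by linearity. I expect the main obstacle to be rigorously justifying the two Wasserstein-composition inequalities inside the inductive step: that pushing two distributions through a common $L_\mathcal{P}$-Lipschitz kernel scales their Wasserstein distance by at most $L_\mathcal{P}$ (which requires lifting an optimal coupling through the kernel and invoking \Cref{lemma:lc_mdp} under Kantorovich duality), and that the per-step kernel approximation error integrates to at most $\epsilon_\mathcal{P}$ by convexity of $\mathcal{W}$. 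A secondary subtlety is confirming that the \emph{entire} $V^\pi(\hat s) - Q^\pi(\hat s,\hat a)$ integrand, not just the $V^\pi$ part, is absorbed into the single $L_V$ factor, which is precisely why the form $L_V = L_Q(1+L_\pi)$ from \Cref{lemma:lc_v} is required rather than a bare value-Lipschitz constant.
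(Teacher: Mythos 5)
Your compounding-error core is exactly the paper's argument: the same insertion of the intermediate distribution (true kernel pushed through the approximate belief), the same split into an $\epsilon_\mathcal{P}$ term via \Cref{assumption:performance_diff_true_belief} and an $L_\mathcal{P}$-contraction term via \Cref{lemma:lc_mdp}, the same geometric unrolling to $\frac{1-L_\mathcal{P}^{\Delta}}{1-L_\mathcal{P}}\epsilon_\mathcal{P}$ (the paper unrolls the chain of inequalities directly rather than phrasing it as an induction), and the same condition-on-$\delta$-then-average treatment of stochastic delays.

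The genuine gap is in your reduction step, and it is precisely the ``secondary subtlety'' you flag at the end — which you resolve incorrectly. The paper does not define $I^\text{recursive}$ as the occupancy expression of \Cref{lemma:performance_diff_true_belief} with $b_\theta$ substituted for $b$; it defines $I^\text{recursive}(x_t) := \mathbb{E}_{s_t\sim b_\theta(\cdot|x_t)}\left[V^{\pi}(s_t)\right] - V^{\pi_\Delta}(x_t)$ and uses the identity underlying \Cref{lemma:performance_diff_true_belief} (from DIDA), namely $\mathbb{E}_{s_t\sim b(\cdot|x_t)}\left[V^{\pi}(s_t)\right] - V^{\pi_\Delta}(x_t) = I^\text{true}_{\Delta}(x_t)$. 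With that definition the residual term is $\mathbb{E}_{b_\theta}\left[V^{\pi}\right] - \mathbb{E}_{b}\left[V^{\pi}\right]$, a difference of expectations of $V^{\pi}$ alone at the single augmented state $x_t$, so \Cref{lemma:lc_v} applies verbatim and yields exactly $L_V\,\mathcal{W}\left(b_\theta(\cdot|x_t)\,\|\,b(\cdot|x_t)\right)$. Under your occupancy-based reading, the residual integrand at each $\hat{x}$ is $f(\hat{s}) = V^{\pi}(\hat{s}) - \mathbb{E}_{\hat{a}\sim\pi_\Delta(\cdot|\hat{x})}\left[Q^{\pi}(\hat{s},\hat{a})\right]$, whose Lipschitz constant is only bounded by $L_V + L_Q = L_Q(2+L_\pi)$ (the $V^{\pi}$ part contributes $L_Q(1+L_\pi)$ and the $Q^{\pi}$ part contributes another $L_Q$; there is no cancellation in general), not by $L_V$; moreover you inherit the $\frac{1}{1-\gamma}$ prefactor from the definition. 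Carried out rigorously, your route therefore proves a bound with constant $\frac{L_Q(2+L_\pi)}{1-\gamma}$ in place of $L_V$, a strictly weaker statement than the theorem claims. To land on the stated constant you must adopt the paper's value-difference definition of $I^\text{recursive}$ (equivalently, invoke the performance-difference identity first, and only then compare the two beliefs inside a bare $V^{\pi}$ expectation).
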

\begin{proof}

    For deterministic delays $\Delta$, the performance difference $I^\text{recursive}$ can be written as:
    $$
    \begin{aligned}
        I^\text{recursive}(x_t) &= \mathop{\mathbb{E}}_{{s_t}\sim b_\theta(\cdot|{x_t})}\left[{V^{\pi}(s_t)}\right] - {V^{\pi_\Delta}(x_t)}\\
        &= I^\text{true}_{\Delta}({x_t}) + \mathop{\mathbb{E}}_{{s_t}\sim b_\theta(\cdot|{x_t})}\left[{V^{\pi}(s_t)}\right] - \mathop{\mathbb{E}}_{{s_t}\sim b(\cdot|{x_t})}\left[{V^{\pi}(s_t)}\right].\\
    \end{aligned}
    $$
    And recall that we have the Lipschitz continuity of the value function $V^\pi$:
    $$
    \left|\mathop{\mathbb{E}}_{{s_t}\sim b_\theta(\cdot|{x_t})}\left[{V^{\pi}(s_t)}\right] - \mathop{\mathbb{E}}_{{s_t}\sim b(\cdot|{x_t})}\left[{V^{\pi}(s_t)}\right]\right|
    \leq 
    L_V \mathcal{W}(b_\theta(\cdot|{x_t}) || b(\cdot|{x_t})).
    $$

    For $\mathcal{W}(b_\theta(\cdot|{x_t}) || b(\cdot|{x_t}))$, we follow the proof sketch of~\citep{asadi2018lipschitz}.
    
    We use $b^{(i)}(\cdot|{x_t}) (i=1, \ldots, \Delta)$ to note that the belief function with the specific delays $i$. For instance, $b^{(\Delta)}(\cdot|{x_t}) = b(\cdot|{x_t})$ and $b^{(1)}(\cdot|{x_t}) = \mathcal{P}(\cdot|{s_t, a_t})$.

    Then, we have
    $$
    \begin{aligned}
    &\mathcal{W}(b^{(\Delta)}_\theta(\cdot|{x_t}) || b^{(\Delta)}(\cdot|{x_t})) \\
    &= \mathcal{W}(\mathcal{P}_\theta(\cdot|{b^{(\Delta-1)}_\theta(\cdot|{x_t}), a_{t-1}}) || \mathcal{P}(\cdot|{b^{(\Delta-1)}(\cdot|{x_t}), a_{t-1}}))\\
    & \leq \mathcal{W}(\mathcal{P}_\theta(\cdot|{b^{(\Delta-1)}_\theta(\cdot|{x_t}), a_{t-1}}) || \mathcal{P}(\cdot|{b^{(\Delta-1)}_\theta(\cdot|{x_t}), a_{t-1}})) + \mathcal{W}(\mathcal{P}(\cdot|{b^{(\Delta-1)}_\theta(\cdot|{x_t}), a_{t-1}}) || \mathcal{P}(\cdot|{b^{(\Delta-1)}(\cdot|{x_t}), a_{t-1}}))\\
    & \leq \epsilon_\mathcal{P} + L_{\mathcal{P}} \mathcal{W}(b^{\Delta-1}_\theta(\cdot|{x_t}) || b^{\Delta-1}(\cdot|{x_t}))\\
    & \leq (1 + L_{\mathcal{P}}) \epsilon_\mathcal{P} + {L_{\mathcal{P}}}^2 \mathcal{W}(b^{(\Delta-2)}_\theta(\cdot|{x_t}) || b^{(\Delta-2)}(\cdot|{x_t}))\\
    & \leq \cdots\\
    & \leq (1 + \cdots + {L_{\mathcal{P}}}^{\Delta - 2}) \epsilon_\mathcal{P} + {L_{\mathcal{P}}}^{\Delta - 1} \mathcal{W}(b^{(1)}_\theta(\cdot|{x_t}) || b^{(1)}(\cdot|{x_t}))\\
    & \leq ({1 + \cdots + L_\mathcal{P}}^{\Delta - 1}) \epsilon_\mathcal{P} \\
    & = \frac{1 - {L_\mathcal{P}}^{\Delta}}{1 - L_\mathcal{P}} \epsilon_\mathcal{P}.\\
    \end{aligned}
    $$

    Therefore, we have
    $$
    \left|I^\text{recursive}(x_t)\right| \leq \left|I^\text{true}_{\Delta}(x_t)\right| + L_V \frac{1 - {L_\mathcal{P}}^{\Delta}}{1 - L_\mathcal{P}} \epsilon_\mathcal{P}.
    $$

    The above theoretical results can be extended to the stochastic delays $\delta \sim d_\Delta(\cdot)$ easily. 
    The performance difference of the ground-truth belief $I^\text{true}_{\delta}$ is defined as:
    $$
        \begin{aligned}
            I^\text{true}_{\delta}(x_t)
            &= \frac{1}{1-\gamma} \mathop{\mathbb{E}}_{\substack{
            {\hat{s} \sim b_{\delta}(\cdot|\hat{x})}\\
            {\hat{a} \sim\pi_\delta(\cdot|\hat{x})}\\
            {\hat{x} \sim d_{\delta}^{\pi}(\cdot|x_t)}\\
            }}
            \left[{V^{\pi}(\hat{s})} - {Q^{\pi}}({\hat{s}}, \hat{a})\right].\\
        \end{aligned}
    $$
    Finally, we have
    $$
        \begin{aligned}
        \left|I^\text{recursive}(x_t)\right| 
        &\leq \sum_{\delta=1}^{\Delta}d_\Delta(\delta)\left[
        \left|I^\text{true}_{\delta}(x_t)\right| + L_V \frac{1 - {L_\mathcal{P}}^{\delta}}{1 - L_\mathcal{P}} \epsilon_\mathcal{P}
        \right]. \\
        \end{aligned}
    $$
\end{proof}

\begin{proposition}[Performance Degeneration Bound of Directly Forecasting Belief]
\label{appendix:theo_delayed_performance_diff_seq}
For the delay-free policy $\pi$ and the delayed policy $\pi_\Delta$. Given any $x_t \in \mathcal{X}$, the performance degeneration $I^\text{direct}$ of the directly forecasting belief $b_\theta$ can bounded as follows respectively.

For deterministic delays $\Delta$, we have
$$
    \begin{aligned}
    \left|I^\text{direct}(x_t)\right| 
    &\leq \left|I^\text{true}_{\Delta}(x_t)\right| + L_V \epsilon_{direct}. \\
    \end{aligned}
$$
For stochastic delays $\delta \sim d_\Delta(\cdot)$, we have
$$
    \begin{aligned}
    \left|I^\text{direct}(x_t)\right| 
    &\leq \mathop{\mathbb{E}}_{\delta \sim d_\Delta(\cdot)}\left[
    \left|I^\text{true}_{\delta}(x_t)\right|\right] + L_V \epsilon_{direct}. \\
    \end{aligned}
$$    
\end{proposition}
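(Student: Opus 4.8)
The plan is to mirror the proof of \Cref{appendix:theo_delayed_performance_diff_single} up to the point where the belief approximation error enters, and then replace the recursively-expanded Wasserstein bound with the single bound supplied by \Cref{definition:sequence_modeling_belief_error}. First I would write the performance degeneration as
$$
I^\text{direct}(x_t) = \mathop{\mathbb{E}}_{s_t \sim b_\theta(\cdot|x_t)}\left[V^\pi(s_t)\right] - V^{\pi_\Delta}(x_t),
$$
insert the true belief by adding and subtracting $\mathop{\mathbb{E}}_{s_t\sim b(\cdot|x_t)}[V^\pi(s_t)]$, and recognize the term $\mathop{\mathbb{E}}_{s_t\sim b}[V^\pi(s_t)] - V^{\pi_\Delta}(x_t)$ as $I^\text{true}_\Delta(x_t)$ (matching the identity used in the proof of \Cref{appendix:theo_delayed_performance_diff_single}, equivalent to the advantage form of \Cref{lemma:performance_diff_true_belief} via the performance difference lemma). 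The triangle inequality then gives
$$
\left|I^\text{direct}(x_t)\right| \le \left|I^\text{true}_\Delta(x_t)\right| + \left|\mathop{\mathbb{E}}_{s_t\sim b_\theta}[V^\pi(s_t)] - \mathop{\mathbb{E}}_{s_t\sim b}[V^\pi(s_t)]\right|.
$$

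Next I would control the second term using the Lipschitz continuity of the value function (\Cref{lemma:lc_v}), bounding it by $L_V \, \mathcal{W}(b_\theta(\cdot|x_t) || b(\cdot|x_t))$. This is exactly the step that, in the recursive theorem, forced the costly geometric expansion of the Wasserstein distance across the $\Delta$ forward-prediction steps. Here, by contrast, \Cref{definition:sequence_modeling_belief_error} supplies $\mathcal{W}(b(\cdot|x_t) || b_\theta(\cdot|x_t)) \le \epsilon_{direct}$ directly, since the full belief is $b^{(\Delta)}$ and $\epsilon_{direct} = \max_{i} \mathcal{W}(b^{(i)} || b^{(i)}_\theta)$ dominates the $i=\Delta$ term. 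Substituting yields the deterministic bound $|I^\text{direct}(x_t)| \le |I^\text{true}_\Delta(x_t)| + L_V \epsilon_{direct}$.

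For the stochastic case I would repeat the argument for each fixed horizon $\delta$: the same decomposition produces $|I^\text{true}_\delta(x_t)| + L_V \, \mathcal{W}(b^{(\delta)} || b^{(\delta)}_\theta)$, and since $\epsilon_{direct}$ is a maximum over all horizons it uniformly bounds every $\mathcal{W}(b^{(\delta)} || b^{(\delta)}_\theta)$. Taking the expectation over $\delta \sim d_\Delta(\cdot)$ and pulling the constant $L_V \epsilon_{direct}$ out of the expectation gives the stated bound.

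The proof presents essentially no obstacle — its entire content is that the direct strategy lets us invoke the assumed error $\epsilon_{direct}$ in place of the recursive factor $\frac{1 - L_\mathcal{P}^\Delta}{1 - L_\mathcal{P}}\epsilon_\mathcal{P}$. The only point worth checking carefully is that $\epsilon_{direct}$, defined as a maximum over intermediate prediction indices, genuinely upper-bounds the Wasserstein distance of the \emph{full}-horizon belief (and of every $\delta$-horizon belief in the stochastic case); this is immediate from the definition of the maximum, so no compounding of per-step errors ever arises.
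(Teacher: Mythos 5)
Your proposal is correct and follows exactly the route the paper takes: its proof is the one-line instruction ``apply \Cref{definition:sequence_modeling_belief_error} and the proof of \Cref{appendix:theo_delayed_performance_diff_single},'' which unpacks into precisely your argument --- the same decomposition through the true belief $b$, the same appeal to Lipschitz continuity of $V^\pi$ to get the factor $L_V\,\mathcal{W}(b_\theta\|b)$, and the substitution of the assumed bound $\epsilon_{direct}$ in place of the recursive Wasserstein expansion. Your explicit check that $\epsilon_{direct}$, being a maximum over horizons, dominates the full-horizon and every $\delta$-horizon distance is the only detail the paper leaves implicit, and you handle it correctly.
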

\begin{proof}
Applying \Cref{definition:sequence_modeling_belief_error} and the proof of \Cref{appendix:theo_delayed_performance_diff_single}.
\end{proof}

\begin{proposition}[Performance Degeneration Comparison]
\label{appendix:proposition_performance_degeneration_comparison}
    Directly forecasting belief could achieve a better performance guarantee 
    $\left|I^\text{direct}(x_t)\right| \leq \left|I^\text{recursive}(x_t)\right|$, 
    if we have
    $$
    \epsilon_{direct} \leq \frac{1 - {L_\mathcal{P}}^{\Delta}}{1 - L_\mathcal{P}} \epsilon_\mathcal{P}
    $$
    for deterministic delays $\Delta$, and
    $$
    \epsilon_{direct}
    \leq 
    \mathop{\mathbb{E}}_{\delta \sim d_\Delta(\cdot)} \left[\frac{1 - {L_\mathcal{P}}^{\delta}}{1 - L_\mathcal{P}}\right] \epsilon_\mathcal{P}
    $$
    for stochastic delays $\delta \sim d_\Delta(\cdot)$.
\end{proposition}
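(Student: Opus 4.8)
The plan is to compare the two performance-guarantee upper bounds that have already been derived, rather than the raw quantities $I^\text{direct}(x_t)$ and $I^\text{recursive}(x_t)$ themselves. Concretely, \Cref{appendix:theo_delayed_performance_diff_single} supplies the bound on $|I^\text{recursive}(x_t)|$ and \Cref{appendix:theo_delayed_performance_diff_seq} supplies the bound on $|I^\text{direct}(x_t)|$; both decompose into a common ground-truth performance-difference term plus a belief-error term scaled by $L_V$. Since the ground-truth term is identical across the two bounds, establishing the claimed guarantee reduces to comparing only the belief-error terms.

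For deterministic delays, I would place the two right-hand sides side by side, namely $|I^\text{true}_{\Delta}(x_t)| + L_V \epsilon_{direct}$ for the direct bound and $|I^\text{true}_{\Delta}(x_t)| + L_V \frac{1 - L_\mathcal{P}^{\Delta}}{1 - L_\mathcal{P}}\epsilon_\mathcal{P}$ for the recursive bound. Cancelling the shared term $|I^\text{true}_{\Delta}(x_t)|$ and dividing through by $L_V > 0$, the inequality between the two guarantees holds exactly when $\epsilon_{direct} \leq \frac{1 - L_\mathcal{P}^{\Delta}}{1 - L_\mathcal{P}}\epsilon_\mathcal{P}$, which is precisely the stated condition.

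For stochastic delays $\delta \sim d_\Delta(\cdot)$, I would first apply linearity of expectation to the recursive bound, rewriting it as $\mathbb{E}_{\delta}[|I^\text{true}_{\delta}(x_t)|] + L_V \mathbb{E}_{\delta}[\frac{1 - L_\mathcal{P}^{\delta}}{1 - L_\mathcal{P}}]\epsilon_\mathcal{P}$, so that the common factor $L_V \epsilon_\mathcal{P}$ is pulled outside the expectation. The direct bound is $\mathbb{E}_{\delta}[|I^\text{true}_{\delta}(x_t)|] + L_V \epsilon_{direct}$, so after cancelling the shared expectation $\mathbb{E}_{\delta}[|I^\text{true}_{\delta}(x_t)|]$ and dividing by $L_V$, the comparison collapses to $\epsilon_{direct} \leq \mathbb{E}_{\delta}[\frac{1 - L_\mathcal{P}^{\delta}}{1 - L_\mathcal{P}}]\epsilon_\mathcal{P}$, again matching the statement.

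The one subtlety I would be careful to flag is interpretive rather than computational: the inequality $|I^\text{direct}(x_t)| \leq |I^\text{recursive}(x_t)|$ is asserted at the level of the derived upper bounds (the performance guarantees), not between the raw values, since we possess only upper bounds on both quantities and no matching lower bound on $|I^\text{recursive}(x_t)|$. I would therefore state the claim as a comparison of guarantees, make explicit that $L_V = L_Q(1 + L_\pi) > 0$ so that division preserves the inequality direction, and note that because $\frac{1 - L_\mathcal{P}^{\Delta}}{1 - L_\mathcal{P}} = 1 + L_\mathcal{P} + \cdots + L_\mathcal{P}^{\Delta - 1}$ is increasing in $\Delta$, the admissible region for $\epsilon_{direct}$ widens as delays grow, which is the qualitative message the proposition is meant to deliver.
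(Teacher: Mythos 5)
Your proposal is correct and follows essentially the same route as the paper's own proof: both compare the two previously derived upper bounds, cancel the shared ground-truth term $\left|I^\text{true}\right|$, and reduce the claim to the assumed inequality between $\epsilon_{direct}$ and the compounding-error term. Your treatment is in fact slightly more careful than the paper's, since you make explicit that the conclusion is a comparison of guarantees rather than of the raw quantities (the paper glosses this by underbracing the bounds as if they were $\left|I^\text{direct}\right|$ and $\left|I^\text{recursive}\right|$), you note $L_V>0$, and you handle the stochastic case by explicitly splitting the expectation by linearity.
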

\begin{proof}
For deterministic delays $\Delta$, if we have 
$$
\epsilon_{direct} 
\leq 
\frac{1 - {L_\mathcal{P}}^{\Delta}}{1 - L_\mathcal{P}} \epsilon_\mathcal{P},
$$
then it is obvious that we have
$$
\underbrace{\left|I^\text{true}_{\Delta}(x_t)\right| + L_V \epsilon_{direct} }_{\left|I^\text{direct}(x_t)\right|}
\leq 
\underbrace{\left|I^\text{true}_{\Delta}(x_t)\right| + L_V \frac{1 - {L_\mathcal{P}}^{\Delta}}{1 - L_\mathcal{P}} \epsilon_\mathcal{P}}_{\left|I^\text{recursive}(x_t)\right|}.
$$

For stochastic delays $\delta \sim d_\Delta(\cdot)$, if we have 
$$
    \epsilon_{direct} 
    \leq 
    \mathop{\mathbb{E}}_{\delta \sim d_\Delta(\cdot)} \left[\frac{1 - {L_\mathcal{P}}^{\delta}}{1 - L_\mathcal{P}} \epsilon_\mathcal{P}\right],
$$
then it is obvious that we have
$$
\underbrace{\left|I^\text{true}_{\Delta}(x_t)\right| + L_V \epsilon_{direct}}_{\left|I^\text{direct}(x_t)\right|}
\leq 
\underbrace{\left|I^\text{true}_{\Delta}(x_t)\right| + L_V \mathop{\mathbb{E}}_{\delta \sim d_\Delta(\cdot)} \left[\frac{1 - {L_\mathcal{P}}^{\delta}}{1 - L_\mathcal{P}} \epsilon_\mathcal{P}\right]}_{\left|I^\text{recursive}(x_t)\right|}.
$$
\end{proof}

\clearpage
\section{Additional Results on MuJoCo.}
\label{appendix:sec:additional_results}
We report additional experimental results on MuJoCo, including more different patterns of deterministic and stochastic delays in \Cref{appendix:table:deterministic_delays} and \Cref{appendix:table:stochastic_delays}, respectively.

\begin{table*}[h]
\centering
\caption{Performance on MuJoCo with Deterministic Delays. The best performance is underlined, the best belief-based method is in \textcolor{red}{red}.\label{appendix:table:deterministic_delays}}
\scalebox{0.95}{
\begin{tabular}{c|c|ccc|cccc}
\hline
                                 &                                                                             & \multicolumn{3}{c|}{Augmentation-based}                                                                    & \multicolumn{4}{c}{Belief-based}                                                                           \\
\multirow{-2}{*}{Task}           & \multirow{-2}{*}{\begin{tabular}[c]{@{}c@{}}Delays\end{tabular}} & A-SAC             & BPQL                                     & ADRL                                     & DATS              & D-Dreamer              & D-SAC             & DFBT-SAC (ours)                                    \\ \hline
                                 & $8$                                                                           & $0.10_{\pm 0.01}$ & { $0.40_{\pm 0.04}$} & \underline{$0.44_{\pm 0.03}$}                        & $0.08_{\pm 0.01}$ & $0.08_{\pm 0.01}$ & $0.12_{\pm 0.06}$ & \textcolor{red}{$0.35_{\pm 0.12}$}                        \\
                                 & $16$                                                                          & $0.06_{\pm 0.03}$ & \underline{$0.42_{\pm 0.02}$} & $0.29_{\pm 0.10}$                        & $0.08_{\pm 0.01}$ & $0.08_{\pm 0.01}$ & $0.11_{\pm 0.10}$ & \textcolor{red}{$0.40_{\pm 0.05}$}                        \\
                                 & $32$                                                                          & $0.02_{\pm 0.02}$ & $0.40_{\pm 0.03}$                        & $0.26_{\pm 0.04}$                        & $0.11_{\pm 0.04}$ & $0.08_{\pm 0.00}$ & $0.08_{\pm 0.02}$ & \underline{\textcolor{red}{$0.42_{\pm 0.03}$}} \\
                                 & $64$                                                                          & $0.01_{\pm 0.01}$ & $0.15_{\pm 0.12}$                        & $0.16_{\pm 0.02}$                        & $0.12_{\pm 0.05}$ & $0.11_{\pm 0.05}$ & $0.12_{\pm 0.07}$ & \underline{\textcolor{red}{$0.39_{\pm 0.06}$}} \\
\multirow{-5}{*}{HalfCheetah-v2} & $128$                                                                         & $0.04_{\pm 0.06}$ & $0.08_{\pm 0.13}$                        & $0.14_{\pm 0.02}$                        & $0.10_{\pm 0.08}$ & $0.15_{\pm 0.05}$ & $0.09_{\pm 0.04}$ & \underline{\textcolor{red}{$0.41_{\pm 0.03}$}} \\ \hline
                                 & $8$                                                                           & $0.61_{\pm 0.31}$ & $0.87_{\pm 0.09}$                        & \underline{$0.95_{\pm 0.16}$} & $0.41_{\pm 0.31}$ & $0.11_{\pm 0.01}$ & $0.16_{\pm 0.05}$ & \textcolor{red}{$0.77_{\pm 0.18}$}                        \\
                                 & $16$                                                                          & $0.17_{\pm 0.06}$ & $0.92_{\pm 0.16}$                        & \underline{$0.94_{\pm 0.17}$} & $0.24_{\pm 0.31}$ & $0.19_{\pm 0.13}$ & $0.11_{\pm 0.01}$ & \textcolor{red}{$0.89_{\pm 0.13}$}                        \\
                                 & $32$                                                                          & $0.11_{\pm 0.02}$ & \underline{$0.89_{\pm 0.14}$} & $0.73_{\pm 0.20}$                        & $0.07_{\pm 0.04}$ & $0.11_{\pm 0.05}$ & $0.11_{\pm 0.01}$ & \textcolor{red}{$0.68_{\pm 0.20}$}                        \\
                                 & $64$                                                                          & $0.05_{\pm 0.00}$ & \underline{$0.23_{\pm 0.30}$} & $0.11_{\pm 0.03}$                        & $0.13_{\pm 0.00}$ & $0.09_{\pm 0.05}$ & $0.08_{\pm 0.02}$ & \textcolor{red}{$0.19_{\pm 0.02}$}                        \\
\multirow{-5}{*}{Hopper-v2}      & $128$                                                                         & $0.04_{\pm 0.01}$ & $0.08_{\pm 0.02}$                        & $0.07_{\pm 0.01}$                        & $0.08_{\pm 0.01}$ & $0.09_{\pm 0.03}$ & $0.06_{\pm 0.01}$ & \underline{\textcolor{red}{$0.20_{\pm 0.03}$}} \\ \hline
                                 & $8$                                                                           & $0.44_{\pm 0.26}$ & \underline{$1.07_{\pm 0.02}$} & $0.97_{\pm 0.10}$                        & $0.13_{\pm 0.05}$ & $0.11_{\pm 0.06}$ & $0.09_{\pm 0.05}$ & \textcolor{red}{$0.99_{\pm 0.03}$}                        \\
                                 & $16$                                                                          & $0.13_{\pm 0.02}$ & \underline{$0.96_{\pm 0.05}$} & $0.67_{\pm 0.21}$                        & $0.06_{\pm 0.10}$ & $0.12_{\pm 0.03}$ & $0.08_{\pm 0.04}$ & \textcolor{red}{$0.95_{\pm 0.11}$}                        \\
                                 & $32$                                                                          & $0.10_{\pm 0.02}$ & $0.37_{\pm 0.25}$                        & $0.16_{\pm 0.08}$                        & $0.02_{\pm 0.03}$ & $0.08_{\pm 0.05}$ & $0.08_{\pm 0.02}$ & \underline{\textcolor{red}{$0.64_{\pm 0.10}$}} \\
                                 & $64$                                                                          & $0.07_{\pm 0.01}$ & $0.14_{\pm 0.03}$                        & $0.10_{\pm 0.01}$                        & $0.01_{\pm 0.02}$ & $0.08_{\pm 0.03}$ & $0.08_{\pm 0.04}$ & \underline{\textcolor{red}{$0.41_{\pm 0.14}$}} \\
\multirow{-5}{*}{Walker2d-v2}    & $128$                                                                         & $0.06_{\pm 0.00}$ & $0.07_{\pm 0.03}$                        & $0.08_{\pm 0.01}$                        & $0.02_{\pm 0.02}$ & $0.08_{\pm 0.05}$ & $0.11_{\pm 0.06}$ & \underline{\textcolor{red}{$0.40_{\pm 0.08}$}} \\ \hline
\end{tabular}
}
\end{table*}

\begin{table*}[h]
\centering
\caption{Performance on MuJoCo with Stochastic Delays. The best performance is underlined, and the best belief-based method is in \textcolor{red}{red}.\label{appendix:table:stochastic_delays}}
\scalebox{0.95}{
\begin{tabular}{c|c|ccc|cccc}
\hline
                                 &                                                                             & \multicolumn{3}{c|}{Augmentation-based}                                                                    & \multicolumn{4}{c}{Belief-based}                                                                           \\
\multirow{-2}{*}{Task}           & \multirow{-2}{*}{\begin{tabular}[c]{@{}c@{}}Delays\end{tabular}} & A-SAC             & BPQL                                     & ADRL                                     & DATS              & D-Dreamer              & D-SAC             & DFBT-SAC (ours)                                    \\ \hline
                                 & $U(1,8)$                                                                       & $0.09_{\pm 0.01}$ & $0.21_{\pm 0.07}$                        & \multicolumn{1}{c|}{$0.17_{\pm 0.07}$} & $0.09_{\pm 0.03}$ & $0.02_{\pm 0.01}$ & $0.03_{\pm 0.01}$ & \underline{ \textcolor{red}{$0.37_{\pm 0.12}$}} \\
                                 & $U(1,16)$                                                                      & $0.04_{\pm 0.04}$ & $0.31_{\pm 0.08}$                        & \multicolumn{1}{c|}{$0.24_{\pm 0.04}$} & $0.13_{\pm 0.03}$ & $0.03_{\pm 0.02}$ & $0.01_{\pm 0.01}$ & \underline{ \textcolor{red}{$0.37_{\pm 0.06}$}} \\
                                 & $U(1,32)$                                                                      & $0.01_{\pm 0.00}$ & \underline{ $0.33_{\pm 0.07}$} & \multicolumn{1}{c|}{$0.23_{\pm 0.02}$} & $0.11_{\pm 0.04}$ & $0.02_{\pm 0.00}$ & $0.01_{\pm 0.01}$ & \textcolor{red}{$0.31_{\pm 0.16}$}                        \\
                                 & $U(1,64)$                                                                      & $0.06_{\pm 0.11}$ & $0.23_{\pm 0.06}$                        & \multicolumn{1}{c|}{$0.17_{\pm 0.02}$} & $0.16_{\pm 0.03}$ & $0.04_{\pm 0.03}$ & $0.01_{\pm 0.00}$ & \underline{ \textcolor{red}{$0.40_{\pm 0.06}$}} \\
\multirow{-5}{*}{HalfCheetah-v2} & $U(1,128)$                                                                     & $0.01_{\pm 0.01}$ & $0.03_{\pm 0.03}$                        & \multicolumn{1}{c|}{$0.15_{\pm 0.02}$} & $0.16_{\pm 0.03}$ & $0.16_{\pm 0.00}$ & $0.02_{\pm 0.00}$ & \underline{ \textcolor{red}{$0.39_{\pm 0.04}$}} \\ \hline
                                 & $U(1,8)$                                                                       & $0.17_{\pm 0.05}$ & $0.20_{\pm 0.04}$                        & \multicolumn{1}{c|}{$0.18_{\pm 0.04}$} & $0.04_{\pm 0.01}$ & $0.07_{\pm 0.05}$ & $0.14_{\pm 0.04}$ & \underline{ \textcolor{red}{$0.86_{\pm 0.18}$}} \\
                                 & $U(1,16)$                                                                      & $0.08_{\pm 0.02}$ & $0.11_{\pm 0.11}$                        & \multicolumn{1}{c|}{$0.07_{\pm 0.04}$} & $0.04_{\pm 0.01}$ & $0.03_{\pm 0.01}$ & $0.04_{\pm 0.02}$ & \underline{ \textcolor{red}{$0.89_{\pm 0.17}$}} \\
                                 & $U(1,32)$                                                                      & $0.05_{\pm 0.01}$ & $0.07_{\pm 0.09}$                        & \multicolumn{1}{c|}{$0.05_{\pm 0.01}$} & $0.05_{\pm 0.01}$ & $0.04_{\pm 0.01}$ & $0.03_{\pm 0.01}$ & \underline{ \textcolor{red}{$0.43_{\pm 0.21}$}} \\
                                 & $U(1,64)$                                                                      & $0.03_{\pm 0.01}$ & $0.03_{\pm 0.01}$                        & \multicolumn{1}{c|}{$0.03_{\pm 0.01}$} & $0.05_{\pm 0.01}$ & $0.03_{\pm 0.01}$ & $0.03_{\pm 0.01}$ & \underline{ \textcolor{red}{$0.17_{\pm 0.05}$}} \\
\multirow{-5}{*}{Hopper-v2}      & $U(1,128)$                                                                     & $0.03_{\pm 0.01}$ & $0.04_{\pm 0.01}$                        & \multicolumn{1}{c|}{$0.04_{\pm 0.02}$} & $0.05_{\pm 0.00}$ & $0.03_{\pm 0.01}$ & $0.03_{\pm 0.00}$ & \underline{ \textcolor{red}{$0.14_{\pm 0.01}$}} \\ \hline
                                 & $U(1,8)$                                                                       & $0.36_{\pm 0.24}$ & $0.40_{\pm 0.32}$                        & \multicolumn{1}{c|}{$0.41_{\pm 0.15}$} & $0.07_{\pm 0.01}$ & $0.07_{\pm 0.05}$ & $0.12_{\pm 0.04}$ & \underline{ \textcolor{red}{$1.11_{\pm 0.10}$}} \\
                                 & $U(1,16)$                                                                      & $0.19_{\pm 0.10}$ & $0.27_{\pm 0.17}$                        & \multicolumn{1}{c|}{$0.24_{\pm 0.10}$} & $0.08_{\pm 0.02}$ & $0.13_{\pm 0.08}$ & $0.07_{\pm 0.02}$ & \underline{ \textcolor{red}{$0.99_{\pm 0.06}$}} \\
                                 & $U(1,32)$                                                                      & $0.12_{\pm 0.03}$ & $0.16_{\pm 0.04}$                        & \multicolumn{1}{c|}{$0.11_{\pm 0.05}$} & $0.09_{\pm 0.04}$ & $0.12_{\pm 0.04}$ & $0.05_{\pm 0.02}$ & \underline{ \textcolor{red}{$0.67_{\pm 0.15}$}} \\
                                 & $U(1,64)$                                                                      & $0.08_{\pm 0.02}$ & $0.09_{\pm 0.08}$                        & \multicolumn{1}{c|}{$0.06_{\pm 0.01}$} & $0.08_{\pm 0.04}$ & $0.15_{\pm 0.05}$ & $0.06_{\pm 0.03}$ & \underline{ \textcolor{red}{$0.41_{\pm 0.10}$}} \\
\multirow{-5}{*}{Walker2d-v2}    & $U(1,128)$                                                                     & $0.06_{\pm 0.01}$ & $0.06_{\pm 0.06}$                        & $0.04_{\pm 0.02}$                      & $0.10_{\pm 0.04}$ & $0.15_{\pm 0.07}$ & $0.03_{\pm 0.04}$ & \underline{ \textcolor{red}{$0.30_{\pm 0.13}$}} \\ \hline
\end{tabular}
}
\end{table*}

\clearpage
\section{Learning Curves on MuJoCo.}
\label{appendix:sec:learning_curves}
We report learning curves on MuJoCo with different patterns of deterministic and stochastic delays in \Cref{appendix:fig_deterministic} and \Cref{appendix:fig_stochastic}, respectively.

\begin{figure}[h]
    \centering
    \centerline{
        \subfigure[HalfCheetah-v2 (8 Delays)]{\includegraphics[width=0.33\linewidth]{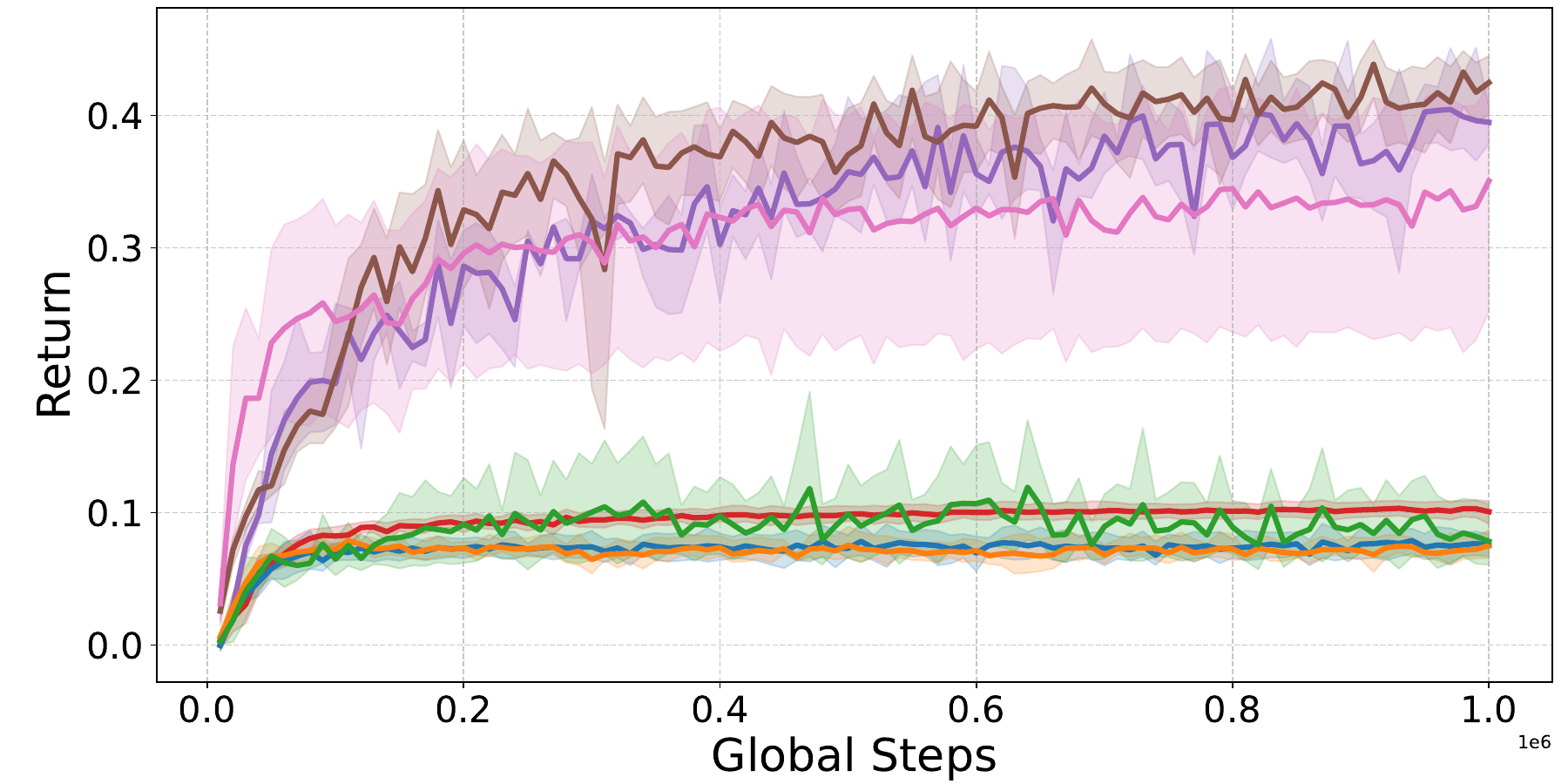}}
        \subfigure[Hopper-v2 (8 Delays)]{\includegraphics[width=0.33\linewidth]{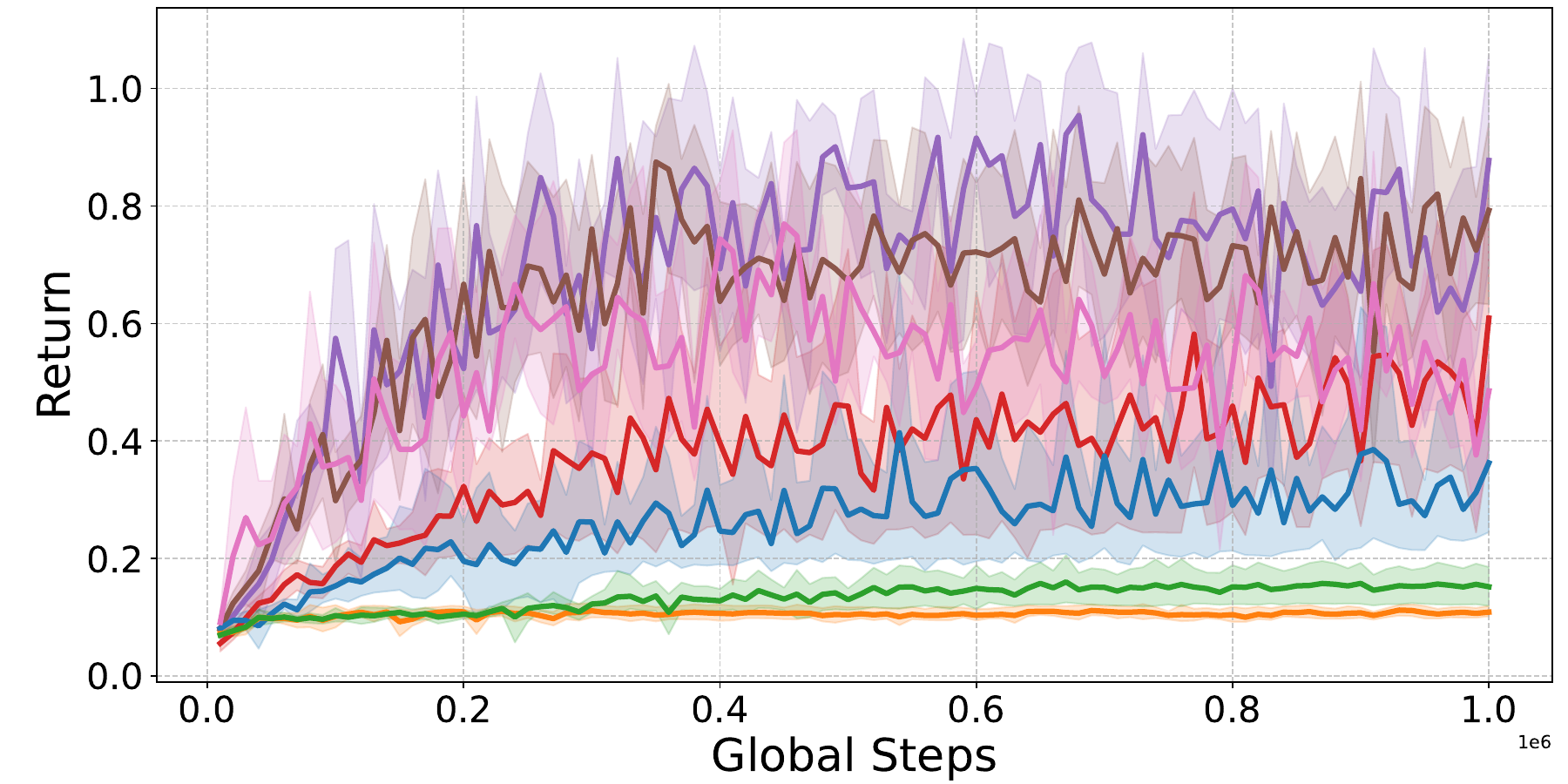}}
        \subfigure[Walker2d-v2 (8 Delays)]{\includegraphics[width=0.33\linewidth]{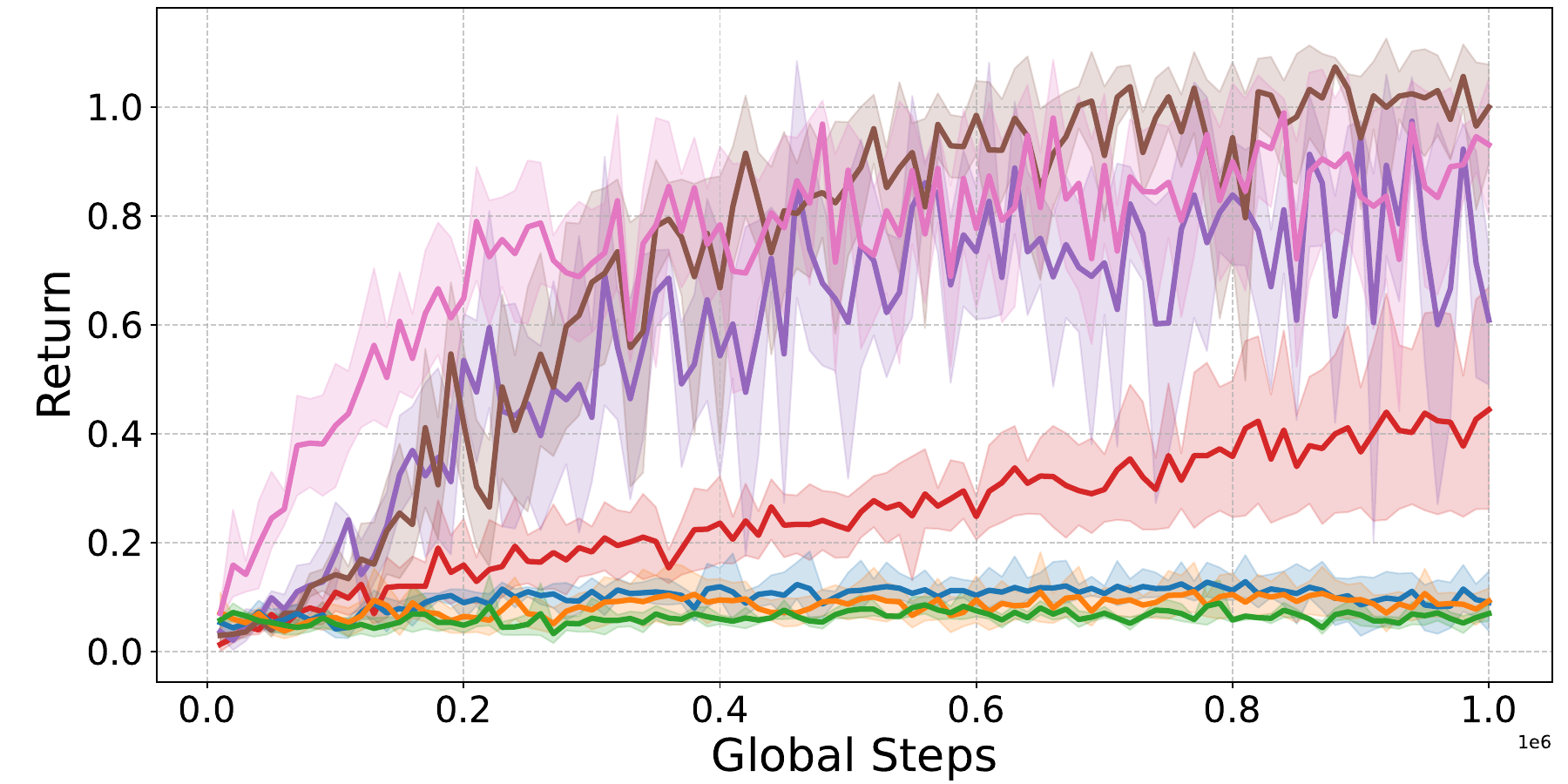}}
    }
    \centerline{
        \subfigure[HalfCheetah-v2 (16 Delays)]{\includegraphics[width=0.33\linewidth]{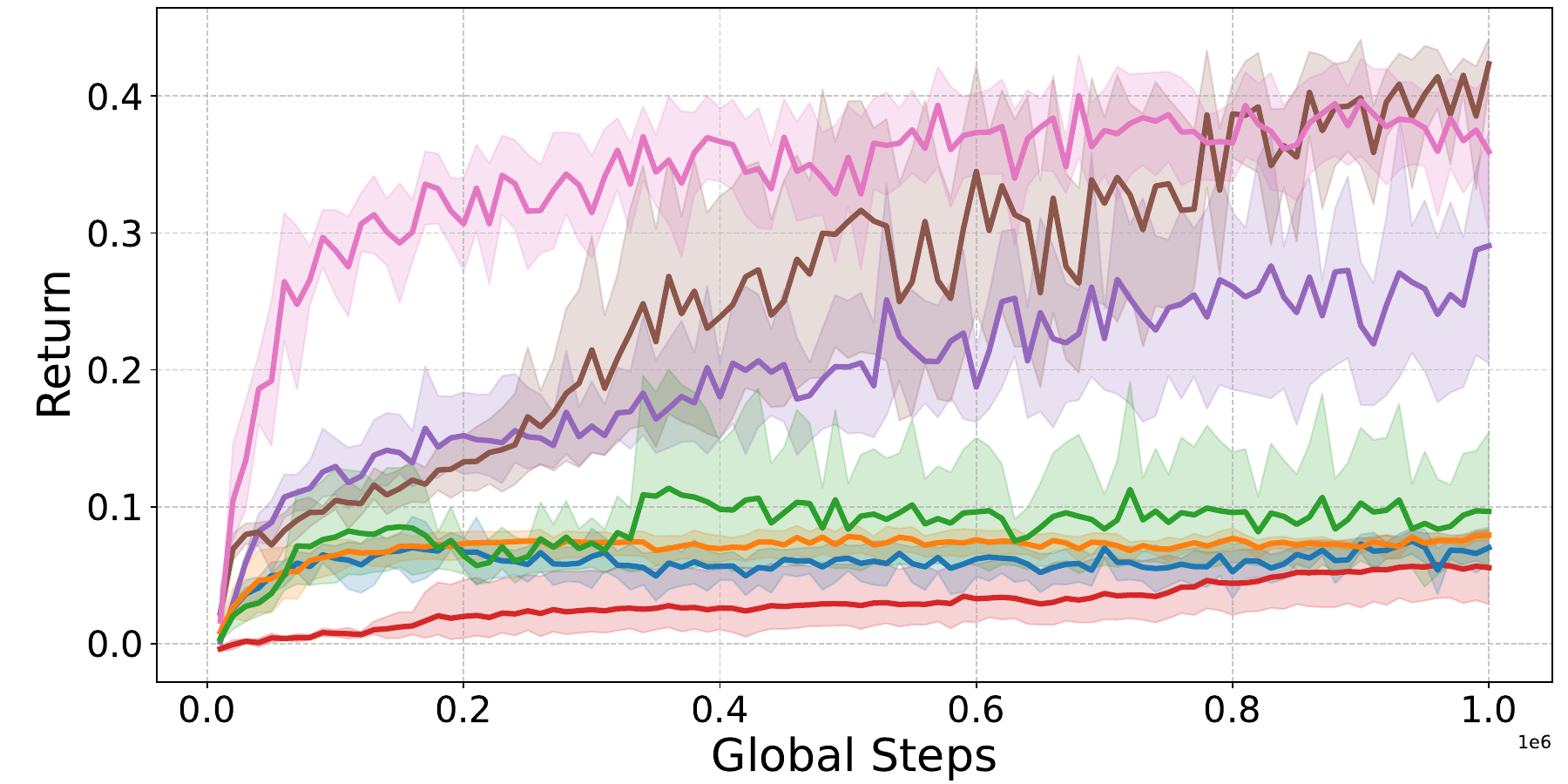}}
        \subfigure[Hopper-v2 (16 Delays)]{\includegraphics[width=0.33\linewidth]{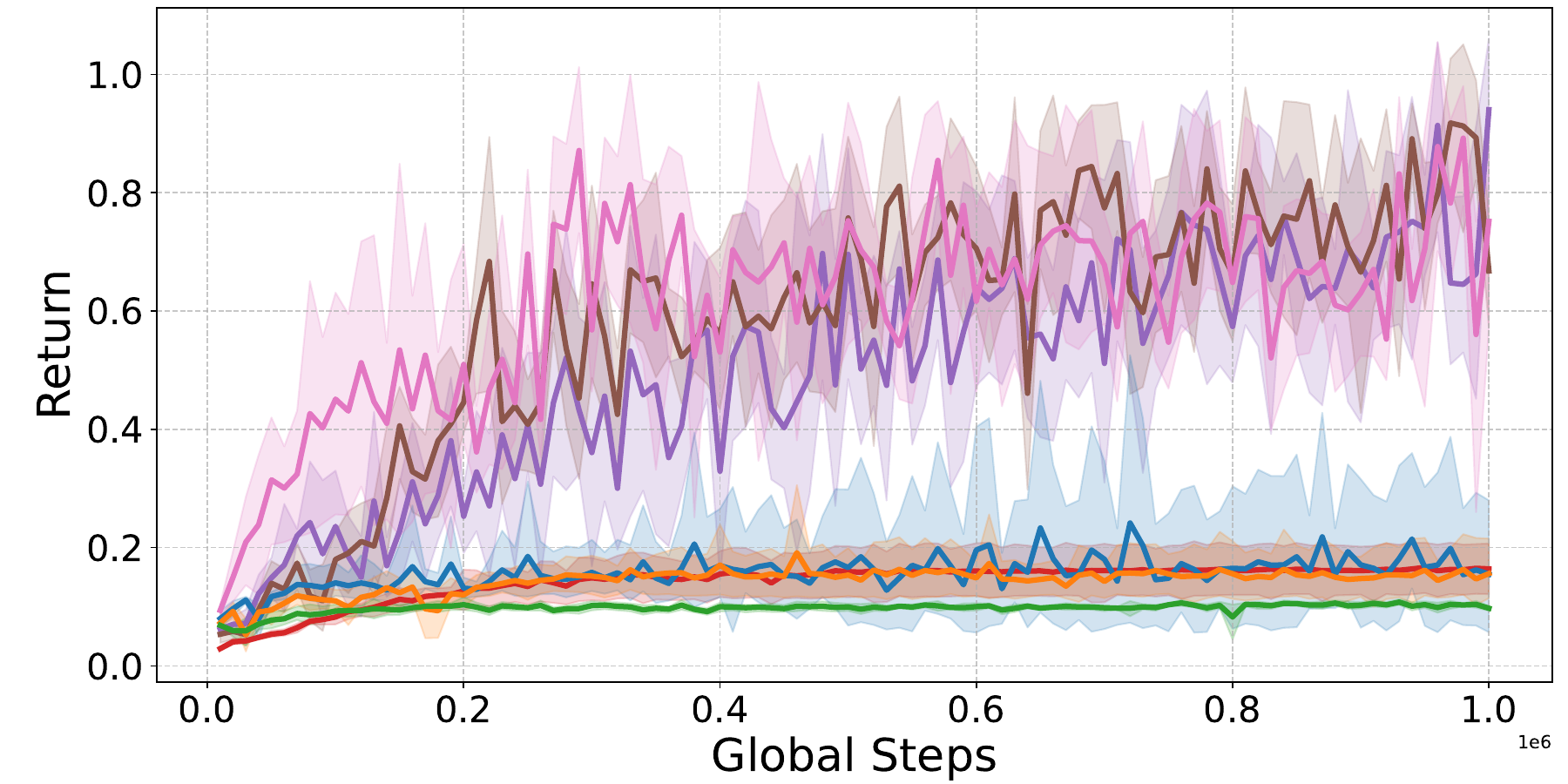}}
        \subfigure[Walker2d-v2 (16 Delays)]{\includegraphics[width=0.33\linewidth]{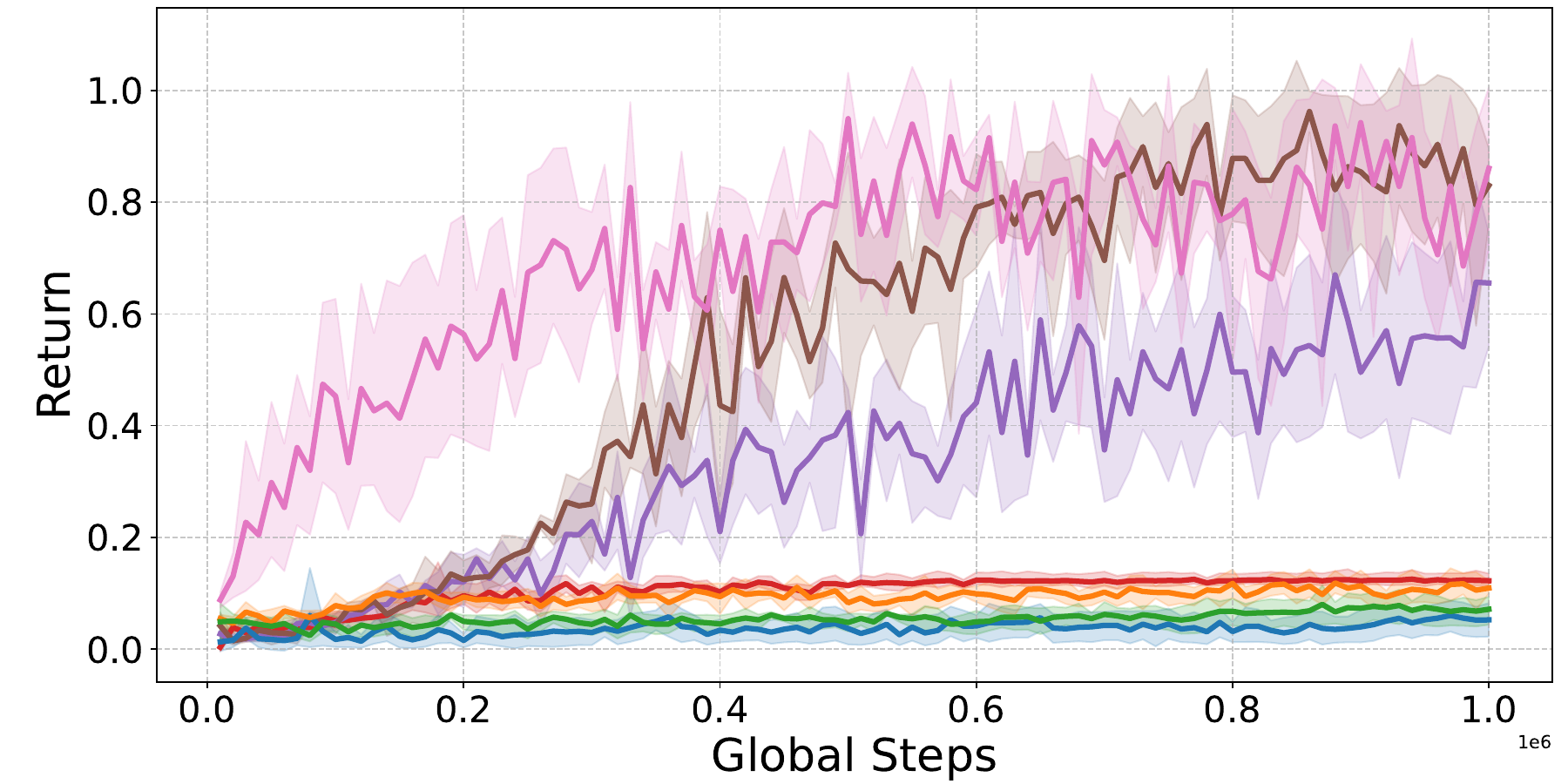}}
    }
    \centerline{
        \subfigure[HalfCheetah-v2 (32 Delays)]{\includegraphics[width=0.33\linewidth]{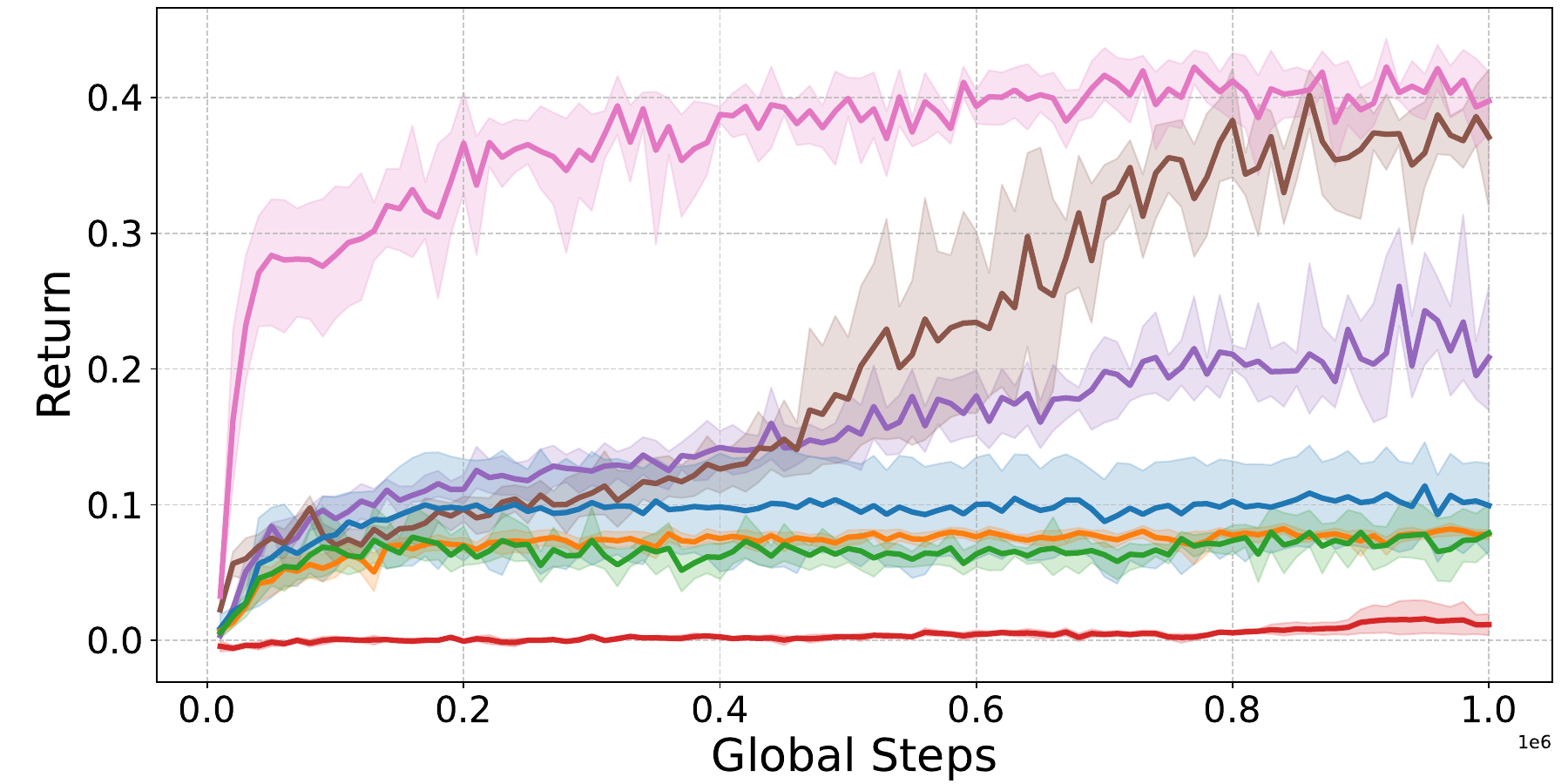}}
        \subfigure[Hopper-v2 (32 Delays)]{\includegraphics[width=0.33\linewidth]{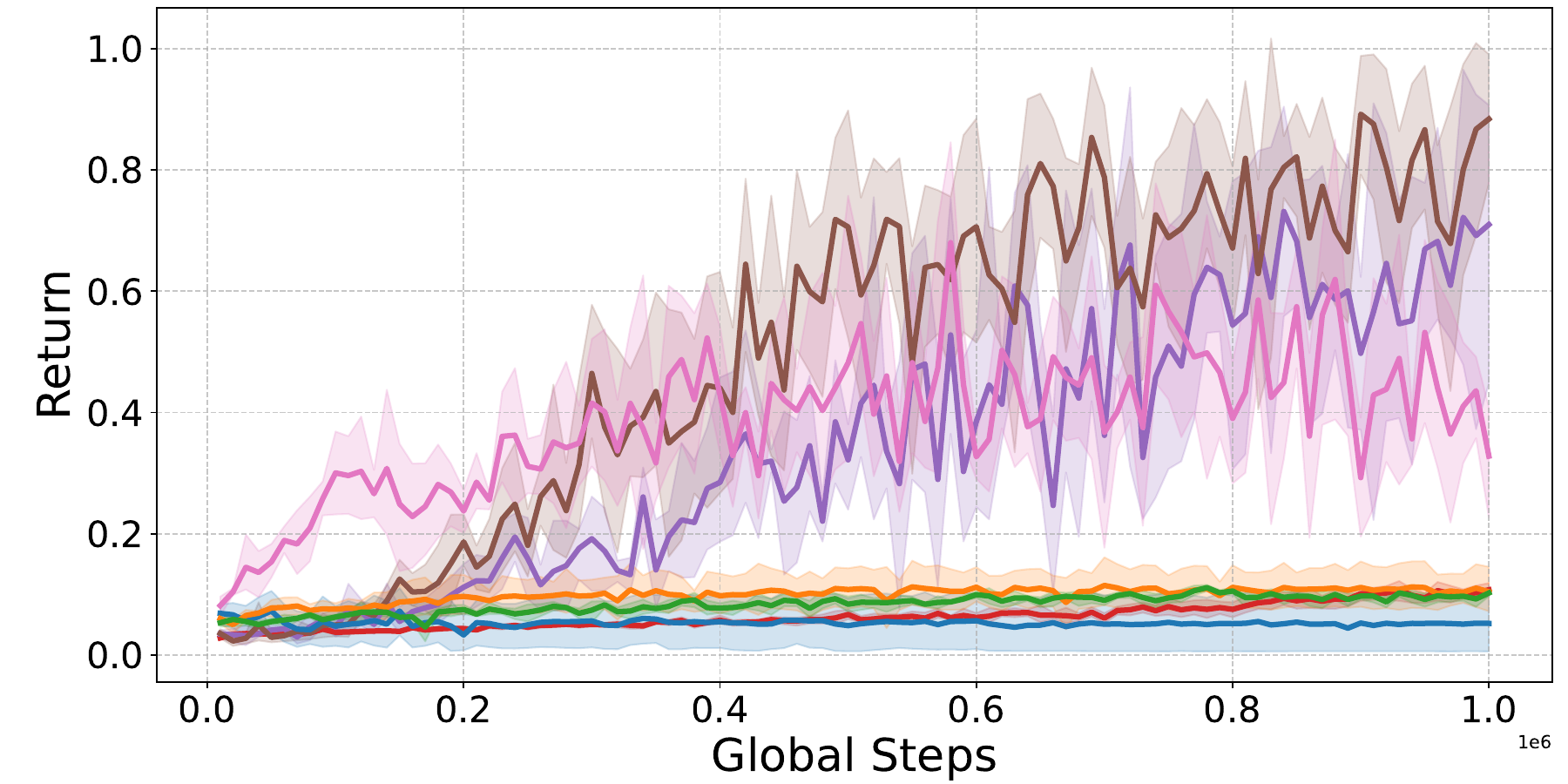}}
        \subfigure[Walker2d-v2 (32 Delays)]{\includegraphics[width=0.33\linewidth]{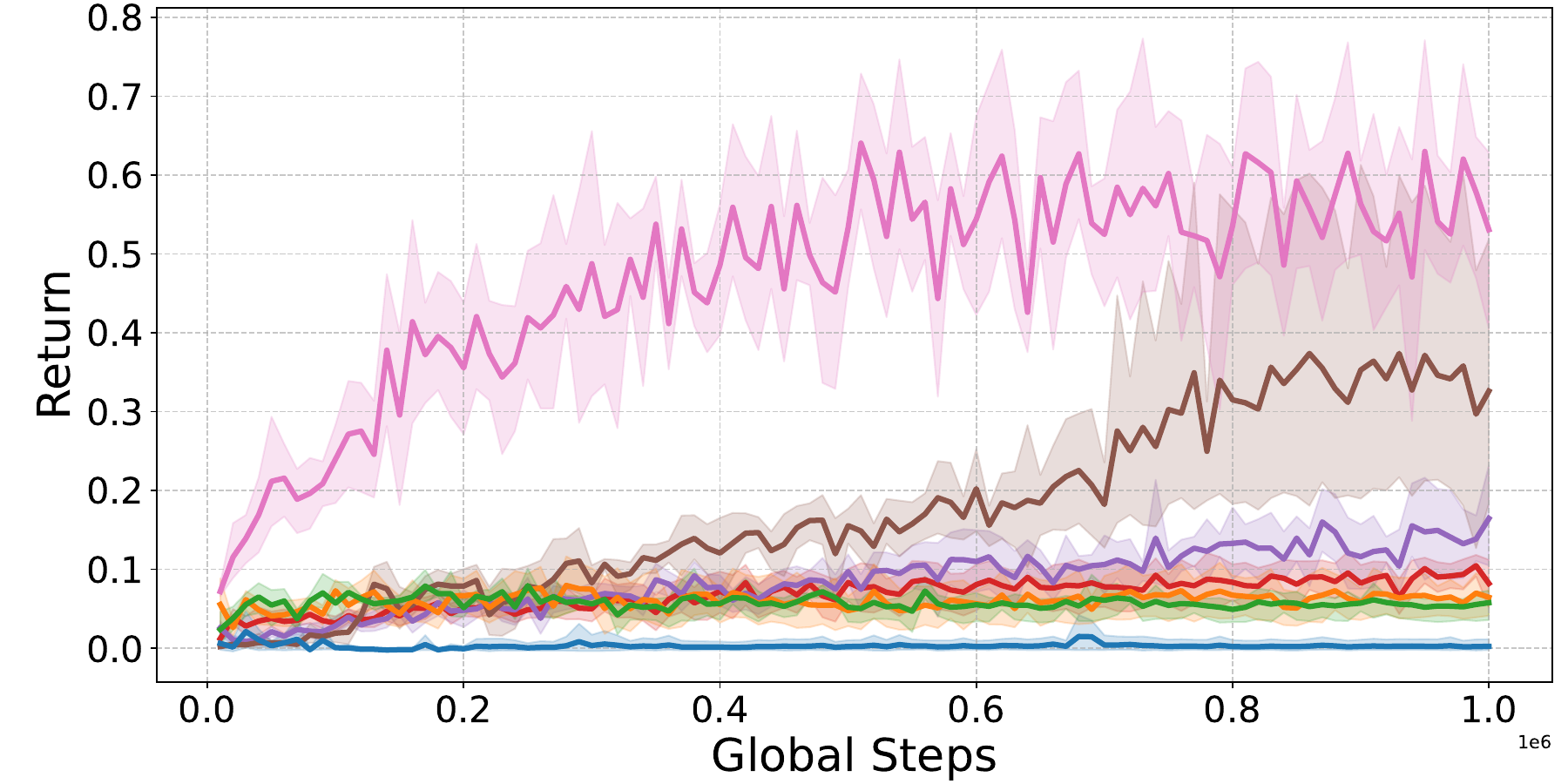}}
    }
    \centerline{
        \subfigure[HalfCheetah-v2 (64 Delays)]{\includegraphics[width=0.33\linewidth]{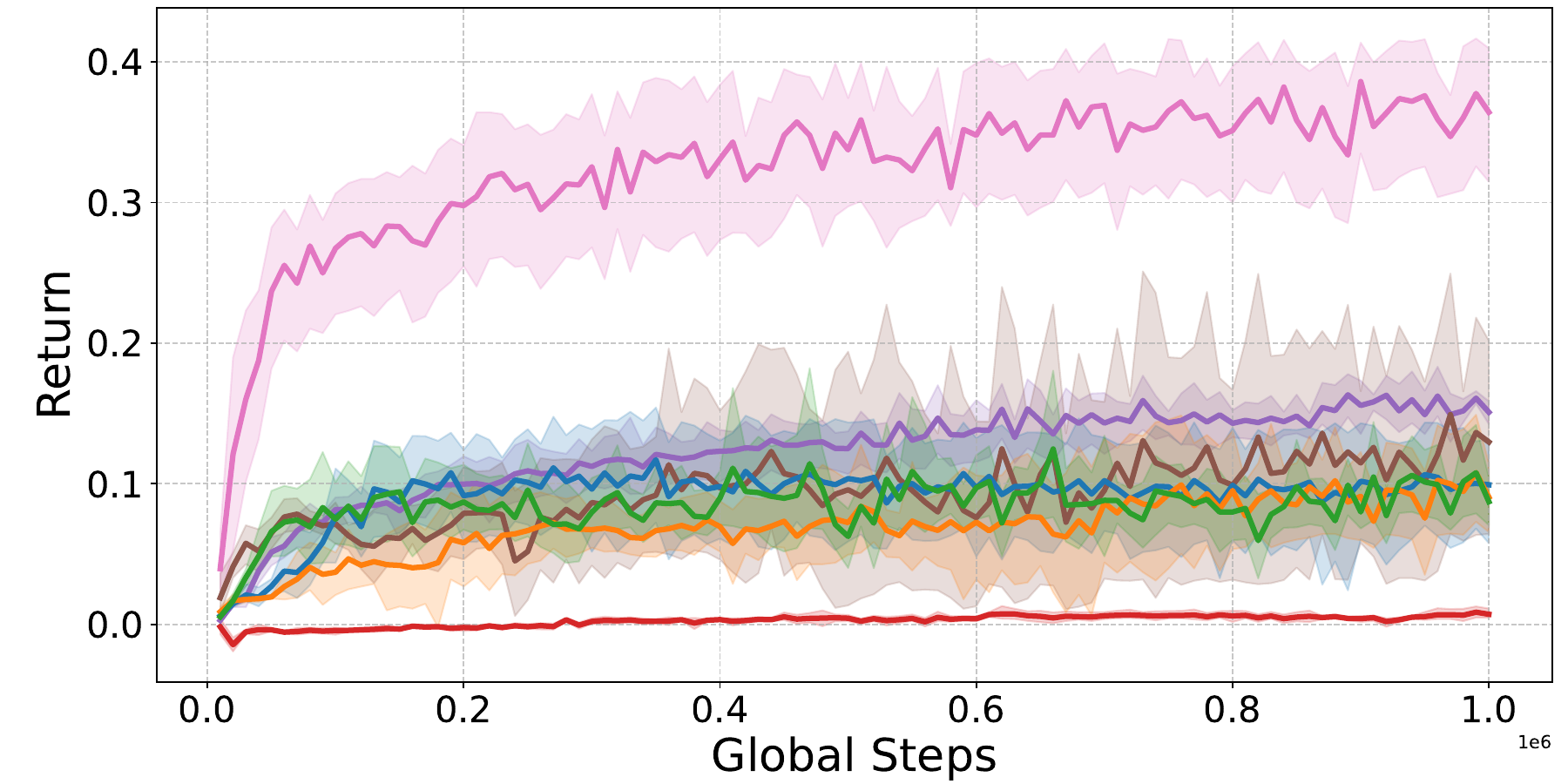}}
        \subfigure[Hopper-v2 (64 Delays)]{\includegraphics[width=0.33\linewidth]{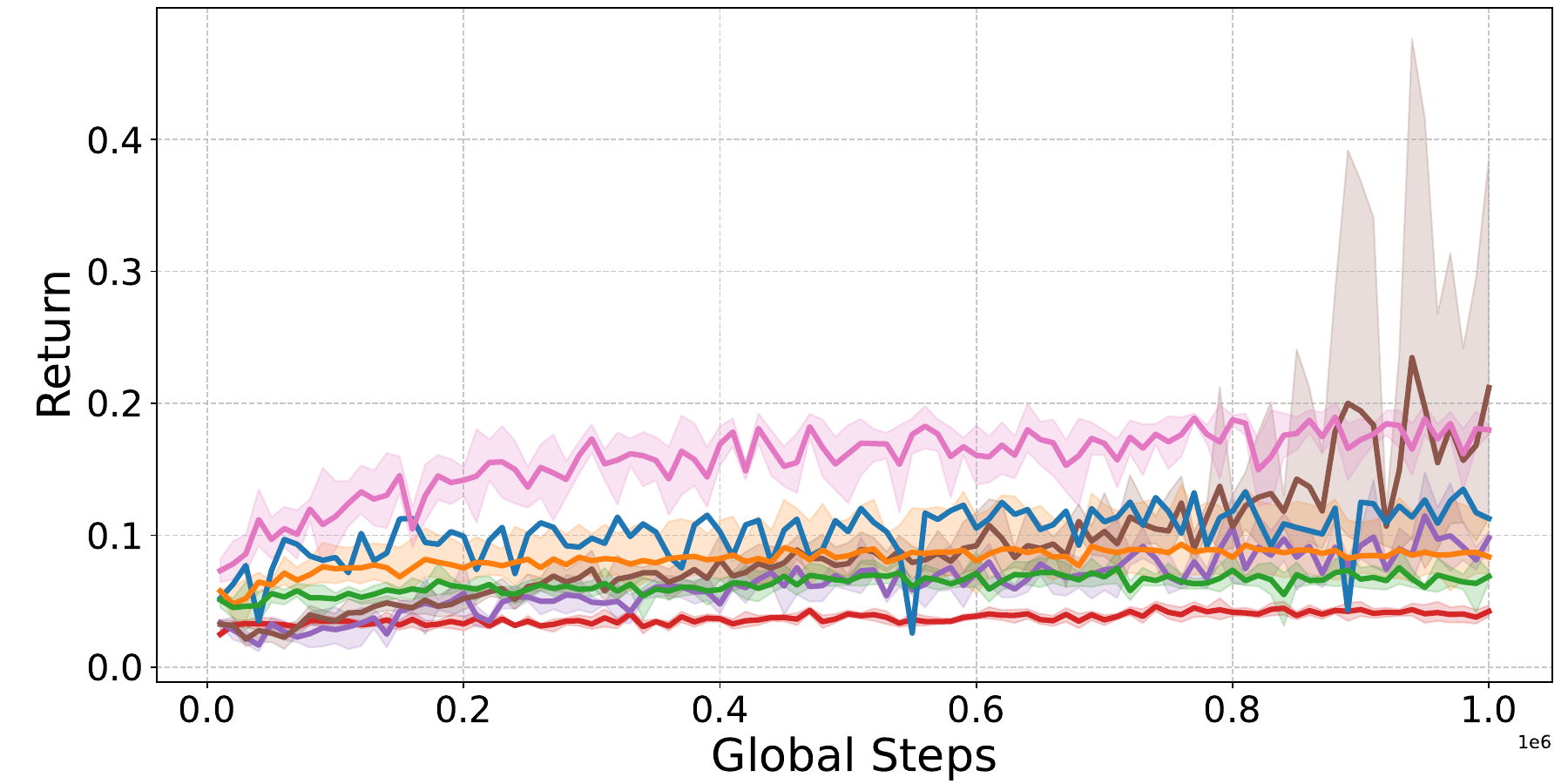}}
        \subfigure[Walker2d-v2 (64 Delays)]{\includegraphics[width=0.33\linewidth]{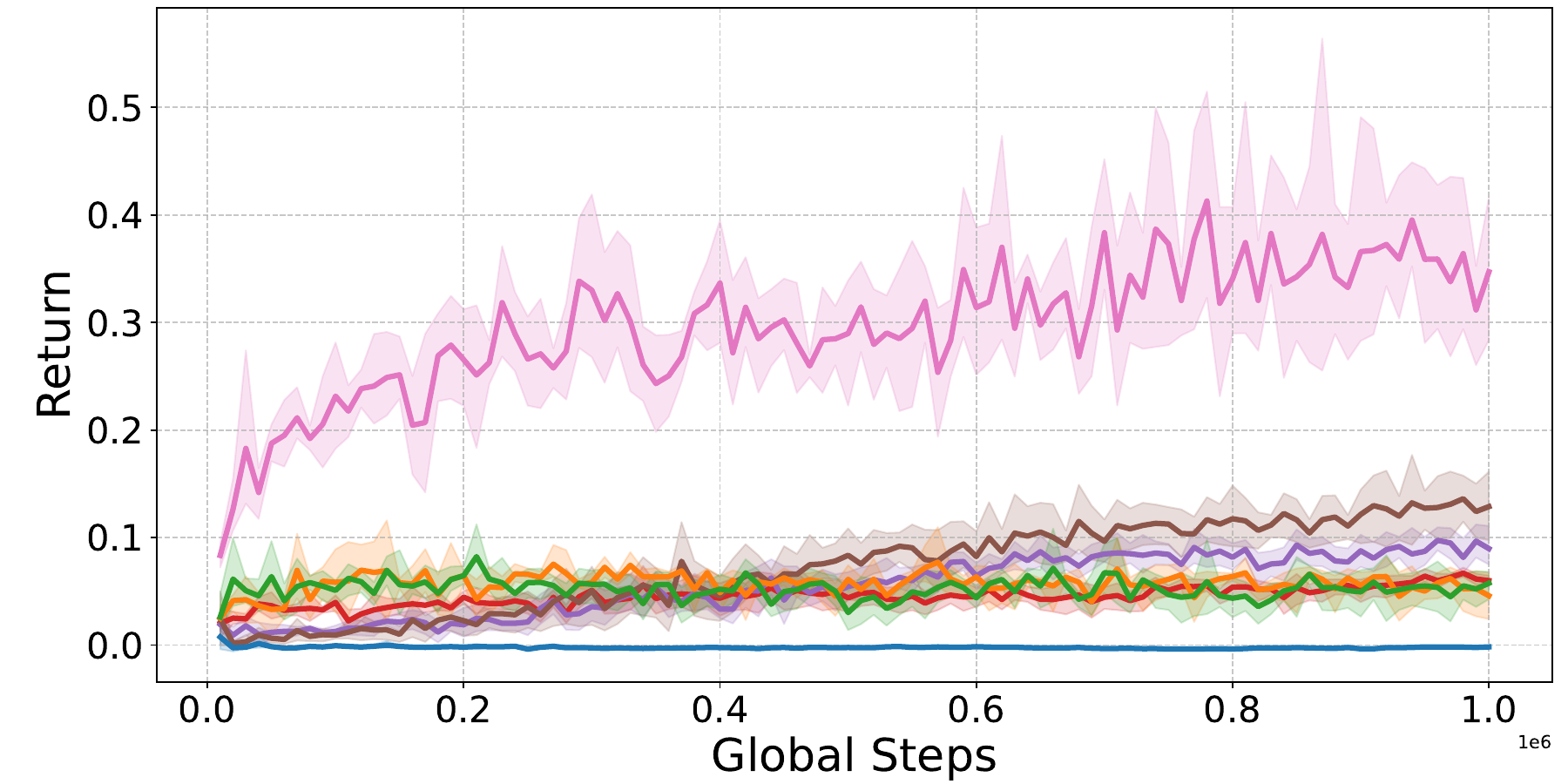}}
    }
    \centerline{
        \subfigure[HalfCheetah-v2 (128 Delays)]{\includegraphics[width=0.33\linewidth]{figs/deterministic_results/halfcheetah_D128.pdf}}
        \subfigure[Hopper-v2 (128 Delays)]{\includegraphics[width=0.33\linewidth]{figs/deterministic_results/hopper_D128.pdf}}
        \subfigure[Walker2d-v2 (128 Delays)]{\includegraphics[width=0.33\linewidth]{figs/deterministic_results/walker2d_D128.pdf}}
    }
    \includegraphics[width=1.\linewidth]{figs/deterministic_results/legend.pdf}
    \caption{Learning Curves on MuJoCo with Deterministic Delays.\label{appendix:fig_deterministic}}
\end{figure}

\begin{figure}[h]
    \centering
    \centerline{
        \subfigure[HalfCheetah-v2 ($U(1, 8)$ Delays)]{\includegraphics[width=0.33\linewidth]{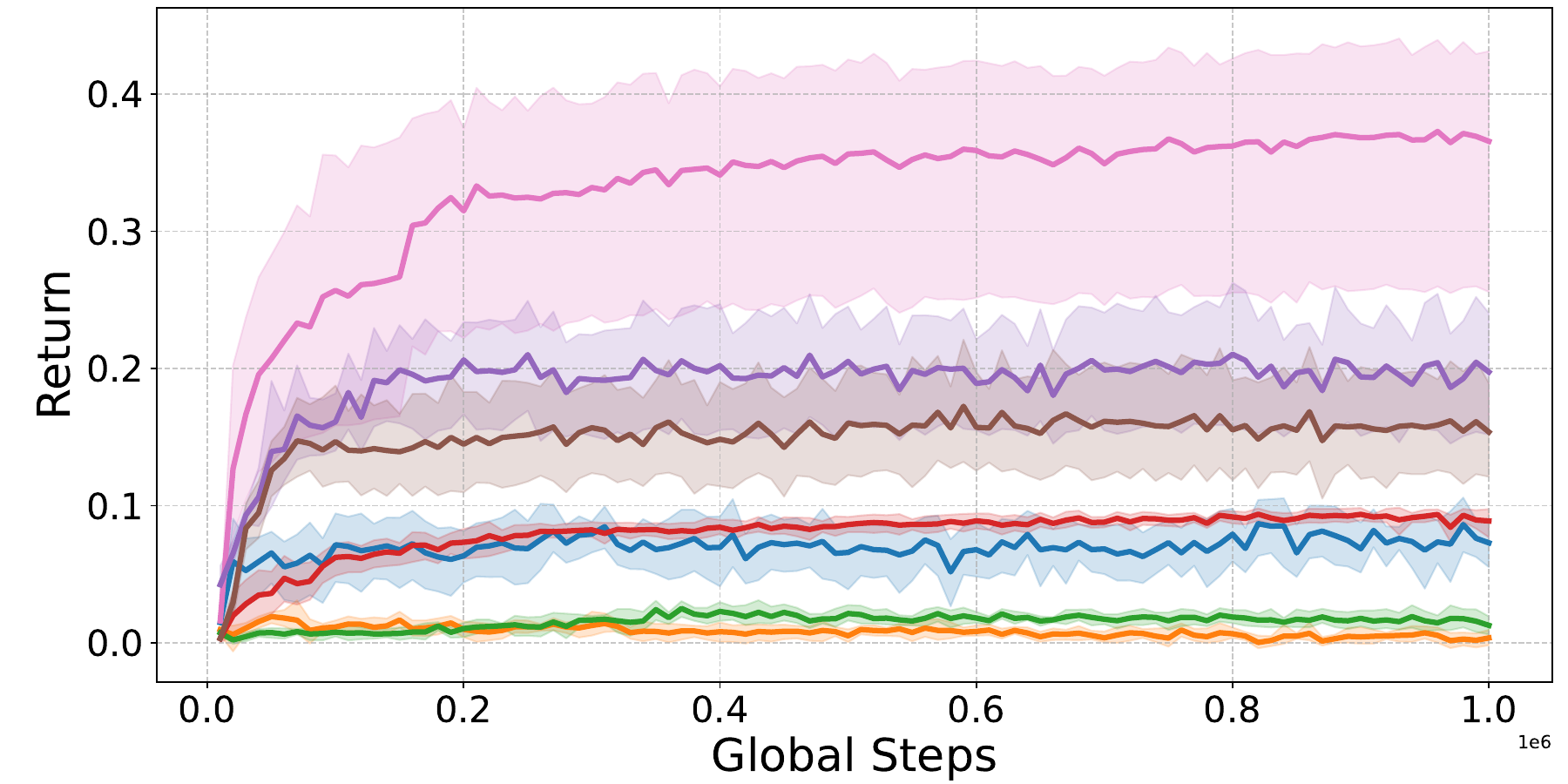}}
        \subfigure[Hopper-v2 ($U(1, 8)$ Delays)]{\includegraphics[width=0.33\linewidth]{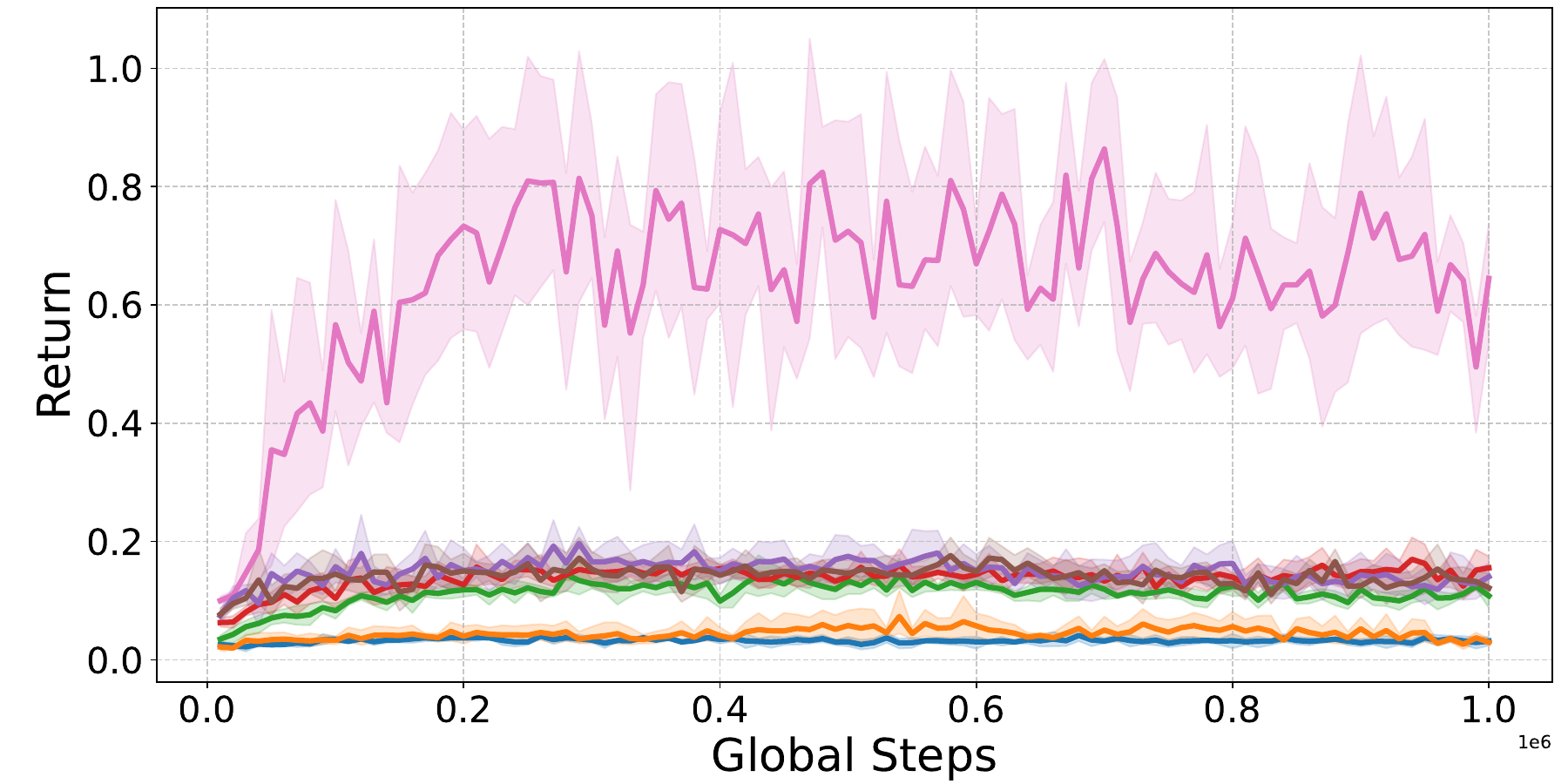}}
        \subfigure[Walker2d-v2 ($U(1, 8)$ Delays)]{\includegraphics[width=0.33\linewidth]{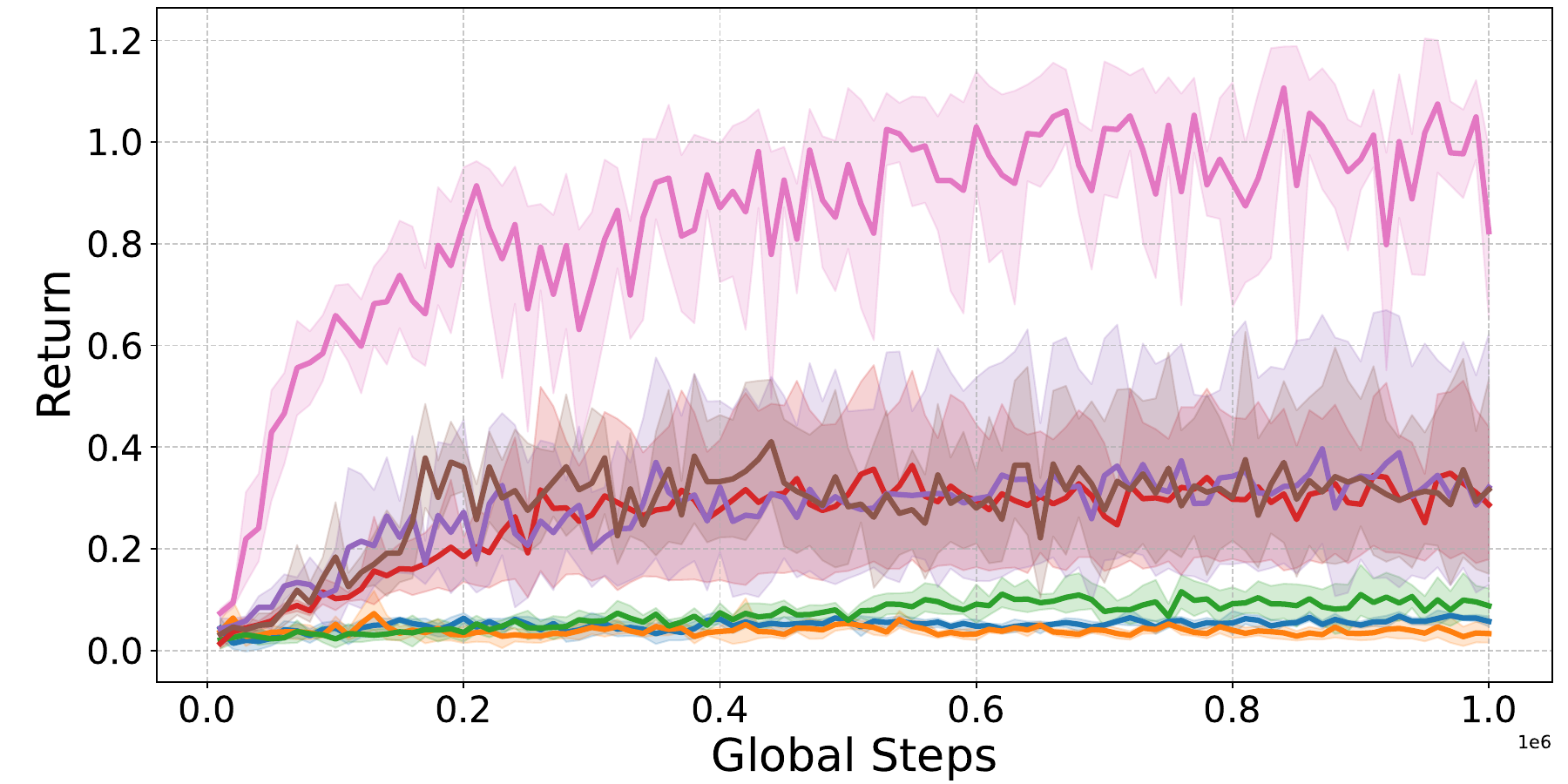}}
    }
    \centerline{
        \subfigure[HalfCheetah-v2 ($U(1, 16)$ Delays)]{\includegraphics[width=0.33\linewidth]{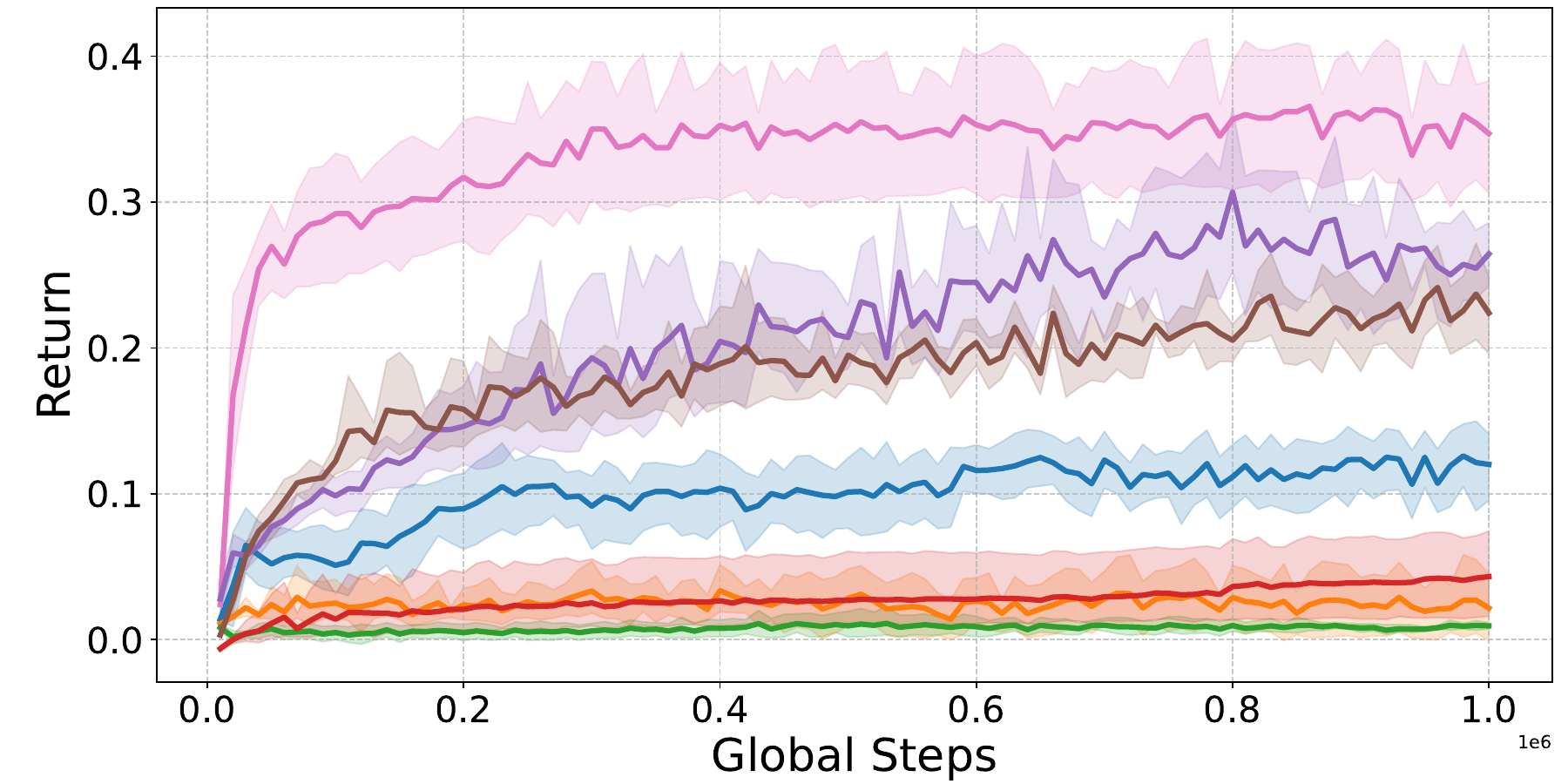}}
        \subfigure[Hopper-v2 ($U(1, 16)$ Delays)]{\includegraphics[width=0.33\linewidth]{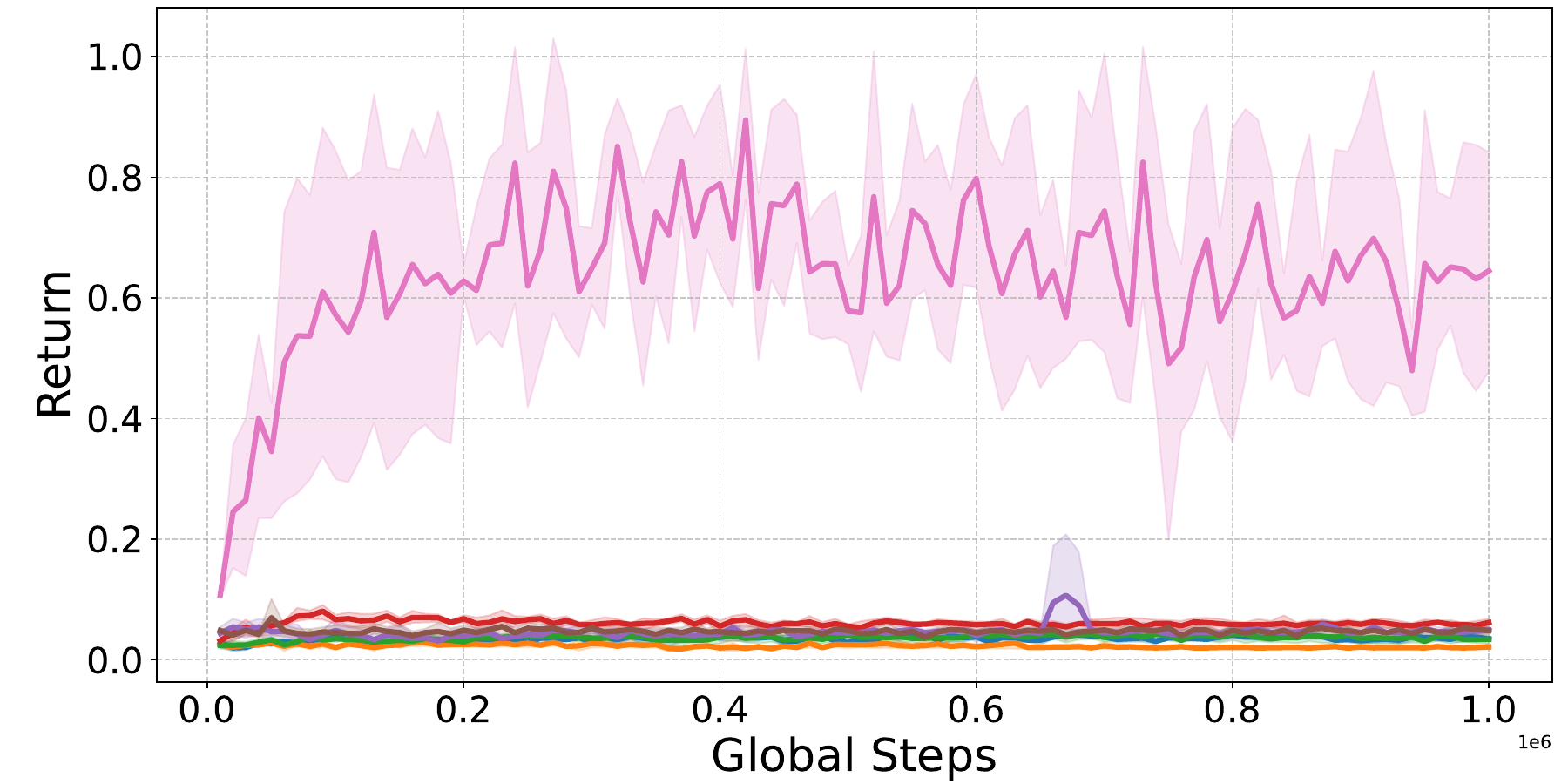}}
        \subfigure[Walker2d-v2 ($U(1, 16)$ Delays)]{\includegraphics[width=0.33\linewidth]{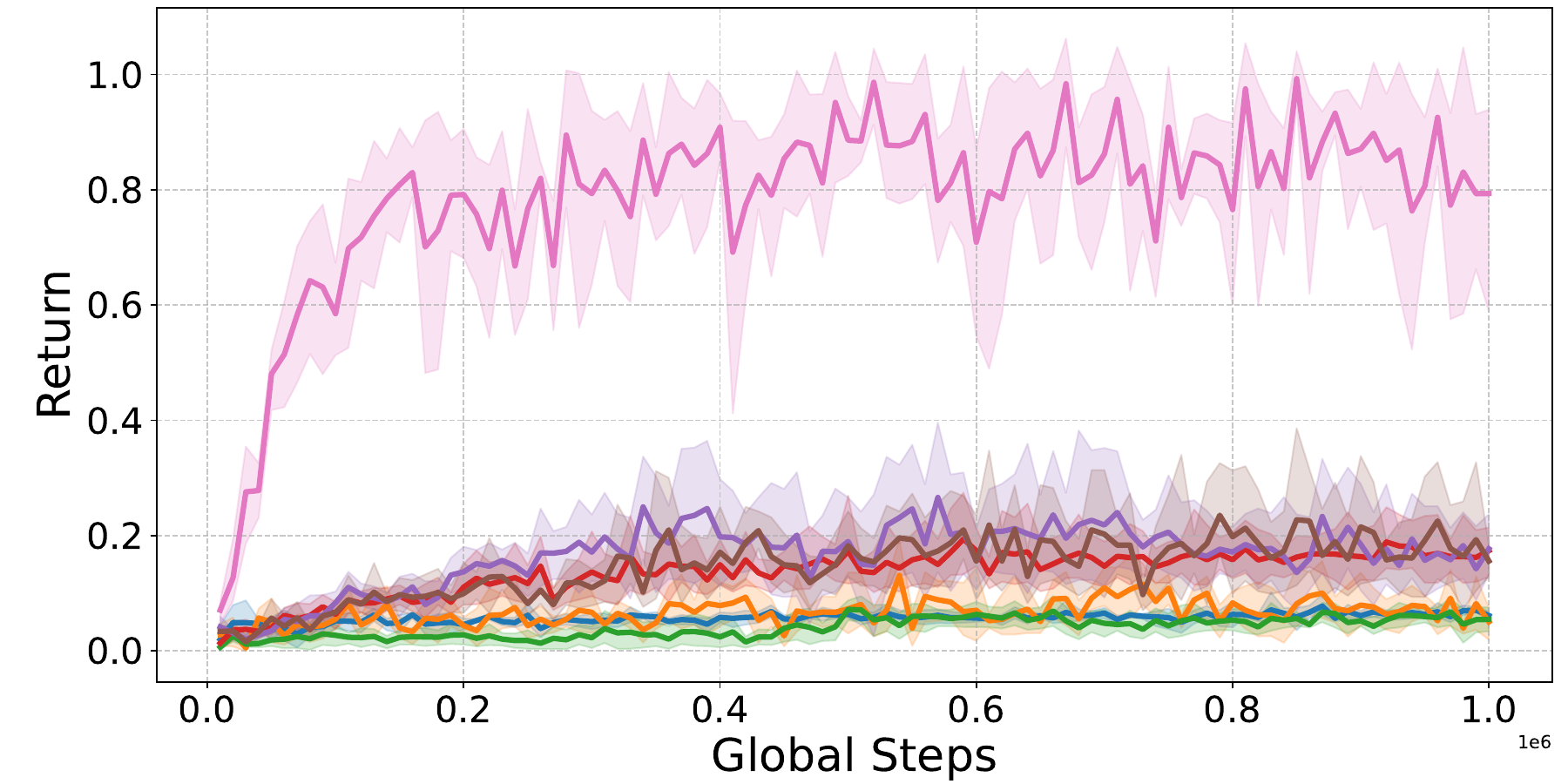}}
    }
    \centerline{
        \subfigure[HalfCheetah-v2 ($U(1, 32)$ Delays)]{\includegraphics[width=0.33\linewidth]{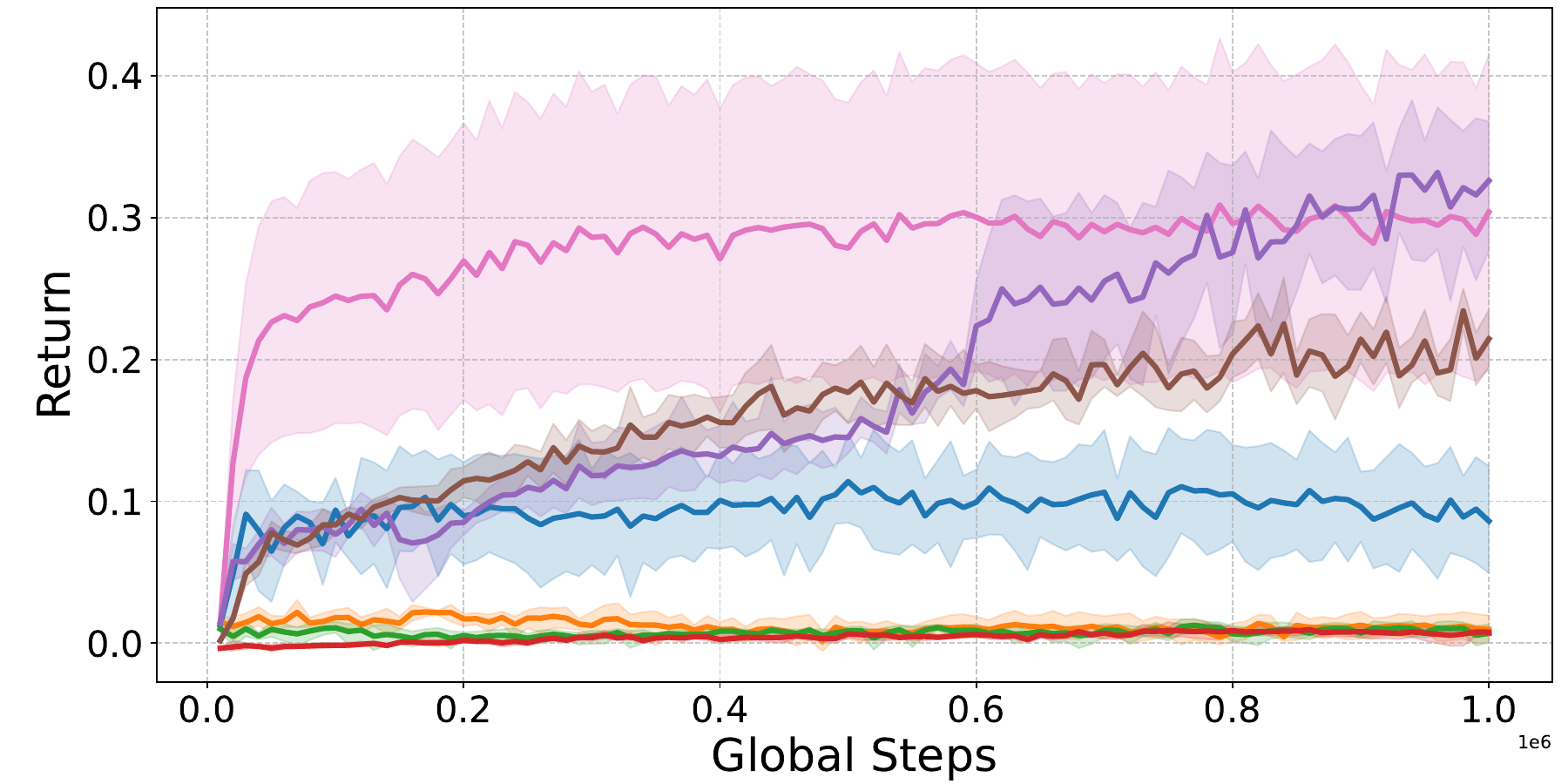}}
        \subfigure[Hopper-v2 ($U(1, 32)$ Delays)]{\includegraphics[width=0.33\linewidth]{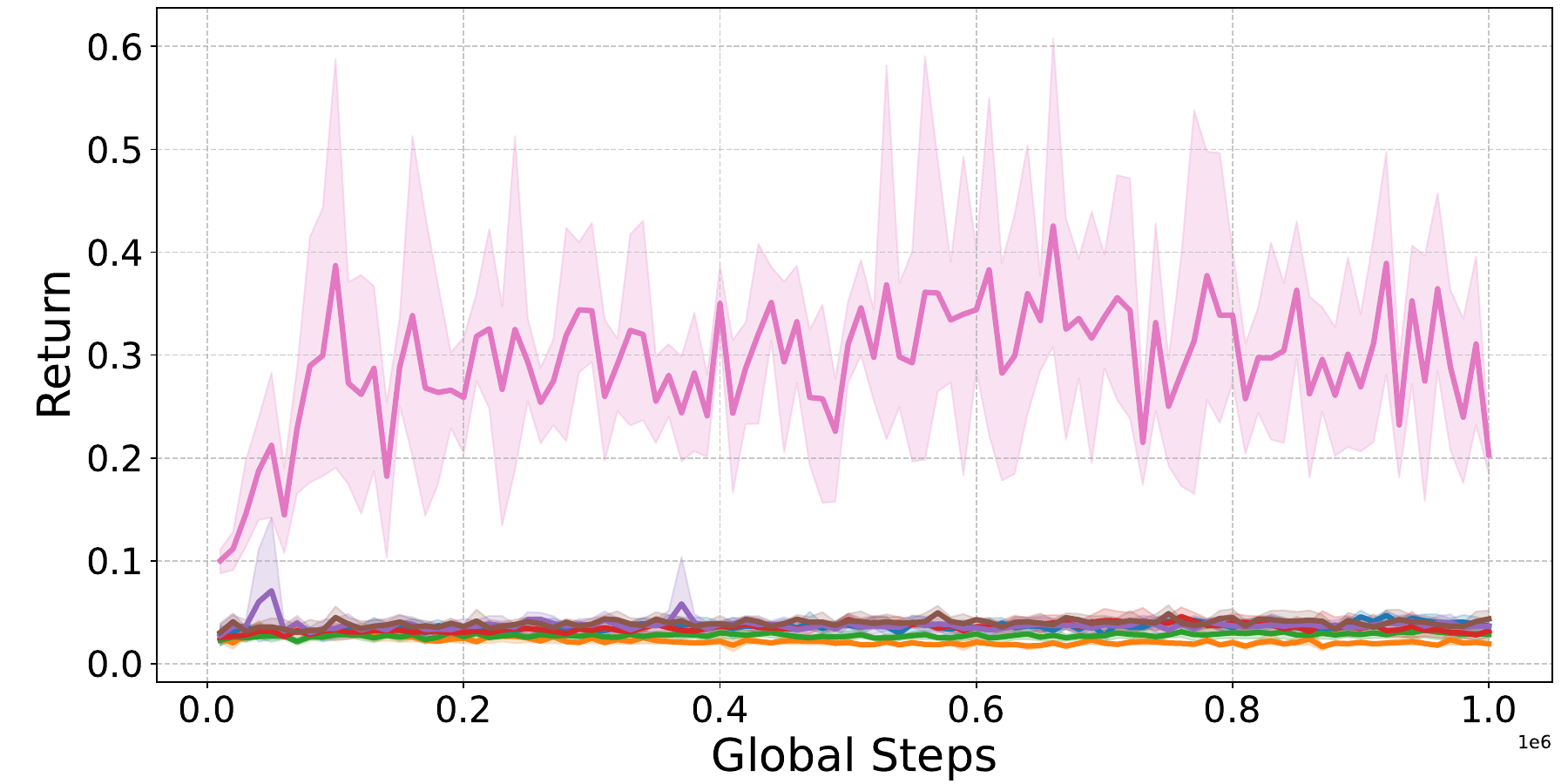}}
        \subfigure[Walker2d-v2 ($U(1, 32)$ Delays)]{\includegraphics[width=0.33\linewidth]{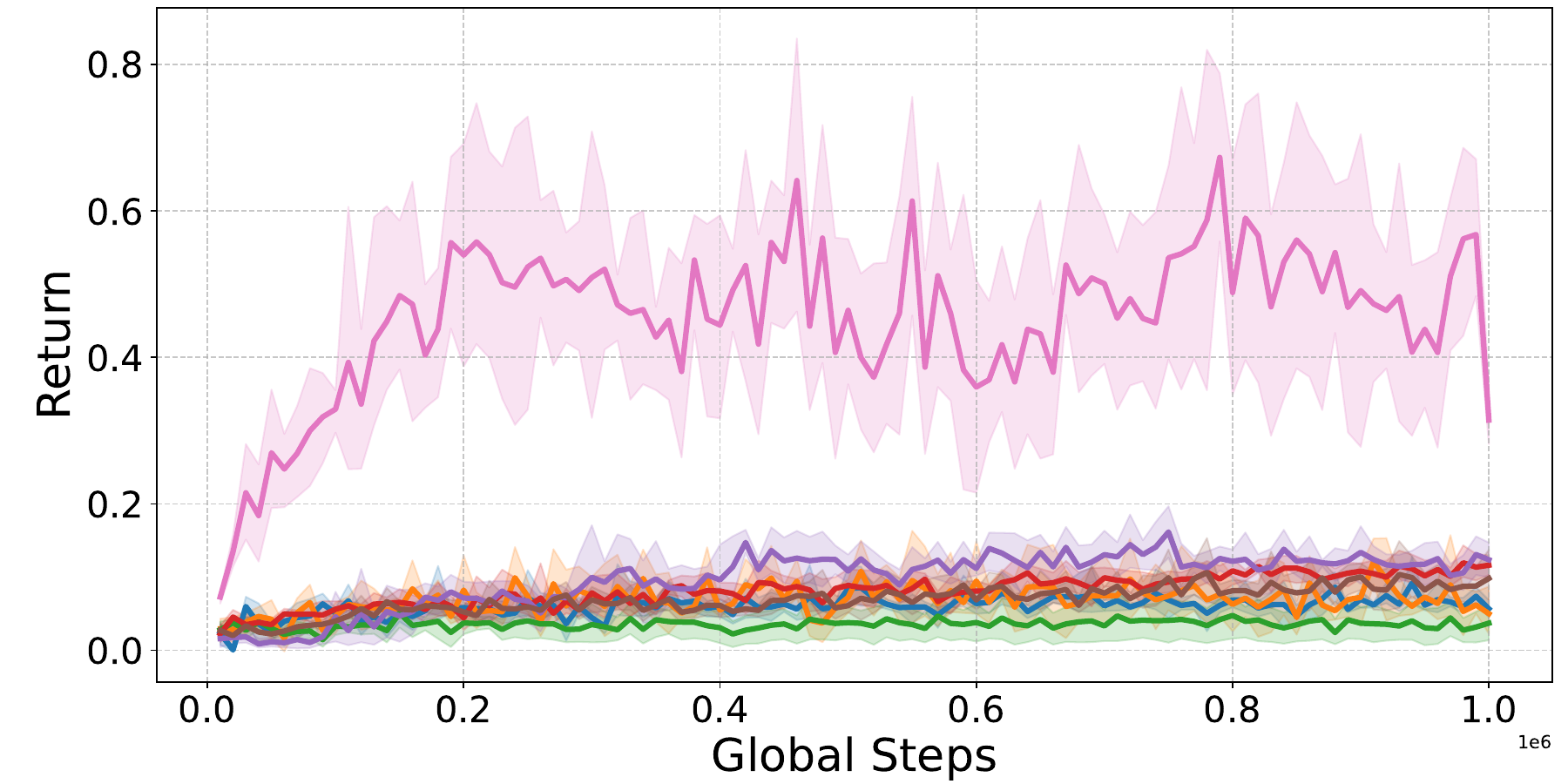}}
    }
    \centerline{
        \subfigure[HalfCheetah-v2 ($U(64)$ Delays)]{\includegraphics[width=0.33\linewidth]{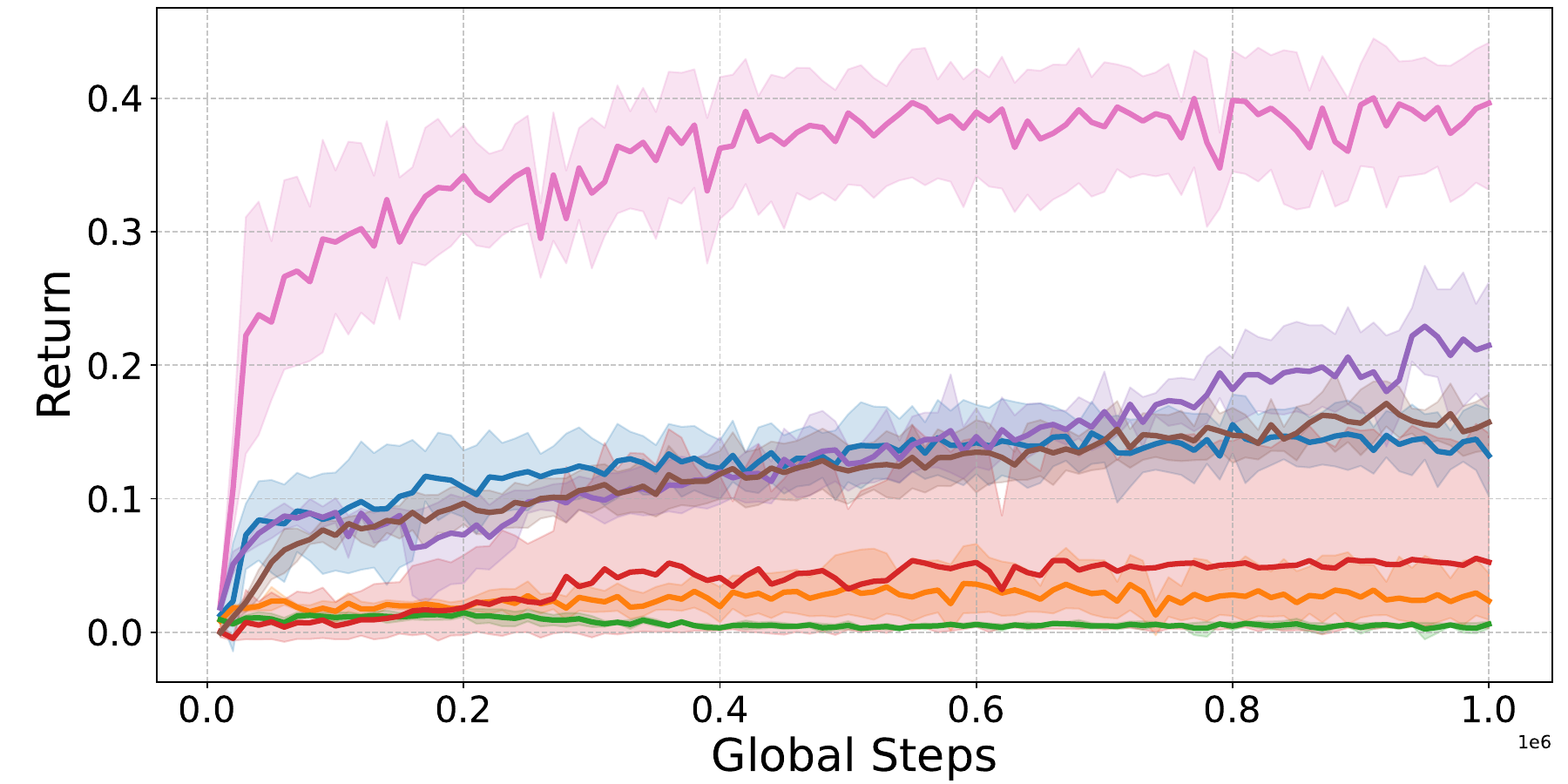}}
        \subfigure[Hopper-v2 ($U(1, 64)$ Delays)]{\includegraphics[width=0.33\linewidth]{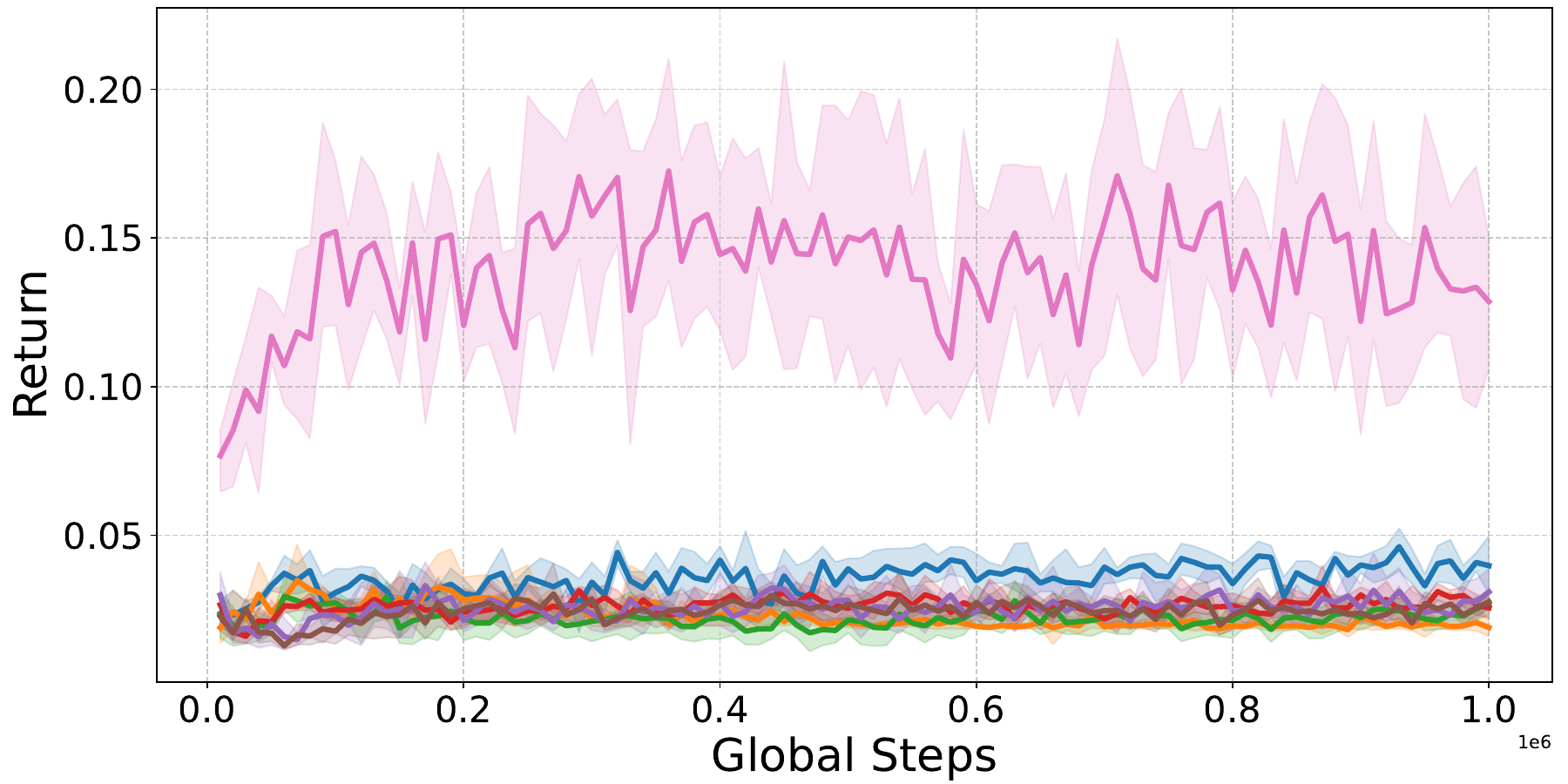}}
        \subfigure[Walker2d-v2 ($U(1, 64)$ Delays)]{\includegraphics[width=0.33\linewidth]{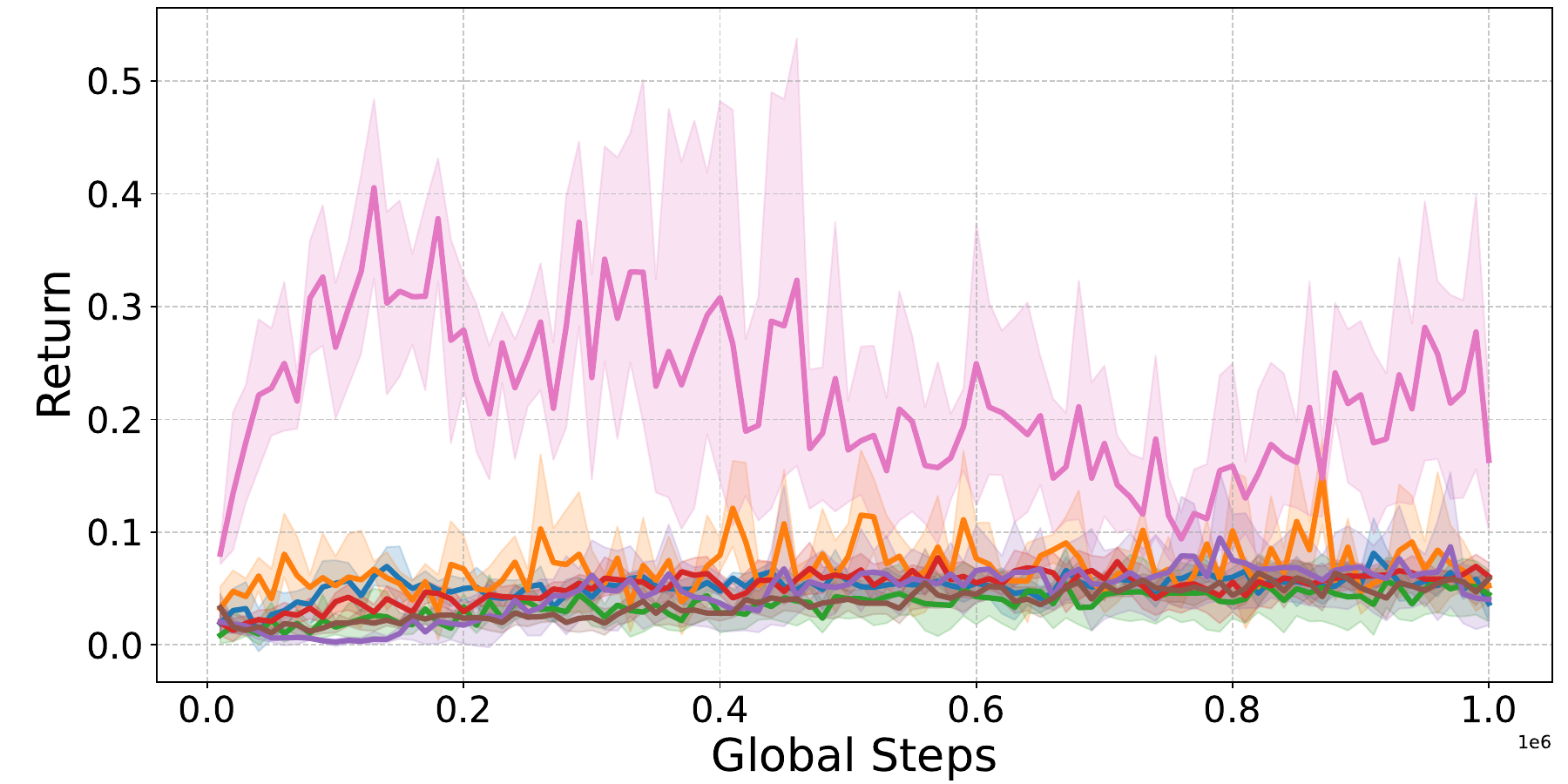}}
    }
    \centerline{
        \subfigure[HalfCheetah-v2 ($U(1, 128)$ Delays)]{\includegraphics[width=0.33\linewidth]{figs/stochastic_results/halfcheetah_D128.pdf}}
        \subfigure[Hopper-v2 ($U(1, 128)$ Delays)]{\includegraphics[width=0.33\linewidth]{figs/stochastic_results/hopper_D128.pdf}}
        \subfigure[Walker2d-v2 ($U(1, 128)$ Delays)]{\includegraphics[width=0.33\linewidth]{figs/stochastic_results/walker2d_D128.pdf}}
    }
    \includegraphics[width=1.\linewidth]{figs/deterministic_results/legend.pdf}
    \caption{Learning Curves on MuJoCo with Stochastic Delays.\label{appendix:fig_stochastic}}
\end{figure}

\clearpage
\section{Belief Qualitative Comparison}
\label{appendix:sec:belief_qualitative_comparison}
We report the qualitative comparison of the beliefs on HalfCheetah-v2, Hopper-v2, and Walker2d-v2 in \Cref{appendix:fig:qualitative_halfcheetah}, \Cref{appendix:fig:qualitative_hopper}, and \Cref{appendix:fig:qualitative_walker2d}, respectively.

\begin{figure}[h]
    \centering
    \centerline{
        \subfigure[\scriptsize Truth ($8$ Delays)]{\includegraphics[width=0.18\linewidth]{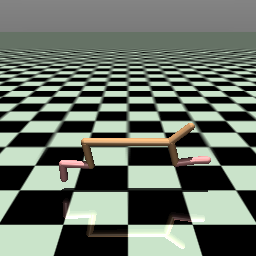}}
        \subfigure[\scriptsize DATS ($8$ Delays)]{\includegraphics[width=0.18\linewidth]{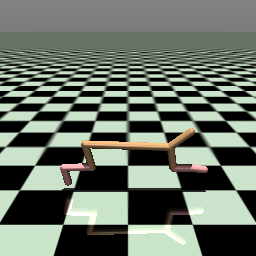}}
        \subfigure[\scriptsize D-Dreamer ($8$ Delays)]{\includegraphics[width=0.18\linewidth]{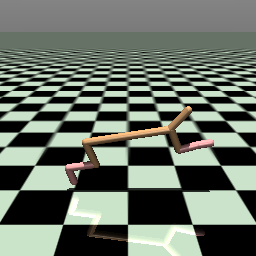}}
        \subfigure[\scriptsize D-SAC ($8$ Delays)]{\includegraphics[width=0.18\linewidth]{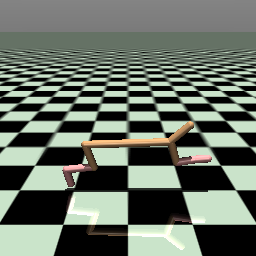}}
        \subfigure[\scriptsize DFBT ($8$ Delays)]{\includegraphics[width=0.18\linewidth]{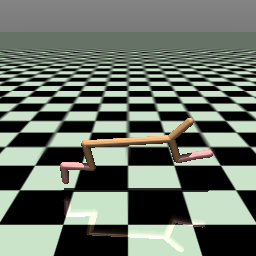}}
    }
    \centerline{
        \subfigure[\scriptsize Truth ($16$ Delays)]{\includegraphics[width=0.18\linewidth]{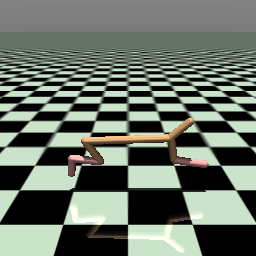}}
        \subfigure[\scriptsize DATS ($16$ Delays)]{\includegraphics[width=0.18\linewidth]{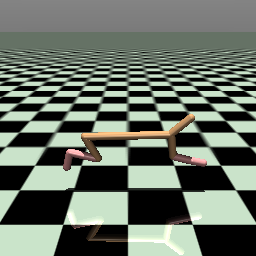}}
        \subfigure[\scriptsize D-Dreamer ($16$ Delays)]{\includegraphics[width=0.18\linewidth]{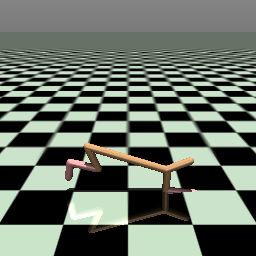}}
        \subfigure[\scriptsize D-SAC ($16$ Delays)]{\includegraphics[width=0.18\linewidth]{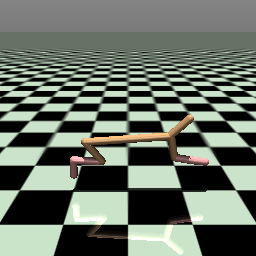}}
        \subfigure[\scriptsize DFBT ($16$ Delays)]{\includegraphics[width=0.18\linewidth]{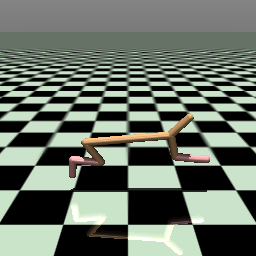}}
    }
    \centerline{
        \subfigure[\scriptsize Truth ($32$ Delays)]{\includegraphics[width=0.18\linewidth]{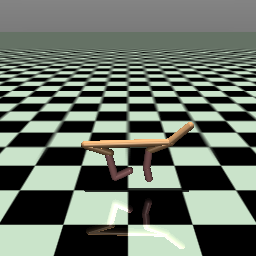}}
        \subfigure[\scriptsize DATS ($32$ Delays)]{\includegraphics[width=0.18\linewidth]{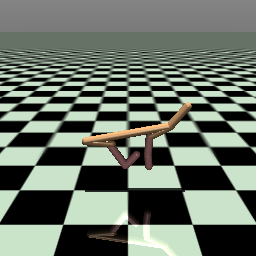}}
        \subfigure[\scriptsize D-Dreamer ($32$ Delays)]{\includegraphics[width=0.18\linewidth]{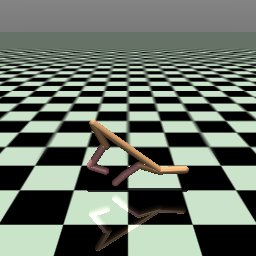}}
        \subfigure[\scriptsize D-SAC ($32$ Delays)]{\includegraphics[width=0.18\linewidth]{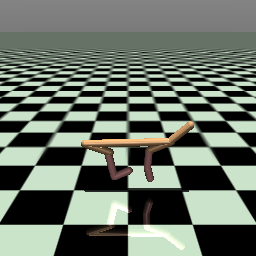}}
        \subfigure[\scriptsize DFBT ($32$ Delays)]{\includegraphics[width=0.18\linewidth]{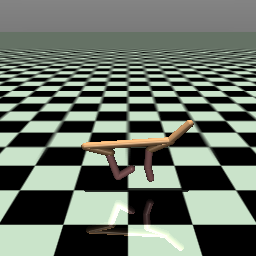}}
    }
    \centerline{
        \subfigure[\scriptsize Truth ($64$ Delays)]{\includegraphics[width=0.18\linewidth]{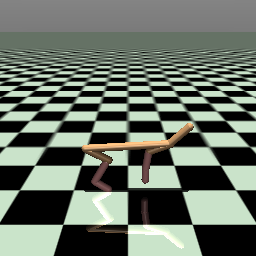}}
        \subfigure[\scriptsize DATS ($64$ Delays)]{\includegraphics[width=0.18\linewidth]{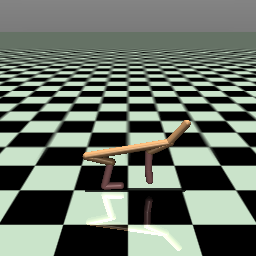}}
        \subfigure[\scriptsize D-Dreamer ($64$ Delays)]{\includegraphics[width=0.18\linewidth]{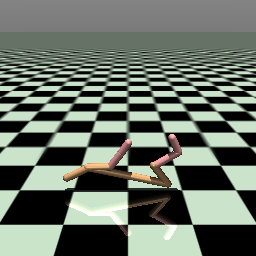}}
        \subfigure[\scriptsize D-SAC ($64$ Delays)]{\includegraphics[width=0.18\linewidth]{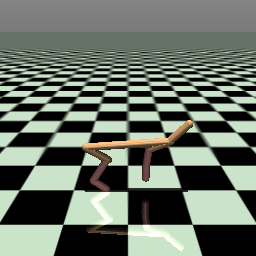}}
        \subfigure[\scriptsize DFBT ($64$ Delays)]{\includegraphics[width=0.18\linewidth]{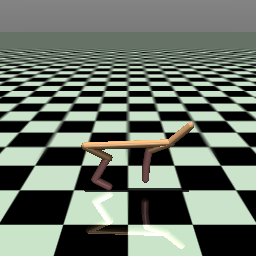}}
    }
    \centerline{
        \subfigure[\scriptsize Truth ($128$ Delays)]{\includegraphics[width=0.18\linewidth]{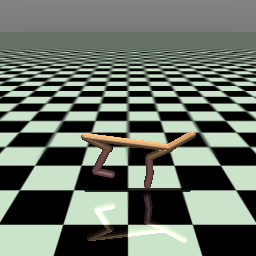}}
        \subfigure[\scriptsize DATS ($128$ Delays)]{\includegraphics[width=0.18\linewidth]{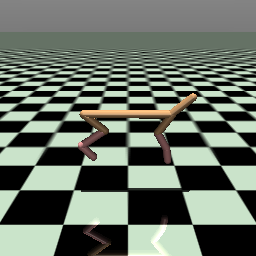}}
        \subfigure[\scriptsize D-Dreamer ($128$ Delays)]{\includegraphics[width=0.18\linewidth]{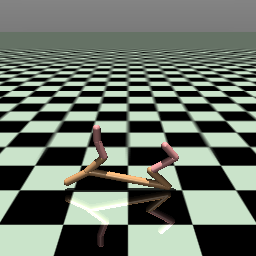}}
        \subfigure[\scriptsize D-SAC ($128$ Delays)]{\includegraphics[width=0.18\linewidth]{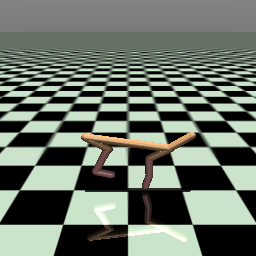}}
        \subfigure[\scriptsize DFBT ($128$ Delays)]{\includegraphics[width=0.18\linewidth]{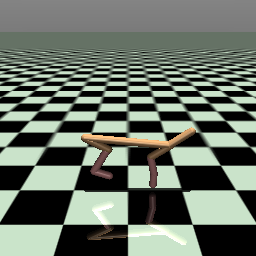}}
    }
    \caption{Belief qualitative comparison on HalfCheetah-v2 with different delays.\label{appendix:fig:qualitative_halfcheetah}}
\end{figure}

\begin{figure}[h]
    \centering
    \centerline{
        \subfigure[\scriptsize Truth ($8$ Delays)]{\includegraphics[width=0.18\linewidth]{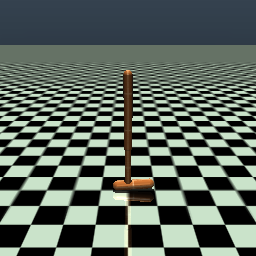}}
        \subfigure[\scriptsize DATS ($8$ Delays)]{\includegraphics[width=0.18\linewidth]{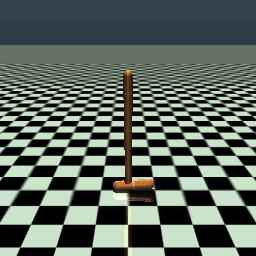}}
        \subfigure[\scriptsize D-Dreamer ($8$ Delays)]{\includegraphics[width=0.18\linewidth]{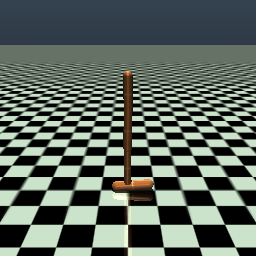}}
        \subfigure[\scriptsize D-SAC ($8$ Delays)]{\includegraphics[width=0.18\linewidth]{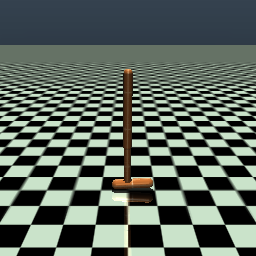}}
        \subfigure[\scriptsize DFBT ($8$ Delays)]{\includegraphics[width=0.18\linewidth]{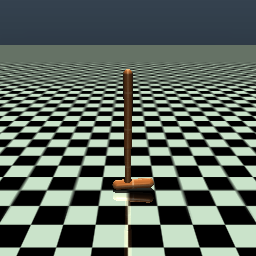}}
    }
    \centerline{
        \subfigure[\scriptsize Truth ($16$ Delays)]{\includegraphics[width=0.18\linewidth]{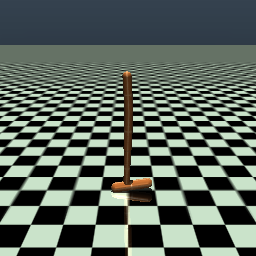}}
        \subfigure[\scriptsize DATS ($16$ Delays)]{\includegraphics[width=0.18\linewidth]{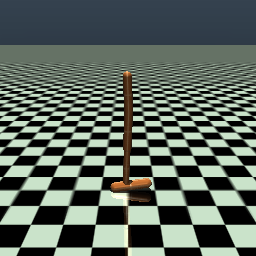}}
        \subfigure[\scriptsize D-Dreamer ($16$ Delays)]{\includegraphics[width=0.18\linewidth]{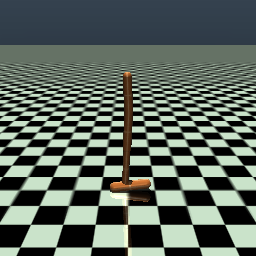}}
        \subfigure[\scriptsize D-SAC ($16$ Delays)]{\includegraphics[width=0.18\linewidth]{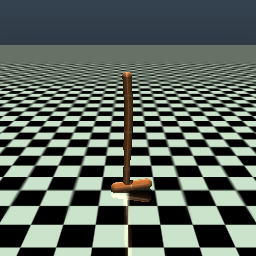}}
        \subfigure[\scriptsize DFBT ($16$ Delays)]{\includegraphics[width=0.18\linewidth]{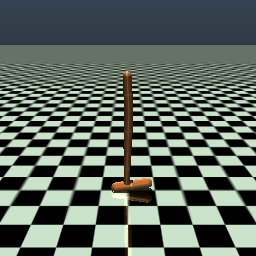}}
    }
    \centerline{
        \subfigure[\scriptsize Truth ($32$ Delays)]{\includegraphics[width=0.18\linewidth]{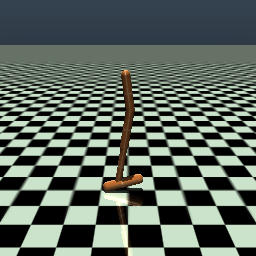}}
        \subfigure[\scriptsize DATS ($32$ Delays)]{\includegraphics[width=0.18\linewidth]{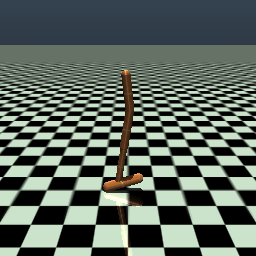}}
        \subfigure[\scriptsize D-Dreamer ($32$ Delays)]{\includegraphics[width=0.18\linewidth]{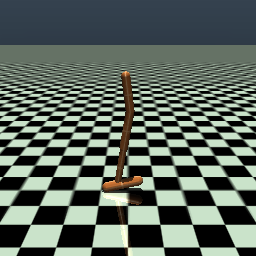}}
        \subfigure[\scriptsize D-SAC ($32$ Delays)]{\includegraphics[width=0.18\linewidth]{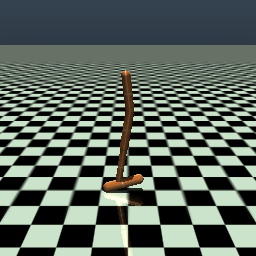}}
        \subfigure[\scriptsize DFBT ($32$ Delays)]{\includegraphics[width=0.18\linewidth]{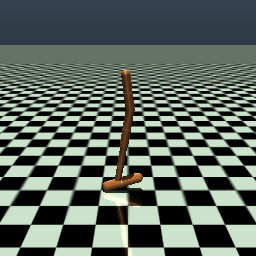}}
    }
    \centerline{
        \subfigure[\scriptsize Truth ($64$ Delays)]{\includegraphics[width=0.18\linewidth]{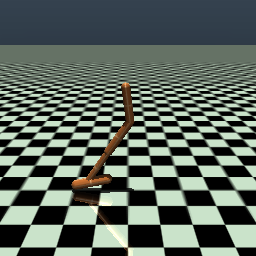}}
        \subfigure[\scriptsize DATS ($64$ Delays)]{\includegraphics[width=0.18\linewidth]{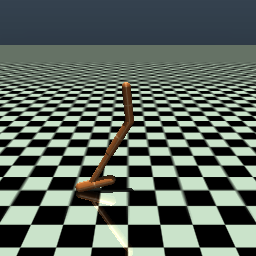}}
        \subfigure[\scriptsize D-Dreamer ($64$ Delays)]{\includegraphics[width=0.18\linewidth]{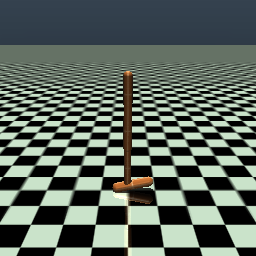}}
        \subfigure[\scriptsize D-SAC ($64$ Delays)]{\includegraphics[width=0.18\linewidth]{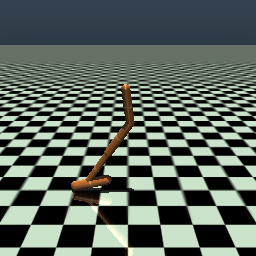}}
        \subfigure[\scriptsize DFBT ($64$ Delays)]{\includegraphics[width=0.18\linewidth]{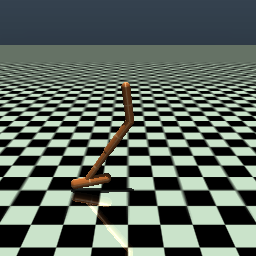}}
    }
    \centerline{
        \subfigure[\scriptsize Truth ($128$ Delays)]{\includegraphics[width=0.18\linewidth]{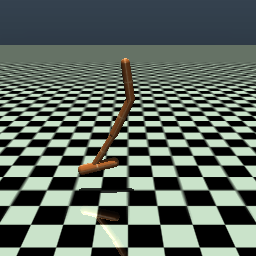}}
        \subfigure[\scriptsize DATS ($128$ Delays)]{\includegraphics[width=0.18\linewidth]{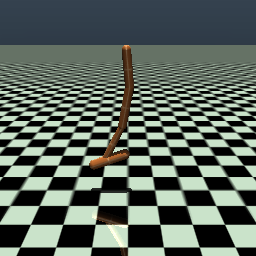}}
        \subfigure[\scriptsize D-Dreamer ($128$ Delays)]{\includegraphics[width=0.18\linewidth]{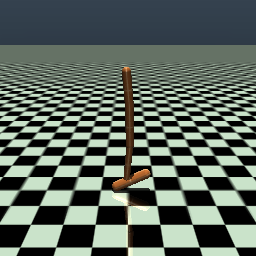}}
        \subfigure[\scriptsize D-SAC ($128$ Delays)]{\includegraphics[width=0.18\linewidth]{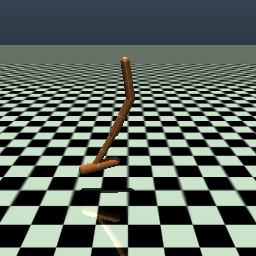}}
        \subfigure[\scriptsize DFBT ($128$ Delays)]{\includegraphics[width=0.18\linewidth]{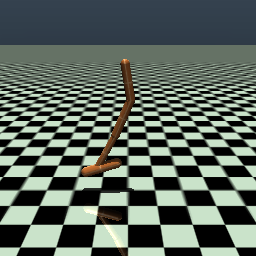}}
    }
    \caption{Belief qualitative comparison on Hopper-v2 with different delays.\label{appendix:fig:qualitative_hopper}}
\end{figure}

\begin{figure}[h]
    \centering
    \centerline{
        \subfigure[\scriptsize Truth ($8$ Delays)]{\includegraphics[width=0.18\linewidth]{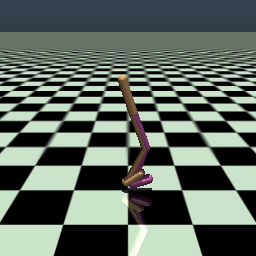}}
        \subfigure[\scriptsize DATS ($8$ Delays)]{\includegraphics[width=0.18\linewidth]{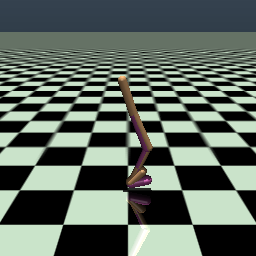}}
        \subfigure[\scriptsize D-Dreamer ($8$ Delays)]{\includegraphics[width=0.18\linewidth]{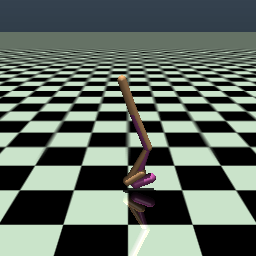}}
        \subfigure[\scriptsize D-SAC ($8$ Delays)]{\includegraphics[width=0.18\linewidth]{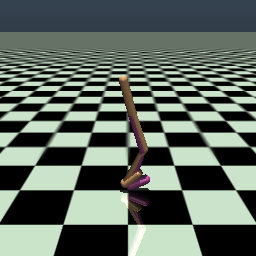}}
        \subfigure[\scriptsize DFBT ($8$ Delays)]{\includegraphics[width=0.18\linewidth]{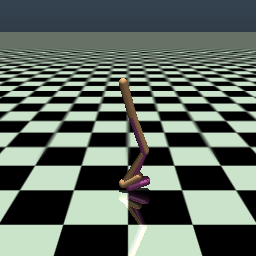}}
    }
    \centerline{
        \subfigure[\scriptsize Truth ($16$ Delays)]{\includegraphics[width=0.18\linewidth]{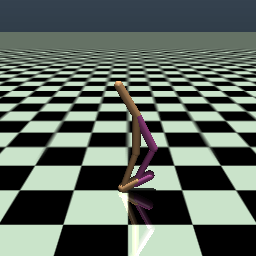}}
        \subfigure[\scriptsize DATS ($16$ Delays)]{\includegraphics[width=0.18\linewidth]{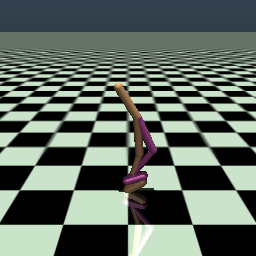}}
        \subfigure[\scriptsize D-Dreamer ($16$ Delays)]{\includegraphics[width=0.18\linewidth]{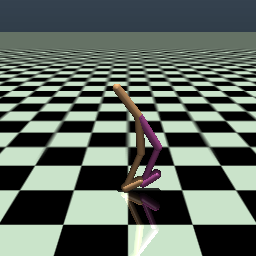}}
        \subfigure[\scriptsize D-SAC ($16$ Delays)]{\includegraphics[width=0.18\linewidth]{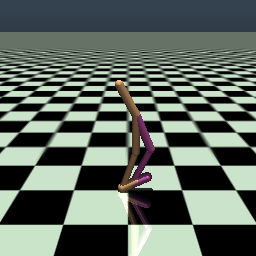}}
        \subfigure[\scriptsize DFBT ($16$ Delays)]{\includegraphics[width=0.18\linewidth]{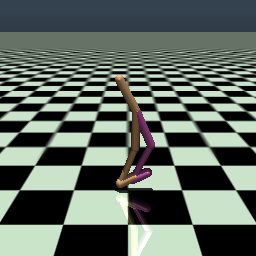}}
    }
    \centerline{
        \subfigure[\scriptsize Truth ($32$ Delays)]{\includegraphics[width=0.18\linewidth]{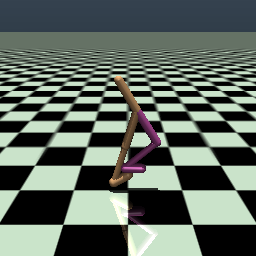}}
        \subfigure[\scriptsize DATS ($32$ Delays)]{\includegraphics[width=0.18\linewidth]{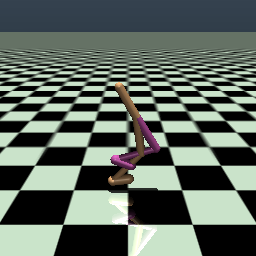}}
        \subfigure[\scriptsize D-Dreamer ($32$ Delays)]{\includegraphics[width=0.18\linewidth]{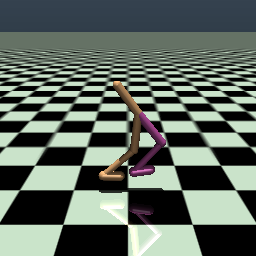}}
        \subfigure[\scriptsize D-SAC ($32$ Delays)]{\includegraphics[width=0.18\linewidth]{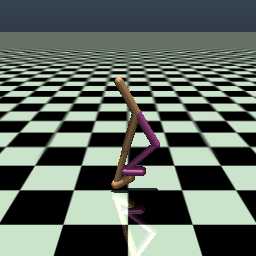}}
        \subfigure[\scriptsize DFBT ($32$ Delays)]{\includegraphics[width=0.18\linewidth]{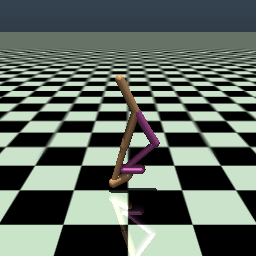}}
    }
    \centerline{
        \subfigure[\scriptsize Truth ($64$ Delays)]{\includegraphics[width=0.18\linewidth]{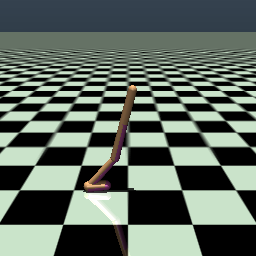}}
        \subfigure[\scriptsize DATS ($64$ Delays)]{\includegraphics[width=0.18\linewidth]{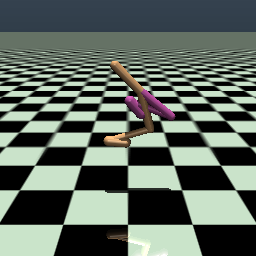}}
        \subfigure[\scriptsize D-Dreamer ($64$ Delays)]{\includegraphics[width=0.18\linewidth]{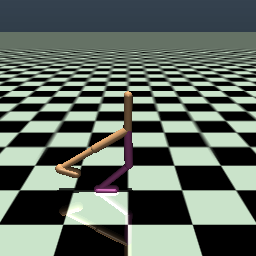}}
        \subfigure[\scriptsize D-SAC ($64$ Delays)]{\includegraphics[width=0.18\linewidth]{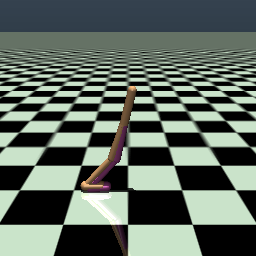}}
        \subfigure[\scriptsize DFBT ($64$ Delays)]{\includegraphics[width=0.18\linewidth]{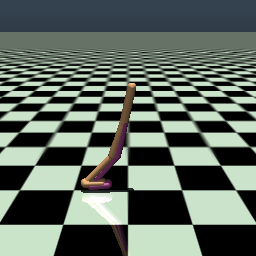}}
    }
    \centerline{
        \subfigure[\scriptsize Truth ($128$ Delays)]{\includegraphics[width=0.18\linewidth]{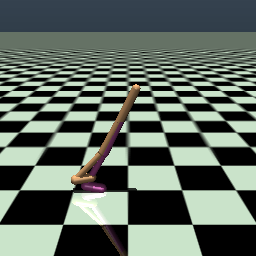}}
        \subfigure[\scriptsize DATS ($128$ Delays)]{\includegraphics[width=0.18\linewidth]{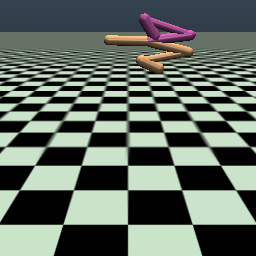}}
        \subfigure[\scriptsize D-Dreamer ($128$ Delays)]{\includegraphics[width=0.18\linewidth]{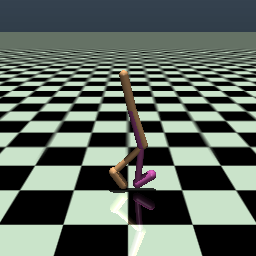}}
        \subfigure[\scriptsize D-SAC ($128$ Delays)]{\includegraphics[width=0.18\linewidth]{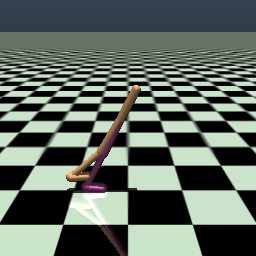}}
        \subfigure[\scriptsize DFBT ($128$ Delays)]{\includegraphics[width=0.18\linewidth]{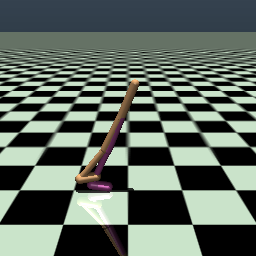}}
    }
    \caption{Belief qualitative comparison on Walker2d-v2 with different delays.\label{appendix:fig:qualitative_walker2d}}
\end{figure}


\end{document}